\definecolor{codegreen}{rgb}{0,0.6,0}
\definecolor{codegray}{rgb}{0.5,0.5,0.5}
\definecolor{codepurple}{rgb}{0.58,0,0.82}
\definecolor{backcolour}{rgb}{0.95,0.95,0.92}
\newcommand{\smartparagraph}[1]{\noindent{\bf #1}\ }
\newcommand*{\defeq}{\mathrel{\vcenter{\baselineskip0.5ex \lineskiplimit0pt
                     \hbox{\scriptsize.}\hbox{\scriptsize.}}}%
                     =}
\newcommand{\R}{\mathbb{R}} %
\newcommand{\mbE}{\mathbb{E}} %
\newcommand{\cD}{{\mathcal{D}}}
\newcommand{\cF}{{\mathcal{F}}}
\newcommand{\cO}{{\mathcal{O}}}
\newcommand{\cS}{{\mathcal{S}}}
\newcommand{\dotprod}[1]{\left< #1\right>} %
\newcommand{\norm}[1]{\lVert#1\rVert}      %
 \newtheorem*{lemma*}{Lemma}
 \newtheorem*{proposition*}{Proposition}
 \newtheorem*{theorem*}{Theorem}
 \newtheorem*{corollary*}{Corollary}
 \newtheorem*{remark*}{Remark}
\newtheorem{assumption}{Assumption}
\newtheorem{lemma}{Lemma}
\newtheorem{theorem}{Theorem}
\theoremstyle{definition}
\theoremstyle{remark}
\newcommand{\scheme}{\textit{REFL}\xspace}
\definecolor{mlAc}{RGB}{255, 128, 33}
\definecolor{mlBc}{RGB}{83, 161, 69}
\definecolor{mlCc}{RGB}{255, 50, 10}
\newcommand*{\mlA}{\textcolor{mlAc}{$\blacksquare$}\xspace}
\newcommand*{\mlB}{\textcolor{mlBc}{$\blacksquare$}\xspace}
\newcommand*{\mlC}{\textcolor{mlCc}{$\blacksquare$}\xspace}
\newif\ifsubmission
\newcommand{\ahmed}[1]{}
\newcommand{\mcnote}[1]{}
\newcommand{\ahmed}[1]{\textit{\textcolor{red}{{Ahmed:}#1}}}
\newcommand{\mcnote}[1]{\textit{\textcolor{blue}{[marco]:#1}}}
\begin{document}

\title{REFL: Resource-Efficient Federated Learning}

\author{Ahmed M. Abdelmoniem}
\affiliation{\institution{Queen Mary University of London}}
\email{ahmed.sayed@qmul.ac.uk}
\authornote{Corresponding author, also with Assiut University, Egypt. Work done primarily while the author was with KAUST.}
\author{Atal Narayan Sahu}
\affiliation{\institution{KAUST}}
\author{Marco Canini}
\affiliation{\institution{KAUST}}
\author{Suhaib A. Fahmy}
\affiliation{\institution{KAUST}}

\begin{abstract}
Federated Learning (FL) enables distributed training by learners using local data, thereby enhancing privacy and reducing communication.
However, it presents numerous challenges relating to the heterogeneity of the data distribution, device capabilities, and participant availability as deployments scale, which can impact both model convergence and bias. Existing FL schemes use random participant selection to improve the fairness of the selection process; however, this can result in inefficient use of resources and lower quality training. In this work, we systematically address the question of resource efficiency in FL, showing the benefits of intelligent participant selection, and incorporation of updates from straggling participants. We demonstrate how these factors enable resource efficiency while also improving trained model quality.
\end{abstract}

\maketitle
\renewcommand{\shortauthors}{Ahmed M. Abdelmoniem, Atal N. Sahu, Marco Canini, and Suhaib A. Fahmy}

\section{Introduction}
\label{sec:intro}

Recently distributed machine learning (ML) deployments have sought to push computation towards data sources in an effort to enhance privacy and security~\cite{Bonawitz19,kairouz2019advances}. Training models using this approach is known as Federated Learning (FL). FL presents a variety of challenges due to the high heterogeneity of participating devices, ranging from powerful edge clusters and smartphones to low-resource IoT devices (e.g., surveillance cameras, sensors, etc.). These devices produce and store the application data used to train a shared ML model. FL is deployed by large service providers such as Apple, Google, and Facebook to train computer vision (CV) and natural language processing (NLP) models in applications such as image classification, object detection, and recommendation systems~\cite{tff,yang2018applied,FAI,opacus,applefl,googleVCFL,hartmann2019federated}. FL has also been deployed to train models on distributed medical imaging data~\cite{nvidiafl}, and smart camera images~\cite{cameraai}.%

The life-cycle of FL training is as follows. First, the FL operator builds the model architecture and determines hyper-parameters with a standalone dataset. The model's training is then conducted on participating learners for a number of centrally managed rounds until satisfactory model quality is obtained. The main challenge in FL is the heterogeneity in terms of computational capability and data distribution among a large number of learners which can impact the performance of training~\cite{Bonawitz19,kairouz2019advances}. 

Time-to-accuracy is a crucial performance metric and is the focus of much work in this area~\cite{kairouz2019advances,Li2020FedProx,yang2020heterogeneityaware,Safa-Wu2021,Oort-osdi21}. %
It depends on both the statistical efficiency and system efficiency of training.
The number of learners, minibatch size, local steps affect the former. It is common for these factors to be treated as hyper-parameters to be tuned for a particular FL job.
System efficiency is primarily regulated by the time to complete a training round, which  depends on which learners are selected and whether they become stragglers whose updates do not complete in time. It is common to configure a reporting deadline to cap the round duration, but if only an insufficient number of learners complete within this deadline, the entire round fails and is re-attempted from scratch.
Since a tight deadline can yield more failed rounds, this can be mitigated by overcommitting the number of selected learners in each round to increase the likelihood that a sufficient number will finish by the deadline. Failed rounds and overcommitted participants lead to wasted computation, which has mostly been ignored in previous FL approaches.
A focus on time has also resulted in schemes that are not robust to non-I.I.D. data distributions as they favor certain learner profiles~\cite{Li2021a}.
Finally, learners also have varying availability for training ~\cite{kairouz2019advances,Bonawitz19,Li2021a,yang2020heterogeneityaware}, which requires consideration when dealing with data heterogeneity 

All the above factors can lead to resource wastage---where learners perform training work that does not contribute to enhancing the model, whether due to updates that are ultimately discarded, or poor data distribution. We argue that this resource wastage deters users from participating in FL and makes the scaling of FL systems to larger deployments and more varied computational capabilities of learners problematic. We aim to optimize the design of FL systems for their resource-to-accuracy in a heterogeneous setting. This means the computational resources consumed to reach a target accuracy is reduced without a significant impact on time-to-accuracy. By considering heterogeneity at the heart of our design, we also intend to demonstrate improved robustness to realistic data distributions among learners.

Existing efforts aim to improve convergence speed (i.e., boosting model quality in fewer rounds)~\cite{Li2020FedProx,wang2019adaptive} or system efficiency (i.e., reducing round duration)~\cite{mcmahan2017,McMahan2018}, or selecting learners with high statistical and system utility~\cite{Oort-osdi21}. These approaches ignore the importance of maximizing the utilization of available resources while reducing the amount of wasted work. To address these problems, we introduce resource-efficient federated learning (\scheme), a practical scheme that maximizes FL systems' resource efficiency without compromising the statistical and system efficiency. \scheme accomplishes this by decoupling the collection of participant updates from aggregation into an updated model. \scheme also intelligently selects among available participants that are least likely to be available in the future. To the best of our knowledge, this is the first approach to directly account for predicted availability as the basis for participant selection in FL and to demonstrate its importance in the overall process.  %
\scheme  can be integrated as a plug-in module to existing FL systems~\cite{Bonawitz19,Li2020FedProx,lai2021fedscale,Oort-osdi21} and is compatible with existing FL privacy-preservation techniques~\cite{Keith2018,Bonawitz19secure}. 

In summary, we make the following contributions: 
\begin{enumerate}
\item We highlight the importance of resource usage of the learners' limited capability and availability in FL and present \scheme to intelligently select participants and efficiently make use of their resources.
\item We propose staleness-aware aggregation and intelligent participant selection algorithms to improve resource usage with minimal impact on time-to-accuracy. %
\item We implement and evaluate \scheme using real-world FL benchmarks and compare it with state-of-the-art solutions to show the benefits it brings to FL systems.
\end{enumerate}

This work does not raise any ethical issues. \scheme is released as open source at \url{https://github.com/ahmedcs/REFL}.

\section{Background}
\label{sec:motive}

We review the FL ecosystem with a focus on system design considerations, highlighting major challenges based on empirical evidence from real datasets. We motivate our work by highlighting the main drawbacks of existing designs.

\subsection{Federated Learning}

We consider the popular FL setting introduced in federated averaging (FedAvg)~\cite{mcmahan2017,Bonawitz19}.
The FedAvg model consists of a (logically) centralized server and distributed learners, such as smartphones or IoT devices.
Learners locally maintain private data and collaboratively train a joint global model.
A key assumption in FL is a lack of trust, implying training data should not leave the data source~\cite{Keith2018}, and any possible breach of private data during communication should be avoided~\cite{Bonawitz19secure}.

\begin{figure}[!t]
    \centering
    \includegraphics[width=1\linewidth]{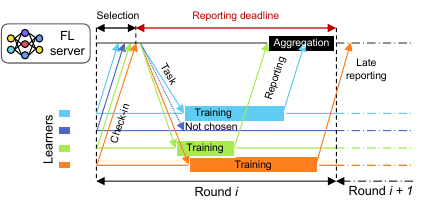}
    \caption{A round of training in the reference FL setting. A sample of learners, called participants, perform training in a given round. Only the updates received within the reporting deadline are aggregated and used to update the model prior to the next round.}
    \label{fig:federatedlearn}
\end{figure}

The training of the global model is conducted over a series of rounds. As shown in \cref{fig:federatedlearn}, at the beginning of each training round, the server waits (during a selection window) for a sufficient number of available learners to check-in.\footnote{A learner is available if it meets certain participation conditions: typically, being connected to power, being idle, and using an unmetered network~\cite{Bonawitz19}.} Then, the server samples a subset of the checked-in learners -- called participants -- to train in the current round. The participants fetch the latest version of the model along with any necessary configurations (e.g., hyper-parameter settings).

Each participant trains the model on its local data for a specified number of epochs and produces a model update (i.e., the delta from the global model) which it sends to the server. The server waits until a target number of participants send their updates to aggregate them and update the global model. This concludes the current round and the former steps are repeated in each round until a certain objective is met (e.g., target model quality or training budget).

To ensure progress, the server generally waits for model updates until a reporting deadline. Updates from stragglers that may arrive beyond the deadline are discarded. A round is considered successful if at least a target number of participants' updates are received by the deadline, else the round is aborted and a new one is attempted.

The FL setting is also distinct from conventional ML training because the distributed learners may exhibit the following types of heterogeneity: 
\begin{inparaenum}[1)]
    \item \textit{\textbf{data heterogeneity:}} learners generally possess variable data points in number, type, and distribution;
    \item \textit{\textbf{device heterogeneity:}} learner devices have different speeds owing to different hardware and network capabilities;
    \item \textit{\textbf{behavioral heterogeneity:}} the availability of learners varies across rounds and there may be learners that abandon the current round if they become unavailable.
\end{inparaenum}

Heterogeneity creates several challenges for FL system designers because both the quality of the trained model and the training speed are majorly affected by which participants are selected at each round.
Below we briefly review existing designs that serve as context to motivate our distinct approach (\S\ref{sec:motivation}).
We discuss additional related work in \S\ref{sec:related}.

\subsection{Existing FL Systems}
Accounting for the unreliability of learners, SAFA~\cite{Safa-Wu2021} enables semi-asynchronous updates from straggler participants. 

SAFA flips the participant selection process of FedAvg: it runs training on \emph{all} learners and ends a round when a pre-set percentage of them return their updates. SAFA allows participants to report after the round deadline, in which case the updates are cached and applied in a later round. However, SAFA only tolerates updates from learners that are within a bounded staleness threshold.
Therefore, the round duration in SAFA is reduced by only waiting for a fraction of the participants, while the cache ensures that the computational effort of straggling participants is not entirely wasted and is able to boost statistical efficiency.
FLeet~\cite{fleet} enables stale updates but adopts a dampening factor to give smaller weight as staleness increases. This is beneficial for not discarding updates that exceed the staleness threshold. However, their AdaSGD protocol is not directly compatible with the traditional FL settings such as FedAvg and FLeet synchronizes model gradients after every local mini-batch.

Oort~\cite{Oort-osdi21} uses a participant selection algorithm that favors learners with higher utility. The utility of a learner in Oort is comprised of statistical and system utility. The statistical utility is measured using training loss as a proxy while system utility is measured as a function of completion time.
Oort preferentially selects fast learners to reduce the round duration. At the same time, it uses a pacer algorithm that can trade longer round duration to include unexplored (or slow) learners when required for statistical efficiency.

\section{The Case for Resource-Efficient FL}
\label{sec:motivation}

We motivate \scheme by highlighting the trade-offs between system efficiency and resource diversity as conflicting optimization goals in FL. Navigating the extremes of these two objectives, as exhibited by the SOTA FL systems, we show how they fall short on common FL benchmarks.

\subsection{System Efficiency vs. Resource Diversity}

Current FL designs either aim at reducing the time-to-accuracy (i.e., system efficiency)~\cite{Oort-osdi21} or increasing coverage of the pool of learners to enhance the data distributions and fairly spread the training workload (i.e., resource diversity)~\cite{Xie2019,Li2020FedProx,Safa-Wu2021}, but do not consider the cost of wasted work by learners.
The first goal results in a discriminatory approach towards certain categories of learners, either preferentially selecting computationally fast learners or learners with model updates of high quality (i.e., those with high statistical utility)~\cite{Li2020FedProx,Oort-osdi21}. The second goal entails spreading out the computations ideally over all available learners but at the cost of potentially longer round duration~\cite{Xie2019,Safa-Wu2021} and significant wasted work.

These two conflicting goals present a challenging trade-off for designers of FL systems to navigate. On one side of the extreme, Oort aggressively optimizes system efficiency and ignores the diversity of learners' data in order to improve time-to-accuracy. The implication of this extreme is less robustness to high levels of data heterogeneity due to poor selection fairness, potentially producing a global model that does not cover the majority of learners' data. On the other hand, SAFA foregoes pre-training selection, selecting all available learners to maximize resource diversity, at the cost of significantly increased resource wastage.

To strike a balance between the two extremes, the FL system should achieve a sufficient level of resource diversity without sacrificing significantly in terms of system efficiency. Our goal is to synthesize the opportunities presented by existing systems and devise a new holistic approach that can fulfill the resource diversity and system efficiency goals simultaneously while considering cumulative resource usage as a primary metric.\footnote{As a concrete metric, we use the time units of resource usage (i.e., for compute resources, the time spent to perform on-device training and for communication resources, the time to communicate with the server) accumulated at every participant. This metric is proportional to energy consumption but affords us avoiding fine-grained power measurements, which are difficult to accurately account for in simulation at scale.} We first show that the existing systems fail to achieve both of these goals and result in significant wastage of resources. We also highlight the opportunities they present which we embrace in our design of \scheme.

\subsection{Stale Updates \& Resource Wastage}

\begin{figure}[!t]
\captionsetup[subfigure]{justification=centering}
\centering
    \includegraphics[width=.8\columnwidth]{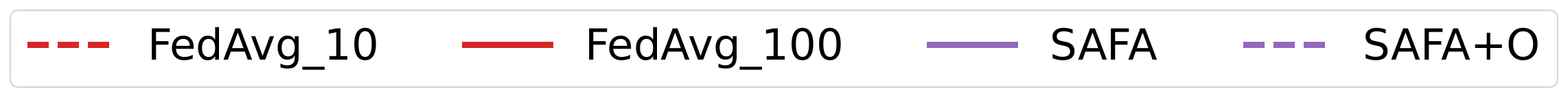}
    \\
     \includegraphics[width=0.95\columnwidth]{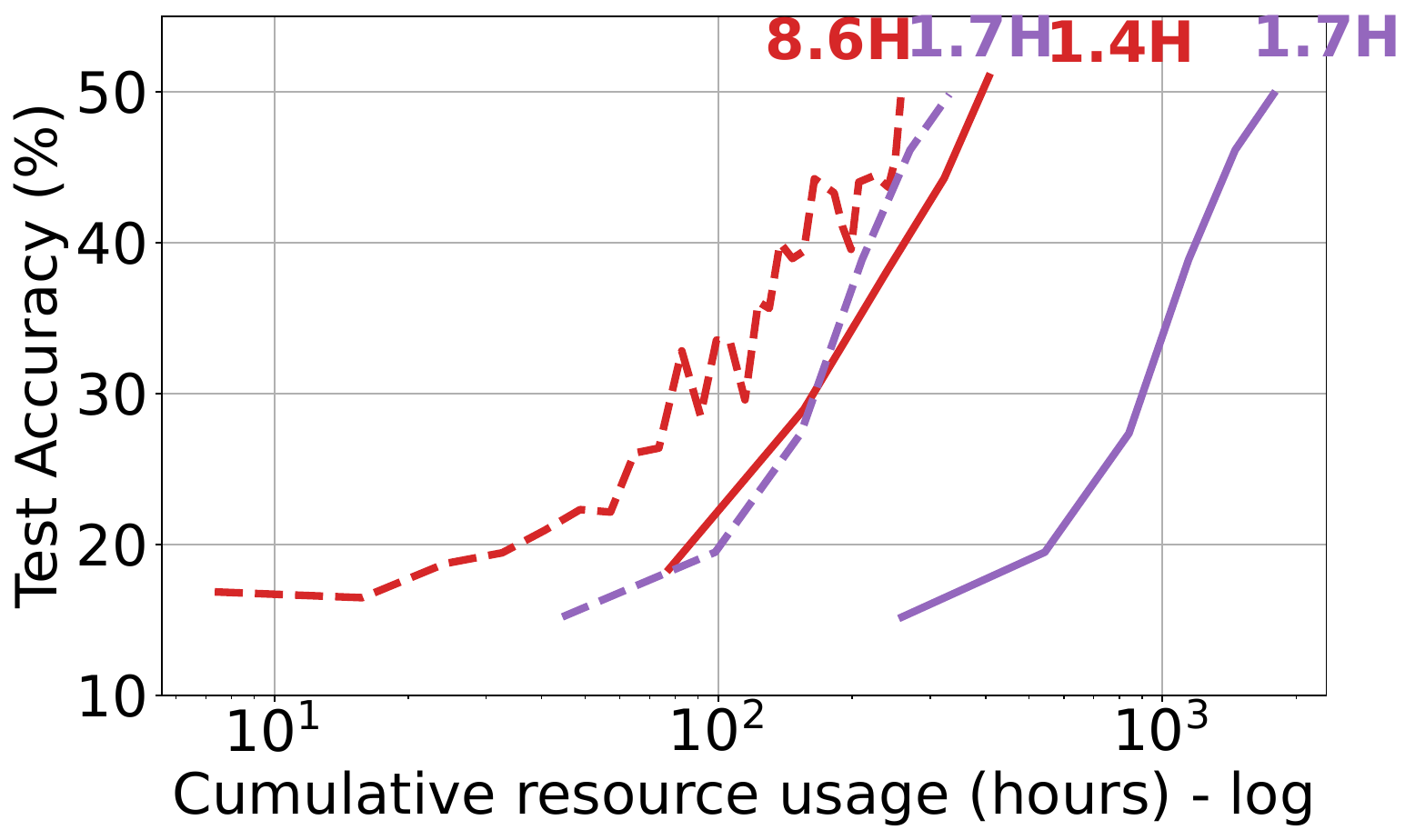}
\caption{Resource usage comparison of SAFA versus an ideal resource-optimized version denoted as SAFA+O, and FedAvg with 10 or 100 participants. We train the model up to a target accuracy. The run time of each approach is shown as a colored annotation near the final data point.}
\label{fig:safa-motive}
\end{figure}

Taking inspiration from asynchronous methods~\cite{Qirong2013,Xie2019}, SAFA allows straggling participants to contribute to the global model via stale updates.
We first evaluate SAFA's resource usage (i.e., the time cumulatively spent by learners in training), and resource wastage (i.e., the time cumulatively spent by learners producing updates that are \emph{not} incorporated into the model).
We compare the performance of SAFA as described in~\cite{Safa-Wu2021} against a version (called SAFA+O) that assumes a perfect oracle that knows which stale updates are eventually aggregated (i.e., will not exceed the staleness threshold). We set the staleness threshold to 5 rounds and the target participant percentage to $10\%$. We use an audio dataset of spoken words provided by Google, hereafter referred to as the Google Speech benchmark~\cite{googlespeech}, and use FedScale's data-to-learner mappings~\cite{lai2021fedscale} (c.f. \cref{tab:benchmarks} and \cref{sec:eval} for details). We set the total number of learners to 1,000, and the round deadline to 100s. We use a real-world user behavior trace to induce learner's availability dynamics~\cite{yang2020heterogeneityaware}. 

\cref{fig:safa-motive} shows the resource usage (x-axis) and resulting test accuracy (y-axis); the lines are annotated with the time to achieve the final accuracy (this style is repeated in other figures in this paper). Since the round time is bounded by a deadline, both SAFA and SAFA+O have equal run times.
Notably, SAFA is inefficient in terms of resource usage, consuming nearly 5$\times$ the resources of SAFA+O to achieve the same final accuracy. By selecting all available learners, then eventually discarding a large number of the computed updates, SAFA wastes around 80\% of learners' computation time. The plot also includes runs of FedAvg with Random selection of 10 and 100 participants. Despite having low resource wastage, FedAvg with 10 participants incurs significantly higher run time (5$\times$) to reach the same accuracy of SAFA; the resource usage could be traded for lower run time with 100 participants, to achieve the same accuracy at similar resource usage to SAFA+O. We note that uniform random data mapping yields similar results.

\smartparagraph{Opportunity.}
In principle, allowing stale updates enables a reduction in round duration and achieves better time-to-accuracy while preserving stragglers' contribution.
The main challenge is, however, to balance the number of participants to avoid significant resource wastage.
This is difficult because the system must estimate the on-device training time and reason about the probability of learners dropping out or exceeding the staleness threshold.
This suggests that beyond stale updates, we must also tackle resource diversity directly.

\begin{figure}[!t]
\captionsetup[subfigure]{justification=centering}
\centering
     \begin{subfigure}[ht]{0.49\linewidth}
     \includegraphics[width=\linewidth]{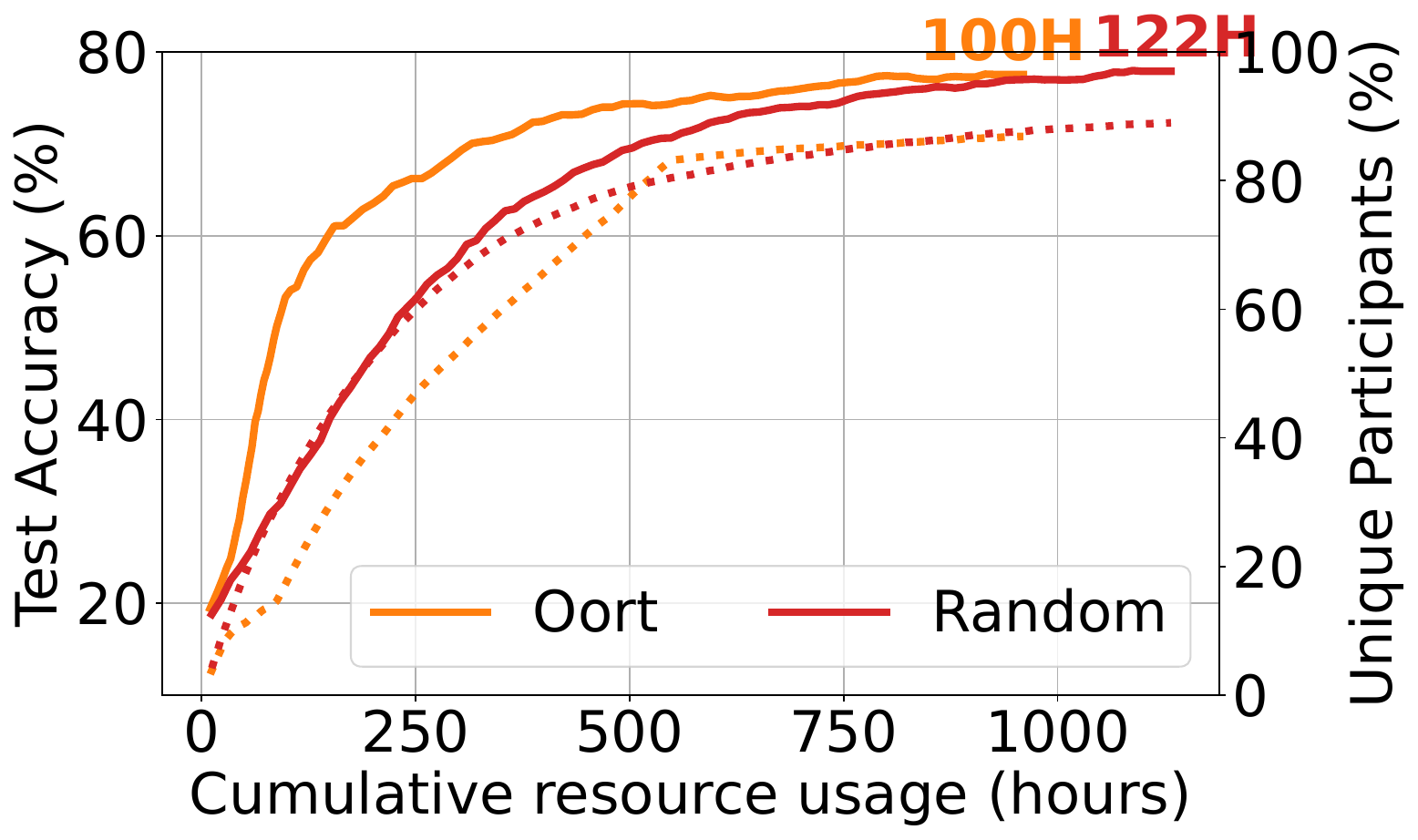}
	\caption{FedScale data mapping}
	\label{fig:motive1-uniform-random}
     \end{subfigure}
     \hfill
    \begin{subfigure}[ht]{0.49\linewidth}
     \includegraphics[width=\linewidth]{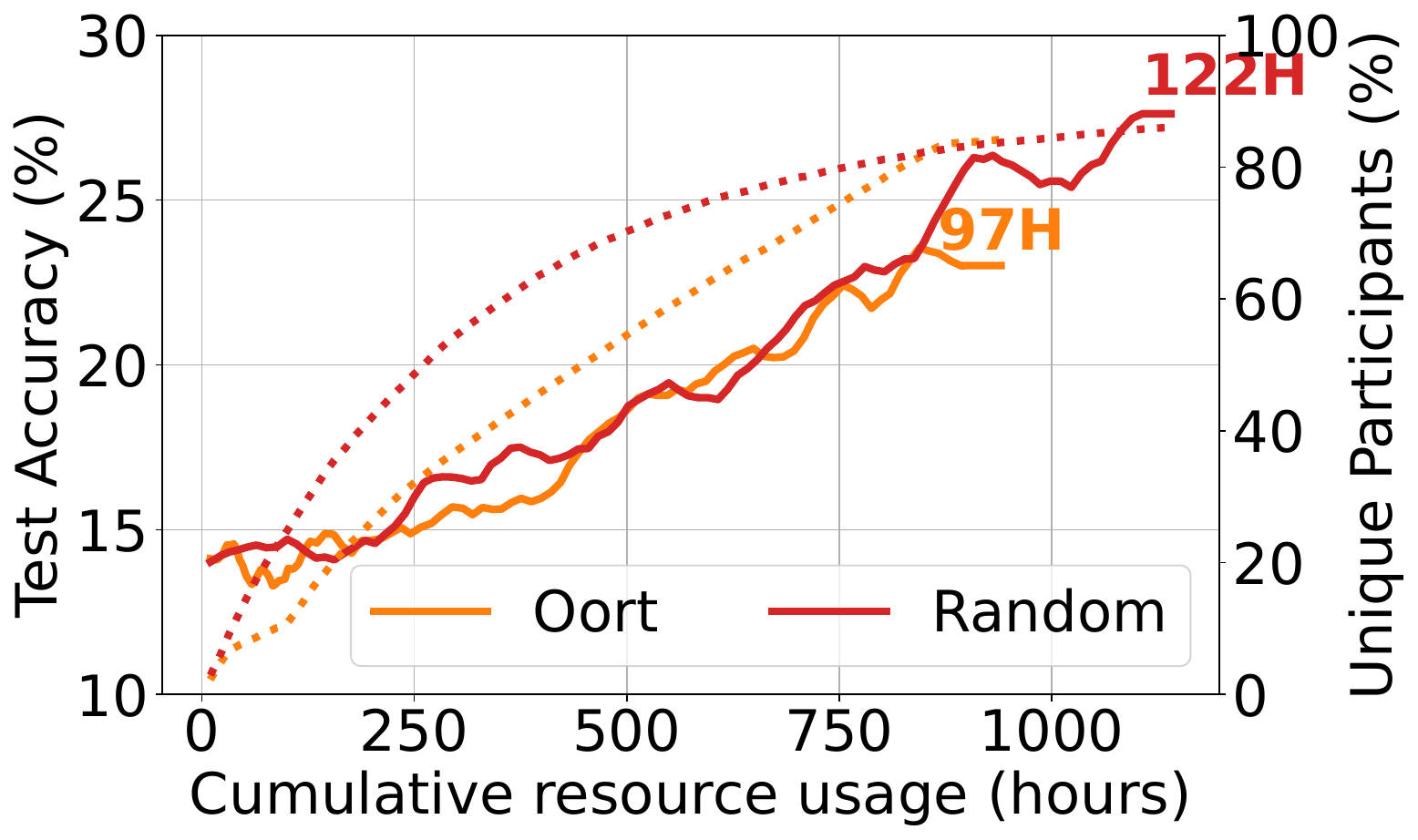} 
	\caption{Label-limited (non-IID)}
	\label{fig:motive1-label-limit}
     \end{subfigure}
\caption{Impact of data heterogeneity on test accuracy in two data mappings. The right y-axes and dotted lines indicate the percentage of unique participants during training.}
\label{fig:motive1}
\end{figure}

\subsection{Participant Selection \& Resource Diversity}

Many existing FL systems select participants using a uniform random sampler~\cite{caldas2018leaf, yang2018applied, Bonawitz19}. 
As noted in Oort~\cite{Oort-osdi21}, this simple strategy is prone to select learners with disparate computing capabilities and prolong round duration due to stragglers. On the other hand, Oort's approach of selecting fast learners has unfavorable consequences, by biasing the model to a subset of the learners that can reduce data diversity.

To see this in practice, we compare the Oort participant selector with a random sampler (Random). We use the Google Speech benchmark for 1,000 training rounds and compare two cases with different data mappings. In the first case, data points are mapped to the learners using FedScale's client-to-data mappings~\cite{lai2021fedscale}.\footnote{In~\cref{sec:eval}, we show that FedScale's client-to-data mapping is comparable to that of an Independent and Identically Distributed (IID) data.} In the second case, data points are also uniformly distributed among the participants but each participant is constrained to have $\approx$10\% of all labels (non-IID). To emphasize the effect of the sampling strategy, we set all learners to be always available; we investigate the effects of availability dynamics later.

\cref{fig:motive1} shows the resulting test accuracy against resource usage. 
In the FedScale's data mapping scenario, Oort is clearly superior to random selection as Oort significantly reduces the round duration by exploiting fast learners.
Conversely, in the non-IID case (label-limited mapping), random selection achieves higher accuracy with a tolerable increase in run time due to a higher resource (and data) diversity.

Participant availability impacts the global data distribution represented in the global model~\cite{Huang2021}.
Our analysis of a large-scale device behavior trace from \cite{yang2020heterogeneityaware} involving more than 136K users of an FL application over a week reveals that 70\% of the learners are available for at most 10 minutes while 50\% are available for at most 5 minutes. This means in practice FL rounds should typically last a few minutes to obtain updates from the majority of participants.
The analysis also suggests that low availability learners may require special consideration to increase the number of unique participants without adversely impacting overall training time.

 \begin{figure}[!t]
\captionsetup[subfigure]{justification=centering}
\centering
    \includegraphics[width=1\linewidth]{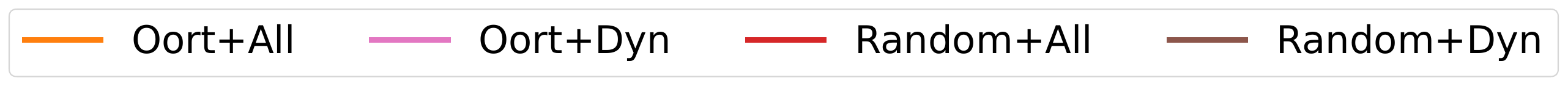}
    \\
    \begin{subfigure}[ht]{0.49\linewidth}
     \includegraphics[width=\linewidth]{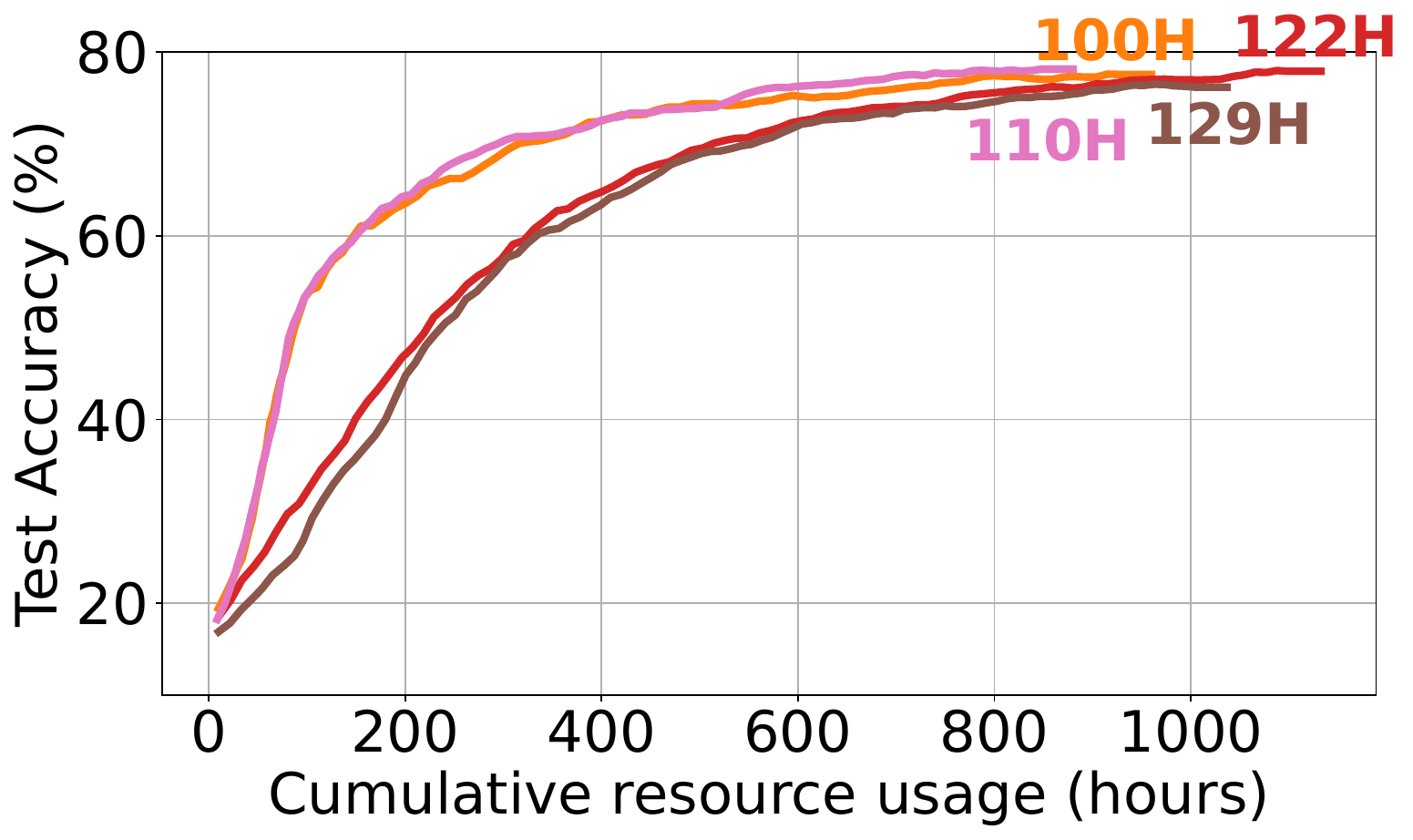}
	\caption{FedScale data mapping}
	\label{fig:uniform-random}
     \end{subfigure}
     \hfill
    \begin{subfigure}[ht]{0.49\linewidth}
     \includegraphics[width=\linewidth]{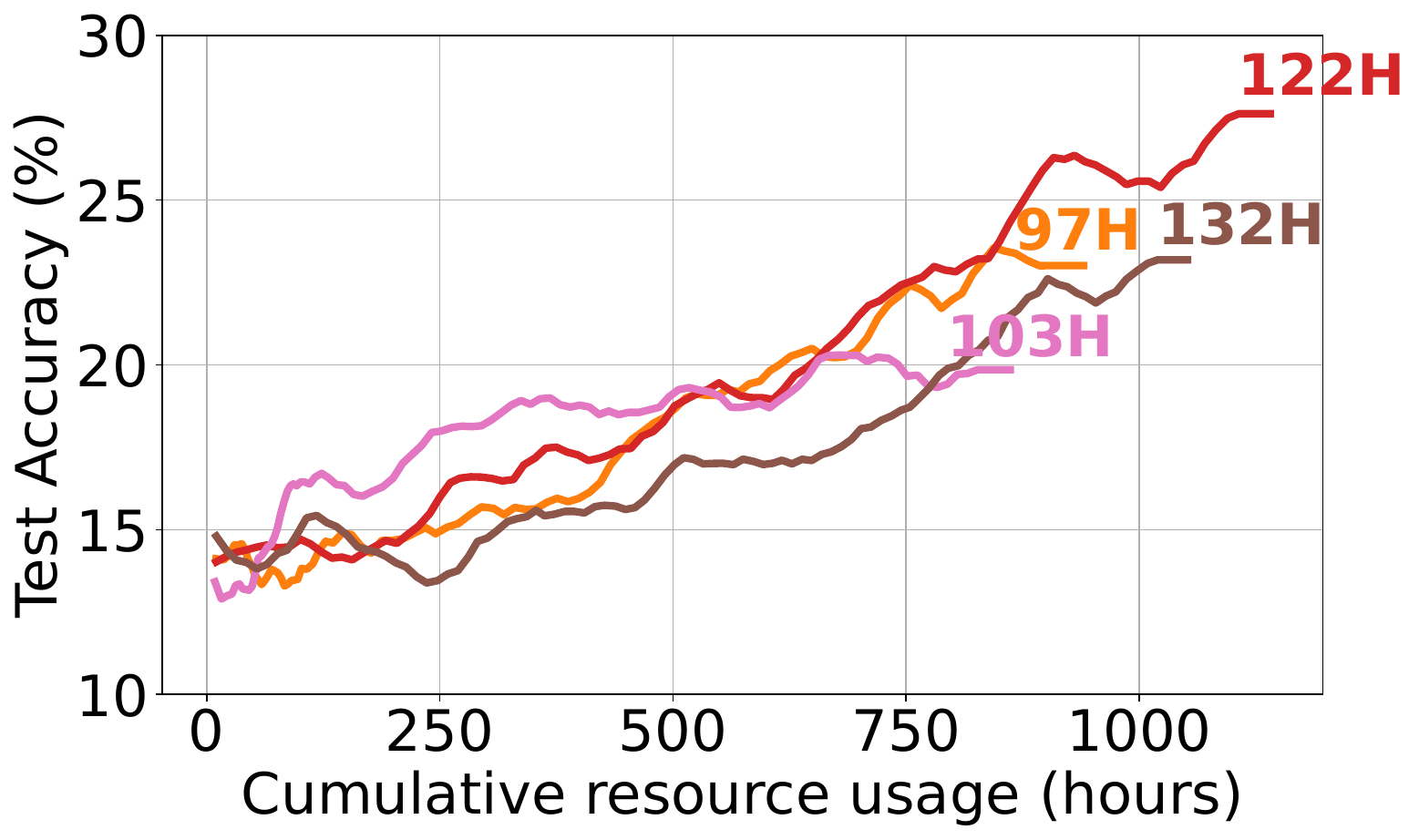} 
	\caption{Label-limited (non-IID)}
	\label{fig:label-limit}
     \end{subfigure}
\caption{Impact of availability on test accuracy in two data mappings.}
\label{fig:motive2}
\end{figure}

We now repeat similar experiments on the FedScale and non-IID cases of the Google Speech benchmark and contrast the execution of Oort and Random participant selection methods in two conditions:
\begin{inparaenum}[1)]
\item all learners are available (AllAvail);
\item learners' availability is dynamic based on the trace of device behavior (DynAvail).
\end{inparaenum}
\cref{fig:motive2} shows that learner availability has no tangible impact in the FedScale case since learners hold data points with comparable distributions.
However, in the non-IID case, learners' availability has a significant effect on model accuracy (we observe a 10-point drop).

\smartparagraph{Opportunity.}
To achieve better model generalization performance, the model should be trained jointly on data samples from a large fraction of the learner population. While Oort's insights into informed participant selection result in faster round duration, there needs to be more consideration with regard to the dynamic availability of learners to ensure wider learner coverage.
This suggests that beyond learners' diversity and compute capabilities, we need to effectively prioritize learners whose availability is limited.

\begin{figure}[!t]
    \centering
    \includegraphics[width=0.49\textwidth]{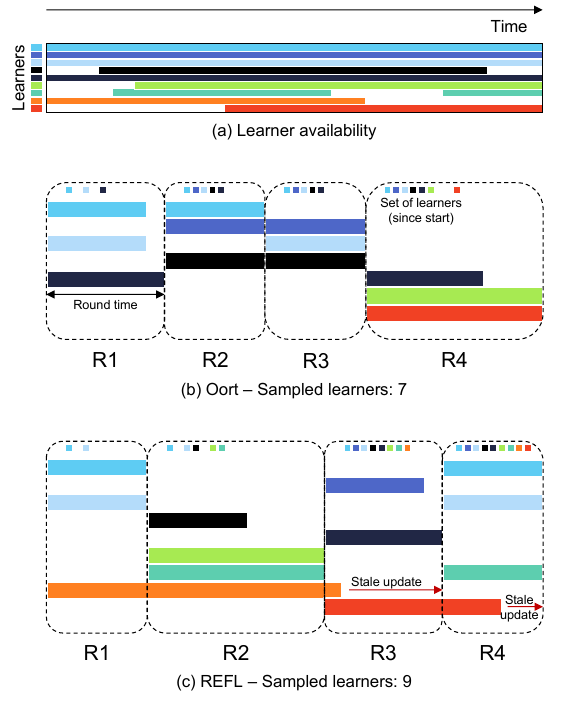}
    \caption{Example trace of 4 training rounds illustrating the main differences between Oort (b) and \scheme (c). The dynamic availability of 9 color-coded learners is shown in (a). By optimizing for time-to-accuracy, Oort skews participant selection towards faster learners during the early phases of training, thereby missing limited-availability learners (\mlA and \mlB). Oort's round time is determined by stragglers. By allowing stale updates, \scheme lowers the dependency on stragglers. By prioritizing learners based on estimated availability, \scheme samples a more diverse set of learners.}
    \label{fig:time-device}
\end{figure}

\section{\scheme Design}
\label{sec:scheme}
\scheme's objective is to enhance the resource efficiency of the FL training process by maximizing resource diversity without sacrificing system efficiency. \scheme achieves this by reducing resource wastage from delayed participants and prioritizing those with reduced availability. It  leverages a theoretically-backed method to incorporate stale updates based on their quality which helps improve the training performance. It proposes a scaling rule for aggregation weights to mitigate stale updates' impact.

The two core components of \scheme are:
\begin{enumerate}
    \item \emph{\bf Intelligent Participant Selection (IPS):} to prioritize participants that improve resource diversity.
    \item \emph{\bf Staleness-Aware Aggregation (SAA):} to improve resource efficiency without impacting time-to-accuracy.
\end{enumerate}

\smartparagraph{Overview by example.}
To illustrate the main differences, \cref{fig:time-device} contrasts \scheme with Oort. First, \scheme enables learners'  tracking of the availability patterns which help  with predicting the future availability. Therefore \scheme is able to prioritize the least available participants (i.e., \mlA and \mlB in \cref{fig:time-device}) to maximize training coverage of different learners' data distributions. \scheme also allows straggling participants to submit late results beyond the set round duration (i.e., \mlA and \mlC in \cref{fig:time-device}). Unlike Oort, which discards these updates due to their inferior device capabilities, this approach reduces the wasted work of stragglers who might own valuable data for the model to be trained on.

\subsection{Intelligent Participant Selection (IPS)}
\label{subsec:ips}
IPS increases resource diversity to allow the global model to capture a wide distribution of learners' data. Moreover, it provides an optional component to further reduce resource wastage by intelligently adapting the number of participants in every round. 

\begin{algorithm}[!t]
    \SetAlgoLined
    \SetKwInOut{Input}{Input}
    \SetKwInOut{Output}{Output}
    \DontPrintSemicolon
    \SetInd{0.5em}{0.75em}
    \Input{$N_t$-Target number of participants}
    \Output{$S$-List of selected participants}
    Initialize $S_t=\emptyset$, $P_t=\emptyset$, $a=(\mu_t, 2\mu_t)$;\\
    \Event{Learner\_Check\_In}
    {
      Send slot $a$ to learner $l$;\\
      Receive learner $l$'s availability probability $p_l$;\\
      $P_t = P_t \cup p_{l}$;\\
    }
    \Event{End\_Selection\_Window}
    {
      Sort in ascending order $P_t$;\\
      Randomly shuffle $P_t$ for probabilities with ties;\\
      Return $S_t$ as the top $N_t$ learners in $P_t$;\\
    }

\caption{Priority Selection Algorithm}\label{algo:priority}
\end{algorithm}

\smartparagraph{Least available prioritization:} \cref{algo:priority} describes how the IPS component intelligently selects participants from the large pool of available learners. Each learner periodically trains a model that predicts its future availability.%
 Upon check-in of the learner $l$, the server sends the running average estimate of round duration $\mu_t$. The learner uses the prediction model to determine the probability of its availability in the time slot $[\mu_t, 2\mu_t]$ and reports this to the server. At the end of the selection window, the server sorts, in ascending order, the learners' probabilities $P$ and randomly shuffles tied learners. Then, the server selects the top $N_t$ learners to participate in this round (i.e., the least available learners). Similar to Google's FL system~\cite{Bonawitz19}, the participants hold from checking-in with the server for few rounds (e.g., 5 rounds) after submitting the updates. 
 
\smartparagraph{Availability prediction model:} A prediction model should be simple with low overhead and trained locally on the learners' devices to preserve privacy. In this work, we do not propose new availability models and use off-the-shelf time-series models to predict the future availability of the learners. Linear models such as Auto-Regressive Integrated Moving Average (ARIMA) or Smoothed ARIMA can be trained on a minimal set of features collected from on-device events of change in the state such as idle, charging, connection to WIFI, screen locked, etc~\cite{HYNDMAN2002439,forecastingbook}.\footnote{Several works used mobile traces to learn user patterns~\cite{prediction1,prediction2,stunner}.}
We use the Prophet forecasting tool~\cite{prophet}, which is based on the aforementioned linear models. We train a prediction model on the Stunner dataset, which is a large-scale dataset comprising device events from a large number of mobile users (e.g., the charging state of the devices)~\cite{stunner}. Given a time window in the future, the model produces a probability for the charging state (eq. availability) of the device within the queried time window.\footnote{%
To address any privacy concerns, the server query is on a limited time window in the future and the server has no access to the history of the device's state. Moreover, the learner may choose not to share this information in which case the server assumes that it is available in the queried time window.} In \S\ref{sec:eval}, we show that the trained model can provide availability predictions with high accuracy.

\smartparagraph{Adaptive Participant Target (APT):} IPS can optimize resource usage by adapting the pre-set target number of participants $N_0$ selected by the operator. %
First, the server updates its moving average estimate of round duration $\mu_{t} = (1-\alpha) D_{t-1} + \alpha \mu_{t-1}$, where $D_{t-1}$ is the duration of the previous round $t-1$. Then, before commencing round $t$, the server probes each current straggler $s \in L_s$ (from round $t-1$) for an estimate of its expected remaining time to upload the update $RT_s$. Next, the server computes how many stragglers ($B_t$) can complete within the duration of the current round (i.e., $RT_s \le \mu_t$). And so, the target number of participants is adjusted for round $t$ to $N_t=max(1, N_0 - B_t)$. This ensures that, in each round, a roughly constant number of updates $N_0$ is aggregated (i.e., the total fresh and stale updates). In large-scale scenarios, this could potentially further improve resource consumption. Note, irrespective of clients' availability, APT is an add-on scheme to not over-commit the participants, further reducing resource consumption.

\subsection{Staleness-Aware Aggregation (SAA)}
\label{subsec:saa}

This component enables the participants to submit their updates past the round deadline and processes these stale updates along-with the fresh updates. Stale updates can be noisy since the model can drift significantly by the time a stale update arrives. In order to mitigate this impact, we multiply the stale updates based on a boosting factor (\S\ref{subsec:scale}). 

We also provide a convergence analysis to substantiate the benefits of staleness. We analyze this effect independently of other \scheme components because all the \scheme components complement each other. Since FedAvg \cite{mcmahan2017,Bonawitz19} is one of the most prominent FL algorithms, we analyze (\S\ref{subsec:asfv}) FedAvg with staleness (termed Stale Synchronous FedAvg, c.f. \cref{algo:ssfedavg}). Under standard assumptions, our convergence analysis shows that the error due to staleness is small in each round, and hence the gradient does not differ significantly (see Lemma \ref{lem:asff}). Due to this small error per round, we show in Theorem \ref{thm:big_oh} that Stale Synchronous FedAvg converges at the same asymptotic rate as FedAvg. 

\subsubsection{Convergence Analysis}\label{sec:convergence_analysis}
We theoretically demonstrate that FedAvg with stale updates can  converge and obtain the convergence rate. Consider the following federated optimization problem consisting of a total of $m$ devices:
\begin{equation}\label{eq:opt}
 \textstyle    \min_{x\in\R^d}  f(x) \defeq \frac{1}{m}\sum_{j\in[m]}f_j(x) ,
\end{equation}
where $f_i(x)=\mbE_{z_i \sim \cD}{l(x;z_i)}$ such that $l(x;z_i)$ denotes the loss function evaluated on input $z_i$ sampled from $\cD$. 

Algorithm \ref{algo:ssfedavg} gives the pseudo-code of Stale Synchronous FedAvg to solve \eqref{eq:opt}. Here, $g_{t,k}^i$ denotes the stochastic gradient computed at the $i^{th}$ participant at round $t$, and at local iteration $k$, such that $\textstyle g_{t,k}^i = \nabla f(y_{t,k}^i) + \xi_{t,k}^i$ with $\mbE [\xi_{t,k}^i| x_{t,k}^i] = \mathbf{0}$.

\begin{algorithm}[!t]
    \SetKwInOut{Input}{Input}
    \SetKwFor{ForP}{for}{in parallel do}
    \DontPrintSemicolon
    \SetInd{0.5em}{0.75em}
    \Input{$K$-synchronization interval, $\tau$-delay in rounds, $N_t$-number of participants}
    Initialize $x_{0}=x_{1}=\hdots=x_{\tau-1}\in \R^d$\;
    \For{round $t=0,\hdots,T-1$}{
    The server samples $S_t$ learners with $|S_t|=N_t$\;
    \ForP{participant $i \in [n]$}{
    $y_{t,0}^i = x_t$\;
    \For{iteration $k=0, \hdots,K-1$}{
    Compute a stochastic gradient $g_{t,k}^i$\;
    $y_{t,k+1}^i = y_{t,k}^i - \gamma g_{t,k}^i$   \tcp*{Local participant update}
    }
    Let $\Delta_t^i = y_{t,K}^i-y_{t,0}^i =-\gamma \sum_{k=0}^{K-1}g_{t,k}^i$\;
    Send $\Delta_t^i$ to the server\;
    }
  	\textbf{At Server}: \\
  	\If{$t<\tau$}{
	   	Broadcast $x_{t+1}$ to participants \tcp*{Aggregation starts $t=\tau$}
  	}
  	\Else{
  	 Receive $\Delta_{t-\tau}^i, i \in S$ \tcp*{Update arrives with delay $\tau$}
	 Let $\Delta_{t-\tau} = \frac{1}{|S|} \sum_{i\in S}\Delta_{t-\tau}^i$\;\
	Server update: $x_{t+1}=x_t+\gamma\Delta_{t-\tau}$\;
	Broadcast $x_{t+1}$ to participants\;
  	}
	} 
\caption{Stale Synchronous FedAvg}
\label{algo:ssfedavg}   
\end{algorithm}

The staleness of model updates is modeled as a round delay.
For ease of exposition, we consider a fixed $\tau$ round delay in \cref{algo:ssfedavg}. However, our analysis holds for variable delays bounded by $\tau$.

\smartparagraph{Assumptions:} we consider the following general assumptions on the loss function.
\begin{assumption}\label{ass:smoothness}
\textbf{(Smoothness)} The function, $f_i: \R^d\to \R$ at each participant, $i \in [n]$ is $L$-smooth, i.e., for every $x,y\in\R^d$ we have, $f_i(y)\leq f_i(x)+\dotprod{\nabla f_i(x), y-x}+\frac{L}{2}\norm{y-x}^2$. 
\end{assumption}
\begin{assumption}\label{ass:minimum}
\textbf{(Global minimum)} 
There exists $x_\star$ such that, $f(x_\star)=f^\star\leq f(x)$, for all $x\in\R^d.$  
\end{assumption}
\begin{assumption}\label{ass:m-sigma2-bounded_noise}
\textbf{(($M, \sigma^2$) bounded noise)} \cite{stich2019error} For every stochastic noise $\xi_{t,k}^i$, there exist $M\geq 0, \sigma^2>0$, such that 
$\mbE [\norm{\xi_{t,k}^i}^2 \mid x_{t}] \leq M\norm{\nabla f(x_{t,k}^i)}^2+\sigma^2$, for all $x_t\in\R^d$.
\end{assumption}

\subsubsection{Convergence Result}\label{subsec:asfv}
The next theorem provides the non-convex convergence rate.%
\begin{theorem}\label{thm:big_oh}
Let Assumptions \ref{ass:smoothness}, \ref{ass:minimum}, and \ref{ass:m-sigma2-bounded_noise} hold. Then, for \cref{algo:ssfedavg}, we have
\begin{eqnarray*}
&\frac{1}{nTK}\sum_{t=0}^{T-1}\sum_{i=1}^n\sum_{k=0}^{K-1}\mbE\norm{ \nabla f(y_{t,k}^i)}^2=\\
& \textstyle \cO\left(\frac{\sigma\sqrt{L(f(x_0-f^\star))}}{\sqrt{nTK}}
+\frac{\max\{L\sqrt{\tau K(n\tau K+M)}, L(K+M/n)\}}{TK}\right).
\end{eqnarray*}
\end{theorem}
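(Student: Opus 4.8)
The starting point is the standard descent inequality implied by the $L$-smoothness of $f$ (which follows from Assumption~\ref{ass:smoothness}, since $f$ is the average of the $f_i$): combining the local and server steps of \cref{algo:ssfedavg}, the effective server update is $x_{t+1} = x_t - \gamma\,\frac{1}{|S_{t-\tau}|}\sum_{i\in S_{t-\tau}}\sum_{k=0}^{K-1} g_{t-\tau,k}^i$, so that $\mbE f(x_{t+1}) \le \mbE f(x_t) + \dotprod{\nabla f(x_t),\, \mbE[x_{t+1}-x_t]} + \frac{L}{2}\mbE\norm{x_{t+1}-x_t}^2$. I would then rewrite the inner product around the points $y_{t-\tau,k}^i$ at which the gradients are actually evaluated, using $\mbE[\xi_{t,k}^i\mid x_{t,k}^i]=\mathbf 0$ to drop the noise from the cross term. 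This produces the descent term $-\tfrac{\gamma}{2}\cdot\tfrac{1}{nK}\sum_{i,k}\mbE\norm{\nabla f(y_{t-\tau,k}^i)}^2$ (up to absolute constants) together with two error terms to control: a \emph{delay error} driven by $\mbE\norm{x_t-x_{t-\tau}}^2$ coming from $\nabla f(x_t)-\nabla f(x_{t-\tau})$, and a \emph{client-drift error} driven by $\tfrac1{nK}\sum_{i,k}\mbE\norm{y_{t-\tau,k}^i-x_{t-\tau}}^2$ from the $K$ local iterations; both are converted to squared-gradient quantities via $L$-smoothness and Young's inequality, and $\frac{L}{2}\mbE\norm{x_{t+1}-x_t}^2$ is expanded in the same way.

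The second step is to bound those two error quantities by the usual unrolling recursions. For the client drift, iterating $y_{t,k+1}^i = y_{t,k}^i - \gamma g_{t,k}^i$ and invoking Assumption~\ref{ass:m-sigma2-bounded_noise} to replace $\mbE\norm{g_{t,k}^i}^2$ by $(M+1)\norm{\nabla f(y_{t,k}^i)}^2+\sigma^2$ gives $\mbE\norm{y_{t,k}^i-x_t}^2 = \cO\!\big(\gamma^2 K(K+M)\,[\text{squared gradients at round }t] + \gamma^2 K\sigma^2\big)$, valid once $\gamma L\sqrt{K(K+M)}$ is small enough to absorb the self-referential gradient terms. For the delay, $\mbE\norm{x_t-x_{t-\tau}}^2 \le \tau\sum_{j=t-\tau}^{t-1}\mbE\norm{x_{j+1}-x_j}^2$ with $\mbE\norm{x_{j+1}-x_j}^2 = \gamma^2\mbE\norm{\frac{1}{|S_{j-\tau}|}\sum_i\sum_k g_{j-\tau,k}^i}^2$; expanding the square and using that the $n=|S_{j-\tau}|$ participants draw independent gradients and that $(\xi_{j-\tau,k}^i)_k$ is a martingale-difference sequence, the noise contributes $\sigma^2/n$ rather than $\sigma^2$ (and the $M$-part contracts by $n$ as well), which is exactly the source of the $\sqrt{nTK}$ and of the $n\tau K$ in the final bound. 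Summing over $t$ yields $\sum_t \mbE\norm{x_t-x_{t-\tau}}^2 \lesssim \gamma^2 L^2\tau^2\big(K^2 + \tfrac{KM}{n}\big)\cdot[\text{total sum of squared gradients}] + \gamma^2\tau^2\tfrac{K\sigma^2}{n}T$.

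The final step is to telescope the descent inequality over $t=0,\dots,T-1$ (the first $\tau$ rounds involve no aggregation, per \cref{algo:ssfedavg}, and contribute only a lower-order term because $x_t$ is frozen there), use Assumption~\ref{ass:minimum} to replace $f(x_T)$ by $f^\star$, divide by $\gamma TK$, and move all residual squared-gradient terms produced by the drift and delay bounds to the left-hand side. That absorption requires a step-size condition of the form $\gamma L(K+M/n)\lesssim 1$ and $\gamma L\sqrt{\tau K(n\tau K + M)}\lesssim 1$, after which at least, say, half of $\frac{1}{nTK}\sum_{t,i,k}\mbE\norm{\nabla f(y_{t,k}^i)}^2$ survives on the left while the right-hand side reduces to $\cO\!\big(\tfrac{f(x_0)-f^\star}{\gamma TK} + \gamma L\sigma^2/n\big)$. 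Optimizing $\gamma$ over the admissible interval — taking the unconstrained minimizer $\gamma\sim\sqrt{n(f(x_0)-f^\star)/(L\sigma^2 TK)}$ when it is feasible, and otherwise the largest admissible value, namely $\min\{1/(L(K+M/n)),\,1/(L\sqrt{\tau K(n\tau K+M)})\}$ — produces precisely the two summands of the claimed rate: the noise term $\tfrac{\sigma\sqrt{L(f(x_0)-f^\star)}}{\sqrt{nTK}}$ and the optimization term $\tfrac{\max\{L\sqrt{\tau K(n\tau K+M)},\,L(K+M/n)\}}{TK}$. I expect the main obstacle to be the bookkeeping in this last rearrangement: the delay and drift errors each feed squared gradients back into the right-hand side with coefficients entangling $\gamma,\tau,K,M,n$, and one has to track these constants carefully enough both to justify a clean step-size condition and to reproduce the exact $\max\{\cdot,\cdot\}$ split — in particular separating the $\tau$-dependent obstruction $L\sqrt{\tau K(n\tau K+M)}$ from the $\tau$-free obstruction $L(K+M/n)$ — rather than settling for a looser single bound.
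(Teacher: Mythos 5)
Your outline is correct and arrives at the stated rate, but it is organized differently from the paper's argument. You apply $L$-smoothness directly to the real iterates $x_t$, which forces you to split the error into a delay term $\mbE\norm{x_t-x_{t-\tau}}^2$ and a client-drift term $\mbE\norm{y_{t-\tau,k}^i-x_{t-\tau}}^2$ and to unroll each recursion separately, with the attendant index shift $t\mapsto t-\tau$ in the descent term. The paper instead runs a \emph{perturbed-iterate} analysis: it defines virtual iterates $\tilde{x}_t=x_t-e_t$, where $e_t$ accumulates the not-yet-applied contributions of the last $\tau$ rounds, shows in Lemma~\ref{lem:virtual_iterate} that $\tilde{x}_{t+1}=\tilde{x}_t-\frac{\gamma}{n}\sum_{i,k}g_{t,k}^i$ is a delay-free recursion, and applies smoothness to $f(\tilde{x}_t)$; the delay and the local drift are then controlled by the single quantity $\mbE\norm{\tilde{x}_t-y_{t,k}^i}^2$ (Lemma~\ref{lem:asff}), which is precisely the combination of your two error terms, under the same step-size condition $\gamma\leq\min\{\frac{1}{2L\sqrt{\tau K(n\tau K+M)}},\frac{n}{2L(nK+M)}\}$ that you identify. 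Both routes end with a bound of the form $\frac{r_0}{\gamma TK}+c\gamma+b\gamma^2$ and the same two-case step-size tuning (packaged in the paper as Lemma~\ref{lem:order_bounds_cubic}), so the final rate is identical; what the virtual-iterate device buys is bookkeeping — the negative squared-gradient term appears at the correct round index and one never has to handle $\nabla f(x_t)-\nabla f(x_{t-\tau})$ explicitly, which is exactly the constant-tracking obstacle you flag. One minor point: your residual right-hand side $\cO\bigl(\frac{f(x_0)-f^\star}{\gamma TK}+\frac{\gamma L\sigma^2}{n}\bigr)$ silently drops the $\frac{\gamma^2L^2\tau K\sigma^2}{n}$ contribution that the paper retains; this is harmless, since under your own step-size constraint that term is dominated and in the paper it only feeds the lower-order $\max\{\cdot,\cdot\}/TK$ part of the rate.
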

\smartparagraph{Overview of analysis.} Our analysis builds on top of the Error Feedback framework \cite{stich2019error}, which is in turn inspired by Perturbed Iterate Analysis \cite{mania2017perturbed}. 
We first define the update of Stale Synchronous FedAvg:
\begin{equation}\label{eq:v_t}
v_t =
\begin{cases}
0, &\text{if } t < \tau\\
\frac{1}{n}\sum_{i=1}^n\sum_{k=0}^{K-1}g_{t-\tau,k}^i, &\text{otherwise.}
\end{cases}
\end{equation}
Using this definition of $v_t$, we have
\begin{equation}
x_{t+1} = x_t - v_t \quad \forall t.
\end{equation}
Let us also define the \textit{error} $e_t$ due to asynchrony as
\begin{equation}
e_t =\sum_{j=1}^{\tau}\mathbb{1}_{(t-j)\geq0}\left( \frac{\gamma}{n}\sum_{i=1}^n\sum_{k=1}^K g_{t-j, k}^i\right).
\end{equation}
where $\mathbb{1}_Z$ denotes the indicator function of the set $Z$. 

The recurrence relation in the next lemma is instrumental for perturbed iterate analysis of Algorithm \ref{algo:ssfedavg}. 
\begin{lemma}\label{lem:virtual_iterate}
~Define the sequence of iterates $\{\Tilde{x}_{t}\}_{t \geq 0}$ as $\Tilde{x}_{ t}=x_{t}-\bar{e}_{t}$, with $\Tilde{x}_{0}=x_0$. Then $\{\Tilde{x}_{t}\}_{t \geq 0}$ satisfy the recurrence: $\Tilde{x}_{t+1}=\Tilde{x}_t - \frac{\gamma}{n} \sum_{i=1}^n\sum_{k=0}^{K-1}g_{t,k}^i$. 
\end{lemma}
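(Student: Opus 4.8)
The plan is to verify the recurrence directly from the definitions, in the spirit of the perturbed-iterate analyses of \cite{stich2019error,Xie2019}. The base case is immediate: every indicator $\mathbbm{1}_{(0-j)\geq0}$ with $j\geq1$ vanishes, so $\bar e_0=0$ and hence $\Tilde x_0=x_0-\bar e_0=x_0$, as stated. For the step I would substitute the actual update $x_{t+1}=x_t-v_t$ from \eqref{eq:v_t} into $\Tilde x_{t+1}=x_{t+1}-\bar e_{t+1}$ and rewrite $x_t=\Tilde x_t+\bar e_t$, obtaining $\Tilde x_{t+1}=\Tilde x_t+\bigl(\bar e_t-v_t-\bar e_{t+1}\bigr)$. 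The whole lemma then reduces to the single algebraic identity $\bar e_{t+1}-\bar e_t=\tfrac{\gamma}{n}\sum_{i=1}^n\sum_{k=0}^{K-1}g_{t,k}^i-v_t$.

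To establish that identity I would write $G_s\defeq\tfrac{\gamma}{n}\sum_{i=1}^n\sum_{k=0}^{K-1}g_{s,k}^i$ for $s\geq0$, so that $v_t=\mathbbm{1}_{(t-\alpha)\geq0}\,G_{t-\alpha}$ and $\bar e_t=\sum_{j=1}^{\alpha}\mathbbm{1}_{(t-j)\geq0}\,G_{t-j}$ is a width-$\alpha$ sliding window over the sequence $(G_s)_{s\geq0}$. Re-indexing the sum defining $\bar e_{t+1}$ by $j\mapsto j-1$ yields $\bar e_{t+1}=\sum_{j=0}^{\alpha-1}\mathbbm{1}_{(t-j)\geq0}\,G_{t-j}$, and subtracting the two sums term by term makes the telescoping transparent: $\bar e_{t+1}-\bar e_t=\mathbbm{1}_{t\geq0}\,G_t-\mathbbm{1}_{(t-\alpha)\geq0}\,G_{t-\alpha}=G_t-v_t$, using $t\geq0$. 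This handles the regimes $t<\alpha$ and $t\geq\alpha$ uniformly. Substituting back gives $\Tilde x_{t+1}=\Tilde x_t-G_t$, precisely the claimed recurrence.

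I do not anticipate a genuine obstacle: this is a bookkeeping lemma, and the only point deserving care is the delay alignment. The term that leaves the window when passing from $\bar e_t$ to $\bar e_{t+1}$ is exactly the in-flight aggregate $G_{t-\alpha}$ that is physically applied at round $t$ (namely $v_t$), whereas the fresh aggregate $G_t$ --- still in transit, to be applied only at round $t+\alpha$ --- is the one that enters; this exact cancellation is what makes $\Tilde x_t$ behave like an undelayed SGD iterate, which is what the ensuing analysis (\cref{lem:without_order_notation}, \cref{thm:big_oh}) exploits. One caveat worth flagging is purely notational: the powers of $\gamma$ appearing in the target recurrence, in the definition of $\bar e_t$, and in the server step of \cref{algo:ssfedavg} (where $\Delta_t^i=-\gamma\sum_k g_{t,k}^i$) must be made mutually consistent; once a single convention for $v_t$ is fixed the telescoping above goes through verbatim, and the same sliding-window cancellation covers variable delays bounded by $\alpha$, since each $G_s$ still enters and leaves the window exactly once.
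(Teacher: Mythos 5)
Your proof is correct and follows essentially the same route as the paper's: both reduce the recurrence to the sliding-window identity $e_{t+1}-e_t = \frac{\gamma}{n}\sum_{i}\sum_{k}g_{t,k}^i - v_t$, which the paper asserts inline and you justify explicitly by re-indexing $j\mapsto j-1$. Your flag about the inconsistent powers of $\gamma$ between the definition of $v_t$, the definition of $e_t$, and the server step of the algorithm is a legitimate notational issue in the paper, not a gap in your argument.
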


Lemmas \ref{lem:asdd} and \ref{lem:asff} are useful for bounding intermediate terms.

\begin{lemma}\label{lem:asdd}
We have
\begin{equation*}\label{eq:acbk}
    \mbE_t \norm{\frac{1}{n}\sum_{i=1}^n \sum_{k=0}^{K-1}g_{t,k}^i}^2 \leq \frac{1}{n}\sum_{i=1}^n\sum_{k=0}^{K-1}(K+\frac{M}{n}) \norm{\nabla f(y_{t,k}^i)}^2 + \frac{K\sigma^2}{n},
\end{equation*}
where $\mbE_t[\cdot]$ denotes expectation conditioned on the iterate $x_t$, that is, $\mbE[\cdot|x_t]$.
\end{lemma}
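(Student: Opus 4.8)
The plan is to peel each stochastic gradient into its mean and noise, $g_{t,k}^i=\nabla f(y_{t,k}^i)+\xi_{t,k}^i$, and write $\frac{1}{n}\sum_{i=1}^n\sum_{k=0}^{K-1}g_{t,k}^i = A+B$, where $A\defeq\frac{1}{n}\sum_{i=1}^n\sum_{k=0}^{K-1}\nabla f(y_{t,k}^i)$ and $B\defeq\frac{1}{n}\sum_{i=1}^n\sum_{k=0}^{K-1}\xi_{t,k}^i$. Then $\mbE_t\norm{A+B}^2=\mbE_t\norm{A}^2+2\mbE_t\dotprod{A,B}+\mbE_t\norm{B}^2$, and I would bound the three pieces so that the first supplies the ``$K$'' in the coefficient $K+M/n$, the third supplies the ``$M/n$'' together with the additive $K\sigma^2/n$, and the middle cross term is shown to vanish.

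For the first piece, $A$ equals $K$ times the average of the $nK$ vectors $\nabla f(y_{t,k}^i)$, so \cref{lem:akds} applied with $nK$ summands gives pointwise $\norm{A}^2\le \frac{K}{n}\sum_{i=1}^n\sum_{k=0}^{K-1}\norm{\nabla f(y_{t,k}^i)}^2$, and taking $\mbE_t$ preserves it. For the third piece, I would group by participant, $B=\frac{1}{n}\sum_i\zeta_i$ with $\zeta_i\defeq\sum_{k=0}^{K-1}\xi_{t,k}^i$: distinct participants draw independent noise given $x_t$ and $\mbE_t\zeta_i=\mathbf{0}$, so the cross terms between $\zeta_i$ and $\zeta_{i'}$ vanish and $\mbE_t\norm{B}^2=\frac{1}{n^2}\sum_i\mbE_t\norm{\zeta_i}^2$; within a participant, the zero-mean property of $\xi_{t,k}^i$ conditioned on the history up to local step $k$ kills the cross terms among the $\xi_{t,k}^i$, leaving $\mbE_t\norm{\zeta_i}^2=\sum_{k=0}^{K-1}\mbE_t\norm{\xi_{t,k}^i}^2$. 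Feeding each summand into \cref{ass:m-sigma2-bounded_noise} yields $\mbE_t\norm{B}^2\le \frac{M}{n^2}\sum_{i,k}\mbE_t\norm{\nabla f(y_{t,k}^i)}^2+\frac{K\sigma^2}{n}$, and adding the two bounds reproduces exactly the right-hand side of \cref{lem:asdd} once the cross term is discarded.

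The cross term $2\mbE_t\dotprod{A,B}=\frac{2}{n^2}\sum_{i,k}\sum_{i',k'}\mbE_t\dotprod{\nabla f(y_{t,k}^i),\xi_{t,k'}^{i'}}$ is the step I expect to be the main obstacle and the one that needs care about measurability. Every summand with $i'\ne i$, or with $i'=i$ and $k'\ge k$, vanishes after conditioning on the $\sigma$-algebra generated by all randomness revealed before $\xi_{t,k'}^{i'}$ is sampled, since there $\nabla f(y_{t,k}^i)$ is measurable and $\mbE[\xi_{t,k'}^{i'}\mid\cdot]=\mathbf{0}$. The delicate case is $i'=i$ with $k'<k$, where $\nabla f(y_{t,k}^i)$ genuinely depends on the earlier noise $\xi_{t,k'}^i$ through the local iterate; I would handle it by reading the unbiasedness assumption with respect to the natural per-step filtration (the conditional mean of $\xi_{t,k}^i$ is zero given the whole past, not merely given the current iterate), which is the customary convention in this line of work and is what lets the cross term be declared zero while keeping the constants exactly as stated. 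Collecting $\mbE_t\norm{A}^2$, the vanishing cross term, and $\mbE_t\norm{B}^2$ then completes the proof.
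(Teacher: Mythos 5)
Your proposal is correct and follows essentially the same route as the paper: expand $g_{t,k}^i=\nabla f(y_{t,k}^i)+\xi_{t,k}^i$, kill the cross terms by zero-mean noise, bound the gradient part via \cref{lem:akds} to get the $K/n$ coefficient, and feed the noise variance into \cref{ass:m-sigma2-bounded_noise} to get the $M/n^2$ and $K\sigma^2/n$ terms. The only difference is that you are more careful than the paper about why the within-participant cross terms $\dotprod{\nabla f(y_{t,k}^i),\xi_{t,k'}^i}$ with $k'<k$ vanish---the paper simply invokes $\mbE[\xi_{t,k}^i\mid x_t]=0$ everywhere, whereas your appeal to the per-step filtration (consistent with the paper's own condition $\mbE[\xi_{t,k}^i\mid x_{t,k}^i]=\mathbf{0}$ stated alongside \cref{algo:ssfedavg}) is the honest justification.
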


\begin{lemma}\label{lem:asff}
With a constant step-size $\gamma \leq \frac{1}{2L\sqrt{\tau K(n\tau K+M)}}$, we have
\begin{eqnarray*}\label{eq:sdjh}
&\sum_{t=0}^{T-1}\frac{1}{n}\sum_{i=1}^n\sum_{k=0}^{K-1}\mbE\norm{\tilde{x}_t-y_{t,k}^i}^2 \\
&\textstyle \leq \frac{1}{4L^2}\sum_{t=0}^{T-1}\frac{1}{n}\sum_{i=1}^n\sum_{k=0}^{K-1}\mbE\norm{\nabla f(y_{t,k}^i)}^2 + \frac{\gamma^2}{n}T\tau K^2\sigma^2.
\end{eqnarray*}
\end{lemma}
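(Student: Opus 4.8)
The plan is to control the deviation of each local iterate $y_{t,k}^i$ from the virtual iterate $\tilde{x}_t$ by expanding the recursion for $y_{t,k}^i$ together with the recursion for $\tilde{x}_t$ from Lemma~\ref{lem:virtual_iterate}, and then summing over rounds. First I would write $\tilde{x}_t - y_{t,k}^i$ as a telescoped sum. Recall $y_{t,0}^i = x_t$, while $\tilde{x}_t = x_t - e_t$, so $\tilde{x}_t - y_{t,0}^i = -e_t$, and for $k\geq 1$ the local updates contribute $\gamma\sum_{\ell=0}^{k-1} g_{t,\ell}^i$. Hence
\[
\tilde{x}_t - y_{t,k}^i = -e_t + \gamma\sum_{\ell=0}^{k-1} g_{t,\ell}^i,
\]
where $e_t = \sum_{j=1}^{\alpha}\mathbbm{1}_{(t-j)\geq 0}\bigl(\tfrac{\gamma}{n}\sum_{i'=1}^n\sum_{\ell=0}^{K-1} g_{t-j,\ell}^{i'}\bigr)$ collects the at most $\alpha$ delayed round-updates not yet applied. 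I would then take $\mbE\norm{\cdot}^2$, split the squared norm of the sum over the (at most) $\alpha+1$ groups of terms using Young's inequality \eqref{eq:youngs} / Lemma~\ref{lem:akds} (there are $k\leq K$ local gradients in the fresh group and $K$ gradients per delayed round), so that a crude count gives on the order of $\alpha K$ "blocks'' each contributing $\gamma^2$ times a sum of $\norm{g}^2$ terms.

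Next I would pass from stochastic gradients $g_{t,\ell}^i$ to true gradients using $g = \nabla f(y) + \xi$, $\mbE[\xi\mid x_t]=0$, and Assumption~\ref{ass:m-sigma2-bounded_noise}: each $\mbE\norm{g_{t,\ell}^i}^2 \leq (1+M)\norm{\nabla f(y_{t,\ell}^i)}^2 + \sigma^2$ (or keep the $M$ and $\sigma^2$ terms separate, as in Lemma~\ref{lem:asdd}). Summing over $i\in[n]$, $k\in\{0,\dots,K-1\}$, $t\in\{0,\dots,T-1\}$, and re-indexing the delayed terms (each round's update appears in at most $\alpha$ subsequent $e_t$'s), the variance part accumulates to $\tfrac{\gamma^2}{n}\,T\alpha K^2\sigma^2$ — matching the second term in \eqref{eq:sdjh} — and the gradient part accumulates to something like $C\gamma^2 L^2 \alpha K(n\alpha K + M)\sum_{t,i,k}\mbE\norm{\nabla f(y_{t,k}^i)}^2$ for an absolute constant $C$. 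Imposing $\gamma \leq \tfrac{1}{2L\sqrt{\alpha K(n\alpha K+M)}}$ (possibly with the constant $C$ absorbed into the "$2$'') makes this coefficient at most $\tfrac{1}{4L^2}$, giving the first term of \eqref{eq:sdjh}.

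The main obstacle is the bookkeeping: carefully counting how many gradient blocks enter $\tilde{x}_t - y_{t,k}^i$ (the $+1$ fresh block plus $\alpha$ delayed blocks, each of length $\leq K$), choosing the Young's-inequality split so that the factors $(\alpha+1)$, $K$, $n$, and $M$ land in exactly the combination $\alpha K(n\alpha K + M)$, and then reindexing the double sum over delayed rounds without losing a factor. The $1/n$ versus $1/n^2$ accounting (as in Lemma~\ref{lem:asdd}, where the noise term gains an extra $1/n$ from independence across participants) is the delicate point that produces the $n\alpha K + M$ rather than $\alpha K(1+M)$ inside the square root. I would also double-check the edge cases $t<\alpha$ (where $e_t$ has fewer than $\alpha$ active terms, which only helps) and $k=0$. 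Once the constants are pinned down, the stated step-size restriction is exactly what is needed to fold the iterate-drift term back into the gradient-norm sum.
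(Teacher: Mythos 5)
Your plan follows essentially the same route as the paper's proof: decompose $\tilde{x}_t-y_{t,k}^i$ into the asynchrony error $-e_t$ (at most $\alpha$ delayed round-updates) plus the local drift $\gamma\sum_{\ell=0}^{k-1}g_{t,\ell}^i$, bound the squared norm via Young's inequality and Assumption~\ref{ass:m-sigma2-bounded_noise} to obtain the coefficient $\tfrac{\gamma^2}{n^2}(n\alpha K+M)$ on the gradient sum and $\tfrac{\gamma^2}{n}\alpha K\sigma^2$ on the noise, reindex the delayed rounds when summing over $t$, and use the step-size restriction to fold the drift coefficient into $\tfrac{1}{4L^2}$. The bookkeeping you flag as the main obstacle is real but is handled (equally tersely) in the paper, which likewise does not spell out the Young's-inequality split; the only slips in your sketch are the spurious $L^2$ in the pre-step-size coefficient and the fact that the paper's proof actually uses $\gamma\leq\tfrac{1}{2L}\sqrt{\tfrac{n}{\alpha K(n\alpha K+M)}}$ rather than the constant stated in the lemma, neither of which changes the argument.
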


\begin{lemma}\label{lem:without_order_notation}
Let Assumptions \ref{ass:smoothness}, \ref{ass:minimum} and \ref{ass:m-sigma2-bounded_noise} hold.~If $\{x_t\}_{t\geq0}$ denote the iterates of Algorithm \ref{algo:ssfedavg} for a constant step-size,\\ $\gamma \leq\min\{\frac{1}{2L\sqrt{\tau K(n\tau K+M)}}, \frac{n}{2L(nK+M)}\}$, then
\begin{eqnarray*}
&\frac{1}{nTK}\sum_{t=0}^{T-1}\sum_{i=1}^n\sum_{k=0}^{K-1}\mbE\norm{ \nabla f(y_{t,k}^i)}^2 \leq \\
& \textstyle \frac{8}{\gamma TK}\left(f(x_0)-f^{\star}\right) + \frac{4\gamma L \sigma^2}{n} +  \frac{4\gamma^2 L^2 \tau K\sigma^2}{n}.
\end{eqnarray*}
\end{lemma}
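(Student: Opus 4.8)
The plan is to run a standard perturbed-iterate (virtual sequence) analysis on the shifted iterates $\tilde{x}_t = x_t - e_t$ introduced in Lemma~\ref{lem:virtual_iterate}, which evolve according to the clean recurrence $\tilde{x}_{t+1} = \tilde{x}_t - \frac{\gamma}{n}\sum_{i=1}^n\sum_{k=0}^{K-1}g_{t,k}^i$, i.e.\ with no delay appearing explicitly. First I would invoke $L$-smoothness (Assumption~\ref{ass:smoothness}) on $f$ along the virtual trajectory: $f(\tilde{x}_{t+1}) \le f(\tilde{x}_t) - \gamma\langle \nabla f(\tilde{x}_t), \frac{1}{n}\sum_{i,k} g_{t,k}^i\rangle + \frac{L\gamma^2}{2}\|\frac{1}{n}\sum_{i,k} g_{t,k}^i\|^2$. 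Taking conditional expectation $\mbE_t[\cdot]$, the noise $\xi_{t,k}^i$ vanishes from the cross term, leaving $\mbE_t\langle \nabla f(\tilde{x}_t), \frac{1}{n}\sum_{i,k}\nabla f(y_{t,k}^i)\rangle$. This inner product I would split via the identity $2\langle a,b\rangle = \|a\|^2 + \|b\|^2 - \|a-b\|^2$ (or the polarization in \eqref{eq:youngs2}) so as to produce a positive $\frac{1}{n}\sum_{i,k}\|\nabla f(y_{t,k}^i)\|^2$ term on the left, a $\|\nabla f(\tilde{x}_t)\|^2$ term, and a consistency error term $\|\nabla f(\tilde{x}_t) - \frac{1}{n}\sum_{i,k}\nabla f(y_{t,k}^i)\|^2 \le \frac{L^2}{n}\sum_{i,k}\|\tilde{x}_t - y_{t,k}^i\|^2$ again by $L$-smoothness and Lemma~\ref{lem:akds}. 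The quadratic $\mbE_t\|\frac{1}{n}\sum_{i,k} g_{t,k}^i\|^2$ is bounded directly by Lemma~\ref{lem:asdd}, contributing the $(K+M/n)$ gradient-norm factor and the $K\sigma^2/n$ noise floor.

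Next I would sum the resulting one-step inequality over $t = 0,\dots,T-1$, telescoping $f(\tilde{x}_0) - f(\tilde{x}_T) \le f(x_0) - f^\star$ (using $\tilde{x}_0 = x_0$ and Assumption~\ref{ass:minimum}), and then handle the accumulated drift term $\sum_t \frac{1}{n}\sum_{i,k}\mbE\|\tilde{x}_t - y_{t,k}^i\|^2$ using Lemma~\ref{lem:asff}. This is where the step-size restriction $\gamma \le \frac{1}{2L\sqrt{\alpha K(n\alpha K + M)}}$ is used: it makes the drift bound's gradient-norm contribution $\frac{1}{4L^2}\sum_t\frac{1}{n}\sum_{i,k}\mbE\|\nabla f(y_{t,k}^i)\|^2$, which after multiplication by $L^2$ can be absorbed into the left-hand side (it is a fraction of the main positive term). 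The second step-size constraint $\gamma \le \frac{n}{2L(nK+M)}$ is what lets the $\frac{L\gamma^2}{2}(K+M/n)$ coefficient from Lemma~\ref{lem:asdd} be dominated by $\frac{\gamma}{4}$ or similar, so that the $\frac{1}{n}\sum_{i,k}\|\nabla f(y_{t,k}^i)\|^2$ terms coming from the quadratic bound are also absorbed. After collecting constants, dividing through by $\gamma TK/8$ (tracking the exact numeric factor that yields the stated $8$, $4$, $4$), and keeping the noise terms $\frac{K\sigma^2}{n}$ and $\frac{\gamma^2}{n}T\alpha K^2\sigma^2$ from Lemmas~\ref{lem:asdd} and \ref{lem:asff}, one arrives at the claimed bound $\frac{8}{\gamma TK}(f(x_0)-f^\star) + \frac{4\gamma L\sigma^2}{n} + \frac{4\gamma^2 L^2 \alpha K\sigma^2}{n}$.

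The main obstacle I anticipate is bookkeeping of the constants so that the two absorption steps (drift term and quadratic-noise term) each consume at most, say, a quarter or an eighth of the main positive term, leaving a clean positive coefficient in front of $\frac{1}{nTK}\sum_{t,i,k}\mbE\|\nabla f(y_{t,k}^i)\|^2$ that produces exactly the factor $8$ after normalization; this requires choosing the splitting parameter $\rho$ in Young's inequality and the step-size thresholds consistently. A secondary subtlety is the treatment of the boundary rounds $t < \alpha$ where $v_t = 0$ and updates have not yet started arriving — here I would note, as the paper does in Lemma~\ref{lem:asff}, that one can simply restrict all sums to nonnegative round indices and drop the indicator functions, since the missing terms are nonnegative and only strengthen the inequalities; the initialization $x_0 = x_1 = \dots = x_{\tau-1}$ ensures no extra error is incurred. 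Everything else is routine application of Lemmas~\ref{lem:akds}, \ref{lem:asdd}, \ref{lem:asff} and the Young-type inequality \eqref{eq:youngs2}.
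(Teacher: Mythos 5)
Your plan coincides with the paper's proof essentially step for step: the perturbed-iterate descent inequality on $\tilde{x}_t$ from Lemma~\ref{lem:virtual_iterate}, Lemma~\ref{lem:asdd} for the quadratic term, a Young-type split of the cross term, absorption of the drift via Lemma~\ref{lem:asff} under the first step-size constraint and of the $(K+M/n)$ coefficient under the second, then telescoping with $\tilde{x}_0=x_0$ and $f(\tilde{x}_T)\geq f^\star$ to get the stated constants. The one bookkeeping caution is that the consistency error should be kept termwise as $\frac{1}{n}\sum_{i,k}\norm{\nabla f(\tilde{x}_t)-\nabla f(y_{t,k}^i)}^2$ (as the paper does) rather than as the single aggregated difference $\norm{\nabla f(\tilde{x}_t)-\frac{1}{n}\sum_{i,k}\nabla f(y_{t,k}^i)}^2$, since the inner sum over $k$ is not normalized by $K$ and the two are not interchangeable without an extra factor of $K$ — exactly the constant-tracking issue you already flag.
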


The above leads us to the stated theorem for an appropriate choice of parameters. %

\smartparagraph{Significance of results.}
We achieve a $\cO(\frac{1}{\sqrt{nTK}})$ asymptotic rate, which improves with $K$, the number of local steps per round. In comparison, the Asynchronous algorithm of \cite{Xie2019} does not improve with the number of local steps. Moreover, asynchrony (captured by $\tau$) only affects the faster decaying $\cO(\frac{1}{T})$ term. Thus, Stale Synchronous FedAvg has the same asymptotic convergence rate as synchronous FedAvg \cite{stich2019error}, and hence achieves asynchrony \textit{for free}. Furthermore, this asymptotic rate is better for Stale Synchronous FedAvg in practice, since the rate improves with $n$, the total number of learners contributing to an update, and staleness relaxation should result in more learners contributing to an update.

\subsubsection{Mitigating the Impact of Increased Staleness}
\label{subsec:scale}
We note that our convergence guarantee in \S\ref{sec:convergence_analysis} depends on the maximum round delay $\tau$, and large round delays could negatively impact convergence, which must be mitigated.
To mitigate the impact, prior work on distributed asynchronous training proposes to scale the weight of stale updates before aggregation~\cite{staleaware,Jiang2017,fleet}. 
We denote the set of fresh and stale updates in a round as $\cF$, and $\cS$ respectively. Let $n_{\cF}$ be the number of fresh updates, and 
$\hat{u}_{\cF}$ be the average of the fresh updates. Moreover, let $n_\cS$ be the number of stale updates, and for a straggler $s \in \cS$, $u_s$ be the stale update, and $\tau_s$ be the number of rounds the straggler is delayed by. The following are the scaling rules in the literature:
\begin{enumerate}
\item \textbf{Equal:}  same weight as fresh updates (i.e., $w_s = 1$);
\item \textbf{DynSGD:} linear inverse of the number of staleness rounds $w_s = \frac{1}{\tau_s + 1}$~\cite{Jiang2017};
\item \textbf{AdaSGD:} exponential damping of the number of staleness rounds $w_s = e^{-(\tau_s + 1)}$~\cite{fleet}.
\end{enumerate}
AdaSGD also proposed a boosting multiplier to increase the weights for stale updates to account for learners with data distributions that more significantly deviate from the global data distribution. AdaSGD showed that boosting the weight of important stale updates is critical since stragglers may possess more valuable (dissimilar) data compared to fast learners. However, this approach may violate privacy because computing the boosting factor requires learners to share information about their data.

Therefore, we propose a privacy-preserving boosting factor and combine it with the staleness-based damping rule of DynSGD~\cite{Jiang2017}. The proposed boosting factor favors a stale update based on how much it deviates from the fresh updates' average and hence it does not require any information about learner's data. Let, %
$\Lambda_s=\frac{\norm{\hat{u}_\cF-\frac{u_s+n_\cF \hat{u}_f}{n_\cF+1}}^2}{\norm{\hat{u}_{\cF}}^2}$ 
be the deviation of the stale update $u_s$ from the average of the fresh updates $\hat{u}_{\cF}$. Let $\Lambda_{max}=\max_{s\in\cS}{\Lambda_s}$. The boosting factor term scales a stale update $s$ proportional to $1 - e^{-\frac{\Lambda_s}{\Lambda_{max}}}$. Finally, our rule to compute the scaling factor is:
\begin{equation}
\label{eq:hybrid}
 \textstyle w_s = (1 - \beta) \frac{1}{\tau_s + 1} + \beta (1 - e^{-\frac{\Lambda_s}{\Lambda_{max}}})
\end{equation}
where $\beta$ is a tunable weight for the averaging. %

For every fresh update $f\in\cF$, we choose a scale value of one, i.e., $w_f=1$. The final coefficients for weighted averaging are the normalized weights. That is, for an update $i\in\cF\cup\cS$, the final coefficient as:
$$\hat{w}_i=\frac{w_i}{\sum_{i\in\cF\cup\cS}w_i}$$ %
Hence, in aggregation, $w_i < w_f $ meaning that weights applied to stale updates are strictly less than weights for new updates. This in principle reduces the impact from malicious learners who delay the updates to gain any advantage because of the boosting factor. Further analysis in adversarial settings is left to future work.

\begin{table*}[!t]
\caption{Summary of benchmarks and characteristics of the mappings used in this work.}
\centering
\resizebox{0.9\linewidth}{!}{%
    \begin{tabular}{|cccccccc|cc|ccc|}
    \toprule
         \multirow{3}{*}{Task} &  \multirow{3}{*}{Model} &	 \multirow{3}{*}{Dataset}	&  \multirow{3}{*}{\makecell{Model Size \\\# of Parameters}} &  \multirow{3}{*}{\makecell{Learning\\Rate}} & \multirow{3}{*}{\makecell{Local\\Epochs}} & \multirow{3}{*}{\makecell{Batch\\Size}} & \multirow{3}{*}{\makecell{\# of\\ Labels}} & \multicolumn{2}{c}{FedScale Mapping} & \multicolumn{3}{c}{Label-limited Mapping}\\
          \cmidrule{9-13}
          & & & & & & & & \multirow{2}{*}{\makecell{\# of \\samples}} & \multirow{2}{*}{\makecell{Total \\ Learners}} & \multirow{2}{*}{\makecell{\# of\\samples}} & \multirow{2}{*}{\makecell{Total \\ Learners}} & \multirow{2}{*}{\makecell{\# of\\ Labels}} \\
          & & & & & & & & & & & & 
        \\\midrule
        \multirow{2}{*}{\makecell{Image \\ Classification}} & ResNet18~\cite{resnet} \ & CIFAR10~\cite{cifar10}	& 11.4M &  0.01 & 1 & 10 & 10 & N/A & N/A & 50K & 3K & 4
        \\ 
        & ShuffleNet~\cite{shufflenet} & OpenImage~\cite{openimage} &	2.23M & 0.01 & 5 & 30 & 600 & 1.3M & 14K & 1.3M & 3K & 60
        \\ 
        \midrule
         \makecell{Speech \\ Recognition} &  ResNet34~\cite{resnet} & Google Speech~\cite{googlespeech} & 21.5M & 0.005 & 1 & 20 & 35 & 200K & 3K & 200K & 3K & 4
         \\\midrule
            \multirow{2}{*}{\makecell{Natural Language\\ Processing}} & \multirow{2}{*}{Albert~\cite{albert}}  & Reddit~\cite{REDDIT} & 11M & 0.0001 & 5 & 40 & N/A & 42M & 16M  & 8.4M & 3K & N/A
         \\
          &   & Stackoverflow~\cite{mcmahan2017} & 11M &  0.0008 & 5 & 40 & N/A & 43M & 300K & 8.6M & 3K & N/A
         \\
        \bottomrule
    \end{tabular}
    }
    \label{tab:benchmarks}
\end{table*}

\section{Evaluation}
\label{sec:eval}
Our evaluation addresses the following questions:
\begin{itemize}
    \item Is prioritizing learners based on availability beneficial?
    \item Does aggregation of stale updates reduce resource usage and improve model quality?
    \item Should stale updates be weighted differently to fresh updates?
    \item Is \scheme scalable and future-proof?
\end{itemize}

\noindent We summarize the main observations as follows:
\begin{itemize}
    \item \scheme achieves better model quality with up to 2$\times$ less resources compared to existing systems.
    \item \scheme results in quality gains in different scenarios involving both IID and non-IID scenarios.
    \item \scheme's weight scaling boosts the statistical efficiency.
    \item \scheme scales well and is more robust in future scenarios. 
\end{itemize}

\subsection{Experimental Setup}
Our experiments simulate FL benchmarks consisting of learners using real-world device configurations and availability traces. Our experiments capture different scenarios, models, datasets and data distributions as detailed next. We use a cluster of NVIDIA GPUs to interleave the training of the emulated learners. The participants train in parallel on time-multiplexed GPUs using PyTorch v1.8.0.

\smartparagraph{Implementation:} %
We implement \scheme within FedScale~\cite{lai2021fedscale}, a framework for emulation and evaluation of FL systems.
The SAA and IPS components are implemented as Python modules and integrated into FedScale's server aggregation logic and participant selection procedures, respectively.
We defer to \S\ref{sec:integrate} a discussion on deployment considerations and integration with popular FL frameworks.

\smartparagraph{Emulation environment:}
Our results are gathered via emulation in FedScale. This framework comprises three key components:

\noindent 1) The Aggregator selects participants, assigns the task, and handles the aggregation of updates. It employs a Client Manager to track learners' availability and selects the target number of participants in each round. It also employs an Event Monitor which processes system events and invokes the appropriate event handler.

\noindent 2) The Executor runs the learners' logic, loads the corresponding federated dataset, and trains the model using the PyTorch backend. The latency is determined for computation by \textit{\# of samples} $\times$ \textit{latency per sample}, and for communication by \textit{size in bytes} $/$ \textit{bandwidth}. This allows for having a simulated run time using realistic traces.

\noindent 3) The Resource Manager manages the available computational resources (e.g., GPUs). It employs a queue for the learners waiting for training on the computing resource and assigns a learner to the first available computing resource.\footnote{The queue does not affect the simulated time which is maintained by the event monitor which advances a global virtual clock based on the events and their correct time order.}

\smartparagraph{Benchmarks:} We experiment with the benchmarks listed in \cref{tab:benchmarks}, that span several common FL tasks of different scales to cover varied practical scenarios. The datasets consist of hundreds or up to millions of data samples, we refer the reader to \cite{lai2021fedscale,Oort-osdi21} for the description of the datasets. As in~\cite{Oort-osdi21}, by default, FedAvg~\cite{mcmahan2017} is used as the aggregation algorithm for CIFAR10, and YoGi~\cite{YoGi} for other benchmarks.

\smartparagraph{Data partitioning:}
To account for realistic heterogeneous data mappings, we partition the labeled training dataset among the learners using different methods, from easy to hard.
As commonly done in the literature, the baseline case is a random uniform mapping (IID).
Second, we adopt the FedScale data-to-learner mappings~\cite{lai2021fedscale}, which encompass thousands to millions of learners whose data distribution reflects real data sources.\footnote{For instance, images collected from Flickr in OpenImage have an AuthorProfileUrl attribute that can be used to map data learners, though these may not reflect real data mappings in FL scenarios.}
However, upon analyzing the frequency of label appearances across learners in the FedScale data mapping for the Google speech benchmark (c.f.  \cref{fig:labeldist}), we observe that most labels appear at least once on more than $40\%$ of the learners, making this close to a uniform distribution, and thus simplifying training. Similar observations are made for CV benchmarks.

\begin{figure}[!t]
    \centering
    \includegraphics[width=0.49\textwidth]{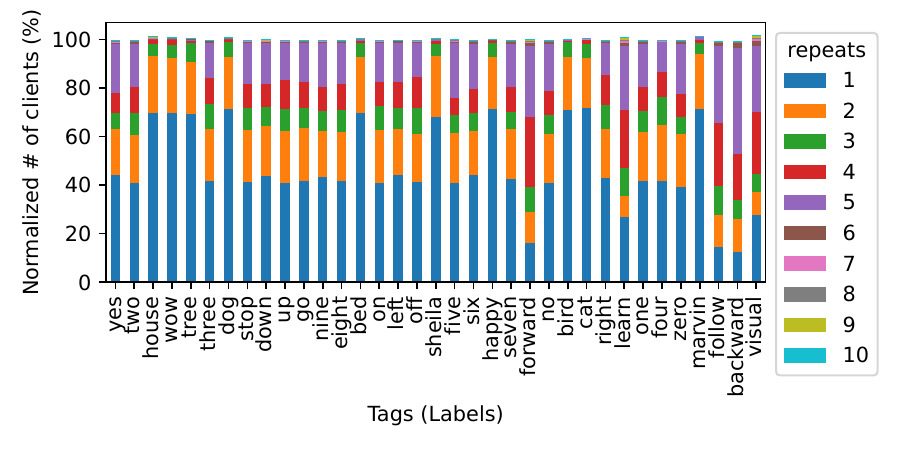}
    \caption{Number of label repetitions across learners.}
    \label{fig:labeldist}
\end{figure}

To consider other realistic heterogeneous data mappings, we introduce \textbf{label-limited mappings} where learners are assigned data samples drawn from a random subset of labels as listed in \cref{tab:benchmarks}, with data samples per learner following particular distributions as follows.
\begin{inparaenum}[L1)]
\item \textbf{Balanced distribution}: using an equal number of samples for each data label on each learner;
\item \textbf{Uniform distribution}: using uniform random assignment of data points to labels on each learner;
\item \textbf{Zipf distribution}: Zipfian distribution with $\alpha=1.95$ to have higher level of label skew (popularity).
\end{inparaenum}

\smartparagraph{System performance of learners: } Learners' hardware performance is assigned at random from profiles of real device measurements from the AI~\cite{AIranking} and MobiPerf~\cite{mobiperf} benchmarks for inference time and network speeds of mobile devices, respectively. AI Benchmark catalogs inference times for popular DNN models (e.g., MobileNet) on a wide range of Android devices (e.g., Samsung Galaxy S20 and Huawei P40). %
The profiles include devices with at least 2GB RAM using WiFi, which matches the common case in FL settings~\cite{yang2018applied,Bonawitz19,Oort-osdi21}.

We show how the distribution of floating-point and quantized inference times from the AI benchmark and device profiles can be clustered into 6 different device configurations demonstrating significant device heterogeneity with a long tail distribution, as shown in \cref{fig:dev-cdf}. \cref{fig:dev-cluster} shows that learners could be grouped into 6 clusters of different device configurations according to their computational capabilities, demonstrating that learners can have highly variable completion time during training.

\begin{figure}[!t]
\captionsetup[subfigure]{justification=centering}
\centering
     \begin{subfigure}[ht]{0.49\columnwidth}
     \includegraphics[width=\linewidth]{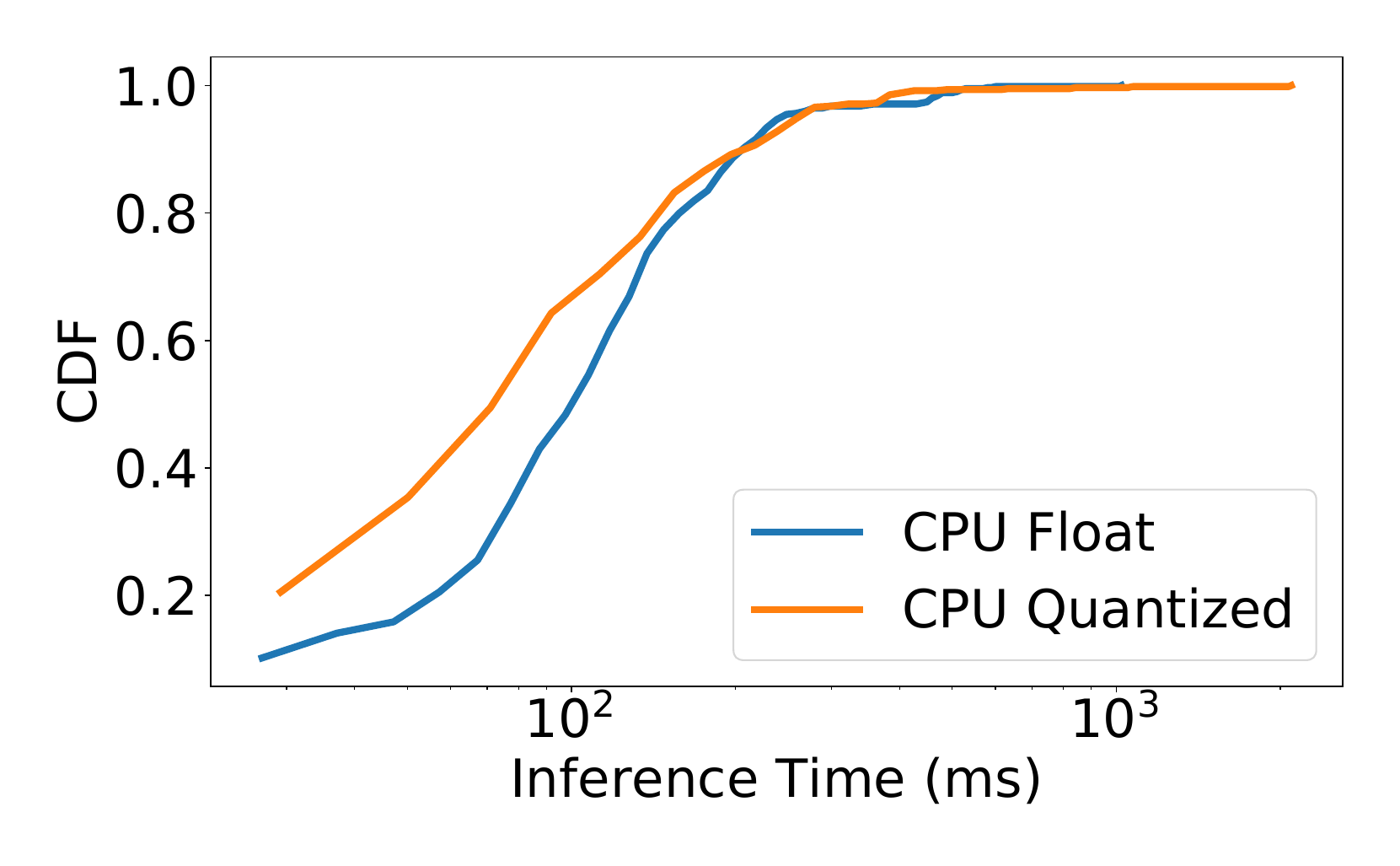}
	\caption{CDF of inference time}
	\label{fig:dev-cdf}
     \end{subfigure}
     \hfill
    \begin{subfigure}[ht]{0.49\columnwidth}
     \includegraphics[width=\linewidth]{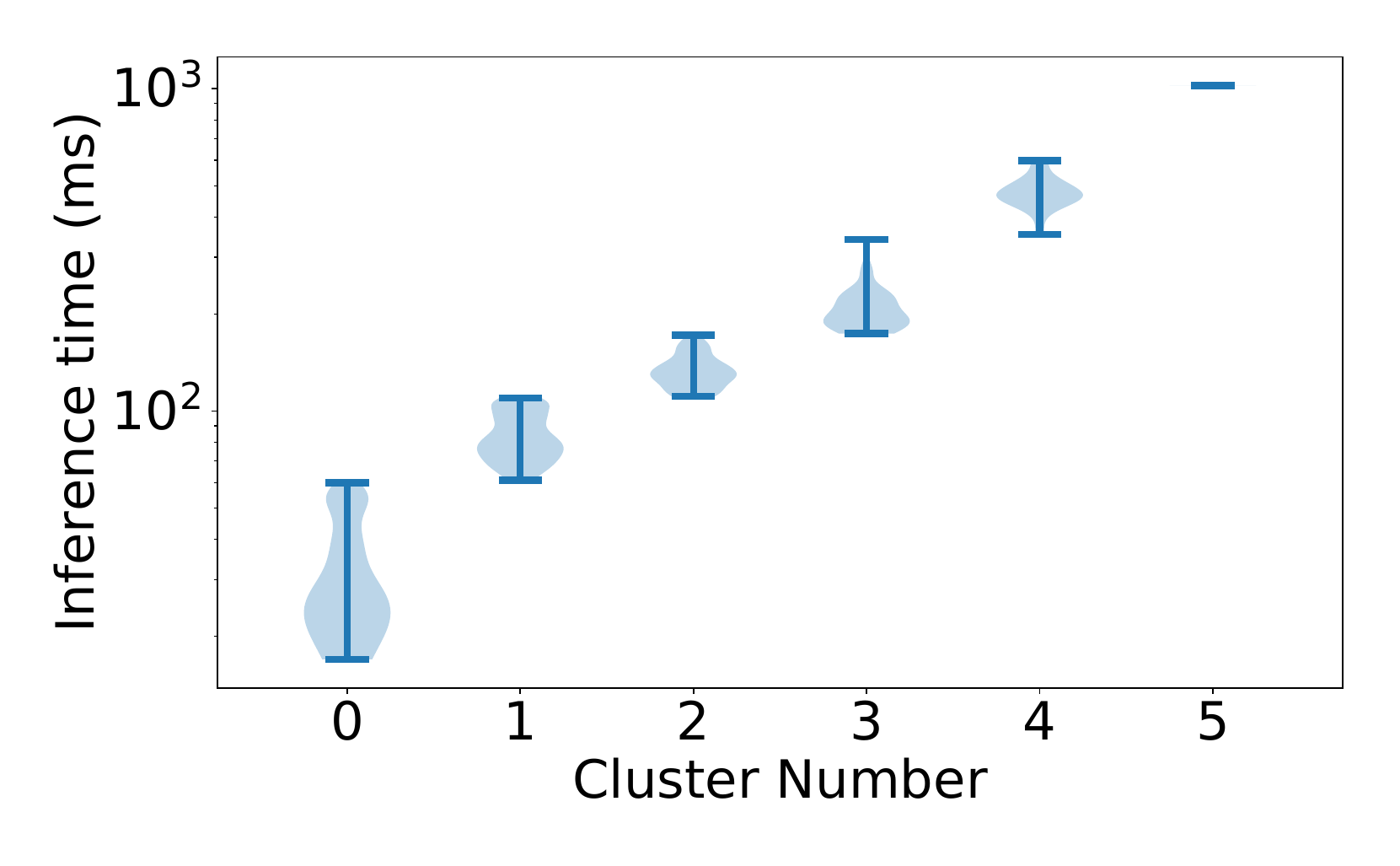} 
	\caption{Clustering of devices}
	\label{fig:dev-cluster}
     \end{subfigure}
     \\
    \begin{subfigure}[ht]{0.49\columnwidth}
     \includegraphics[width=\linewidth]{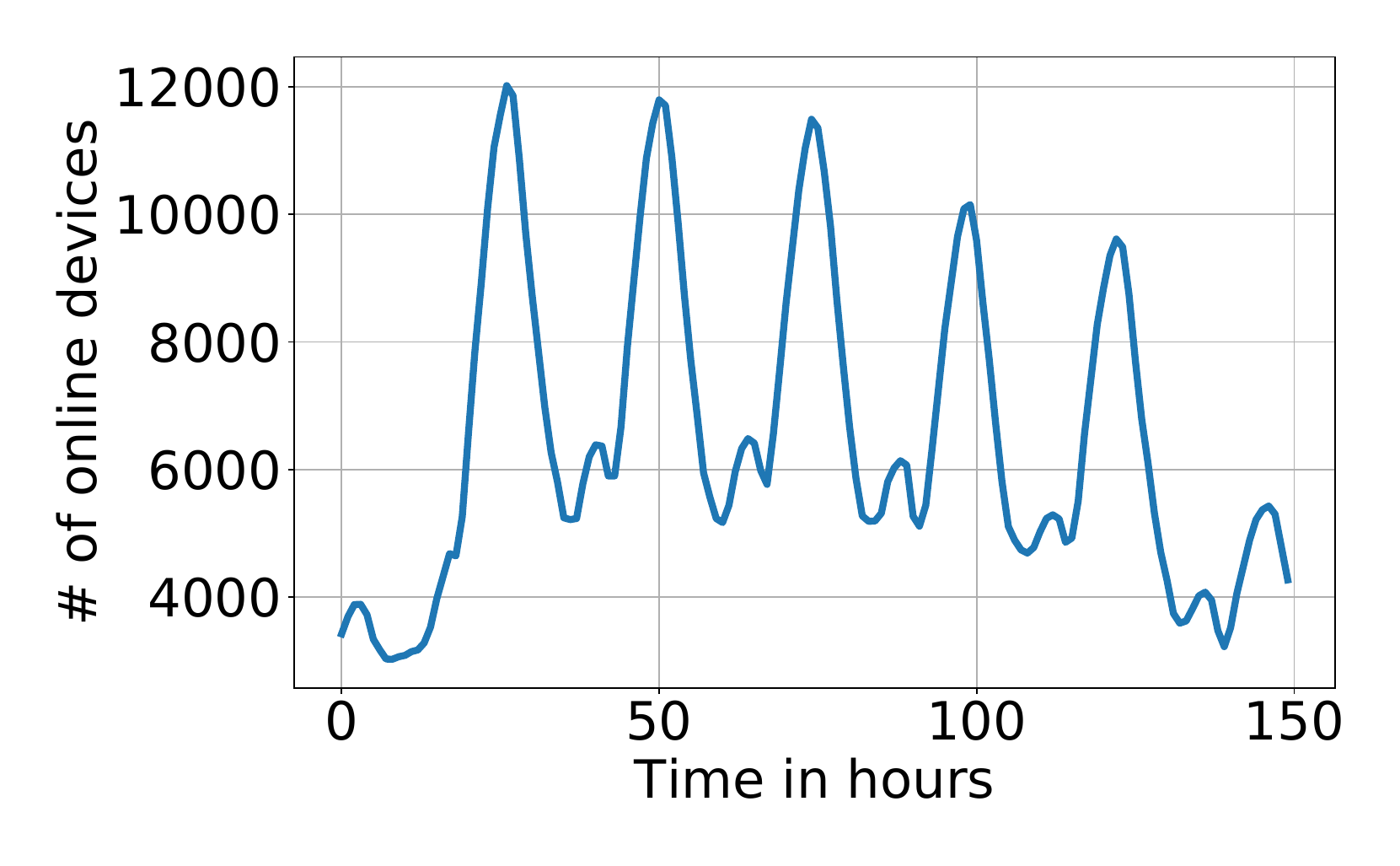}
	\caption{Available learners over time}
	\label{fig:online-clients}
     \end{subfigure}
     \hfill
    \begin{subfigure}[ht]{0.49\columnwidth}
     \includegraphics[width=\linewidth]{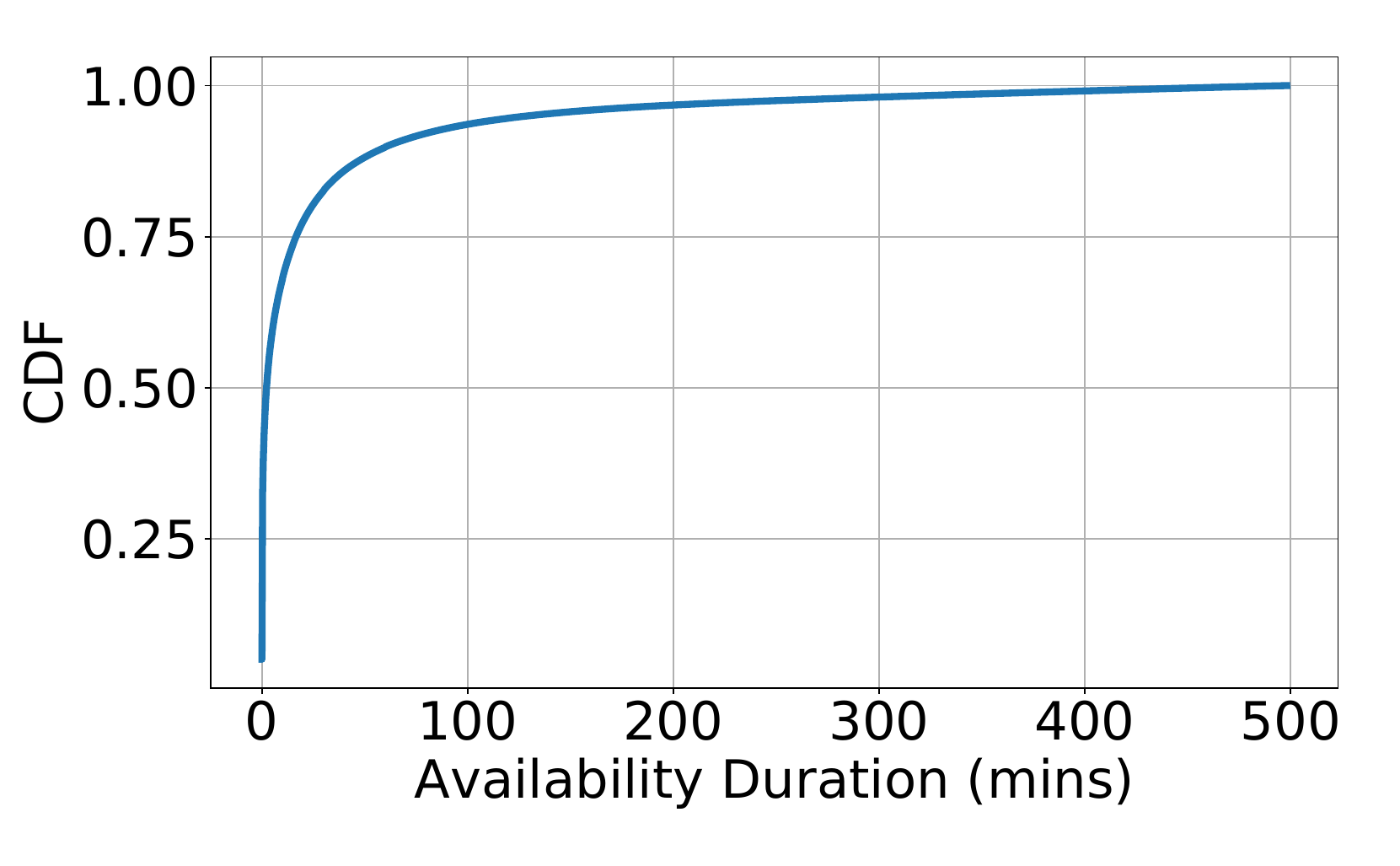} 
	\caption{CDF of availability period}
	\label{fig:online-cdf}
     \end{subfigure}
\caption{Computational [(a),(b)] and availability [(c),(d)] characteristics of learners' profiles used in experiments.} %
\label{fig:dev-conf}
\end{figure}

\begin{figure}[!t]
\captionsetup[subfigure]{justification=centering}
\centering
    \includegraphics[width=1\linewidth]{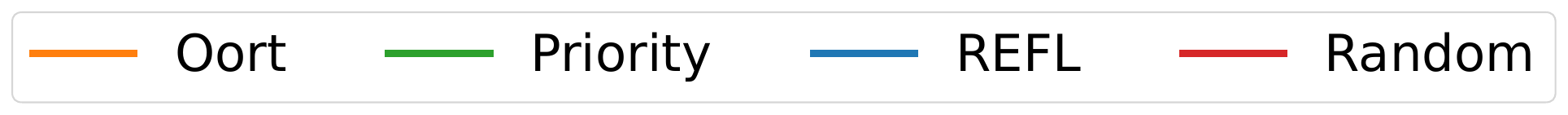}
    \\
     \begin{subfigure}[ht]{0.49\linewidth}
     \includegraphics[width=\linewidth]{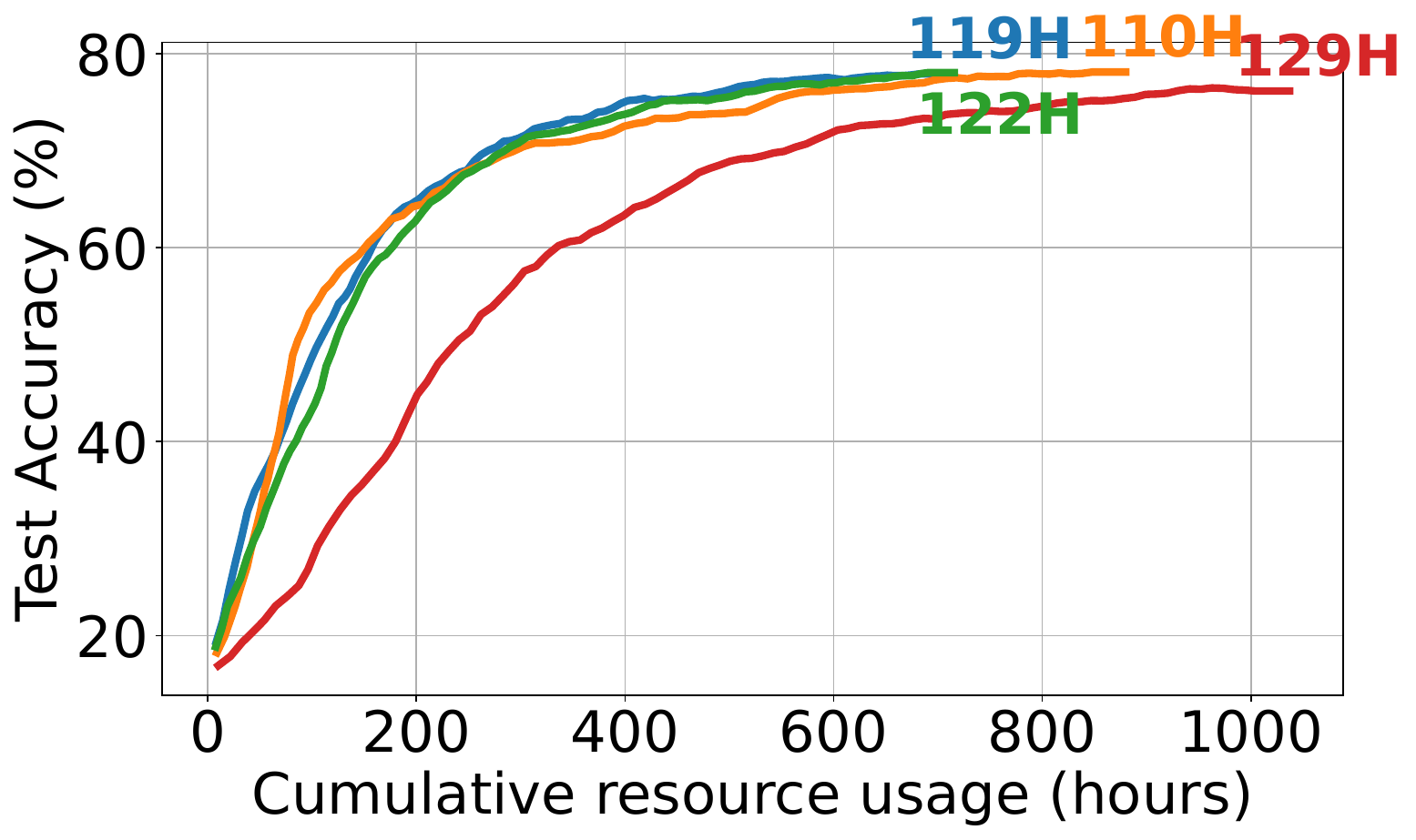}
	\caption{FedScale's data mapping}
	\label{fig:avail-exp1-part0}
     \end{subfigure}
     \hfill
      \begin{subfigure}[ht]{0.49\linewidth}
     \includegraphics[width=\linewidth]{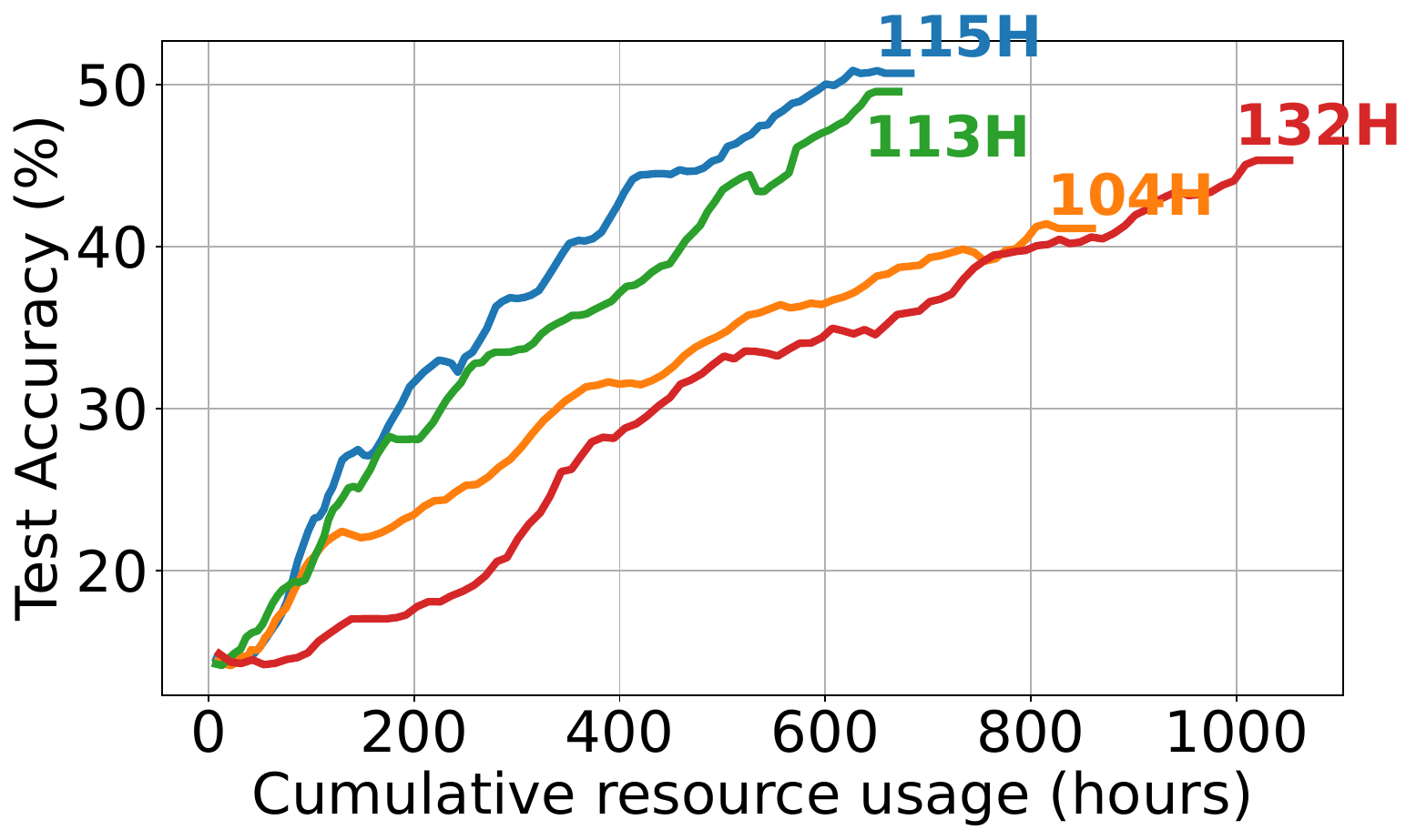} 
	\caption{Label-limited (balanced)}
	\label{fig:avail-exp1-part3}
     \end{subfigure}
     \\
    \begin{subfigure}[ht]{0.49\linewidth}
     \includegraphics[width=\linewidth]{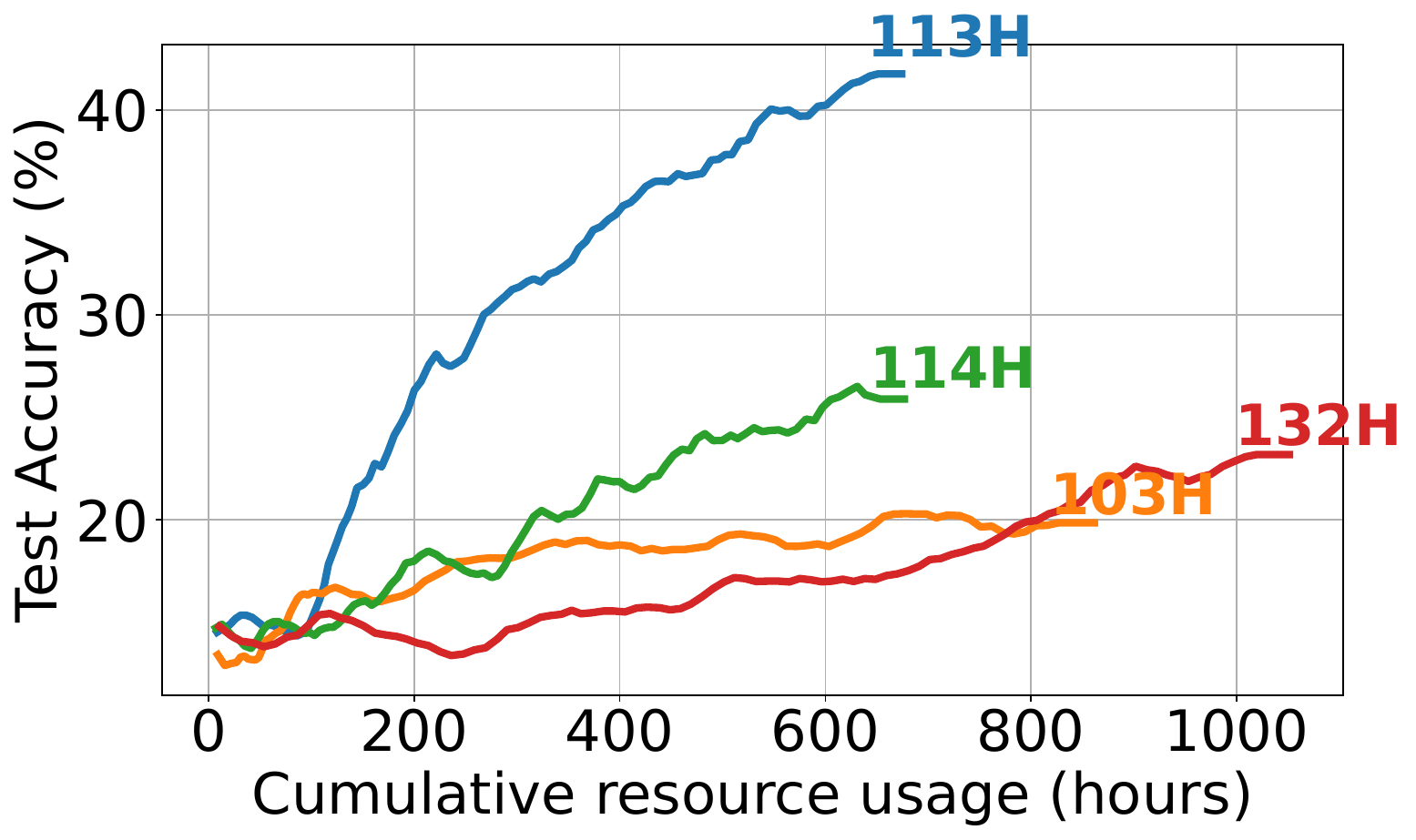} 
	\caption{Label-limited (uniform)}
	\label{fig:avail-exp1-part1}
     \end{subfigure}
     \hfill
         \begin{subfigure}[ht]{0.49\linewidth}
     \includegraphics[width=\linewidth]{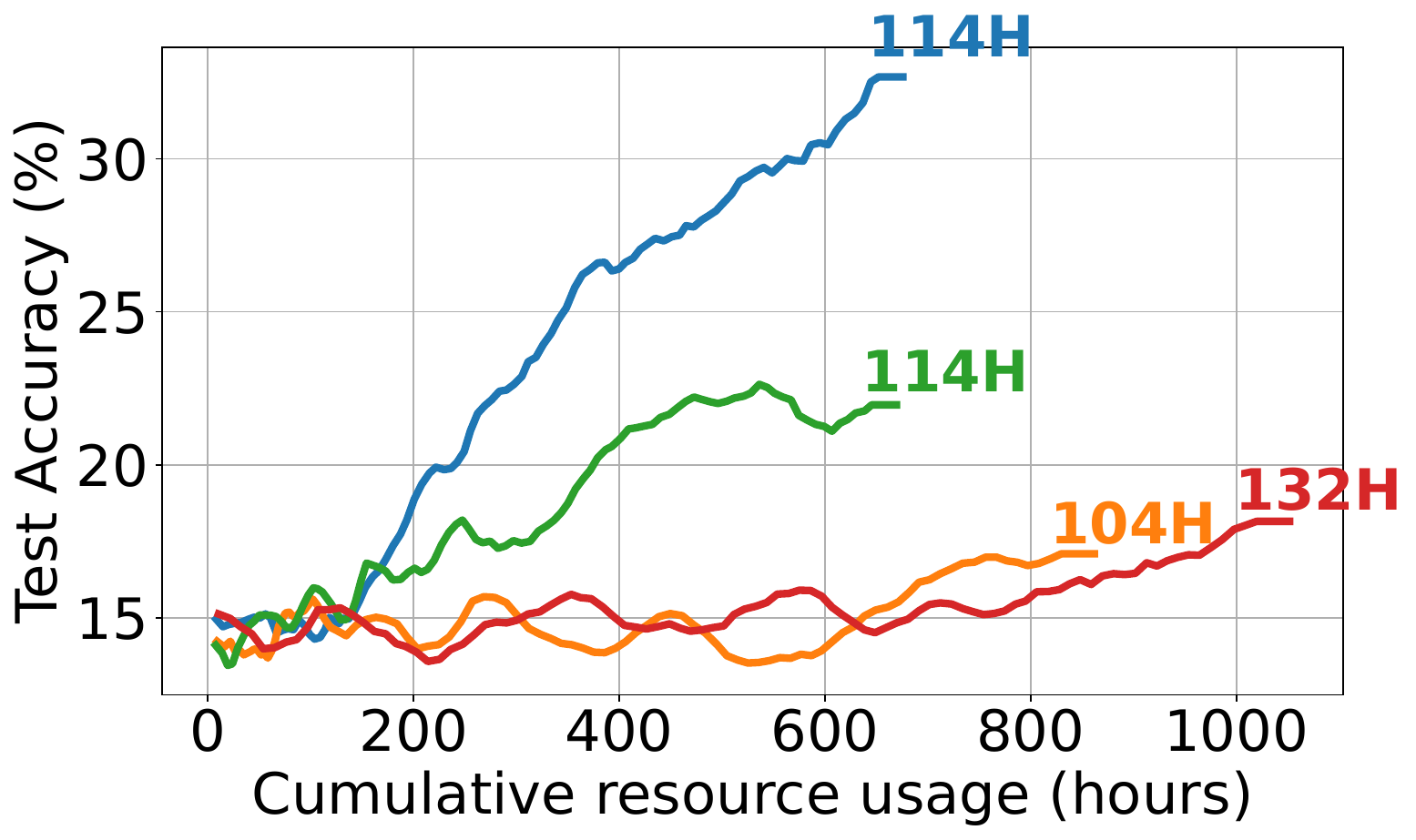}
	\caption{Label-limited (Zipfian)}
	\label{fig:avail-exp1-part2}
     \end{subfigure}
     \\
     \begin{subfigure}[ht]{0.49\linewidth}
     \includegraphics[width=\linewidth]{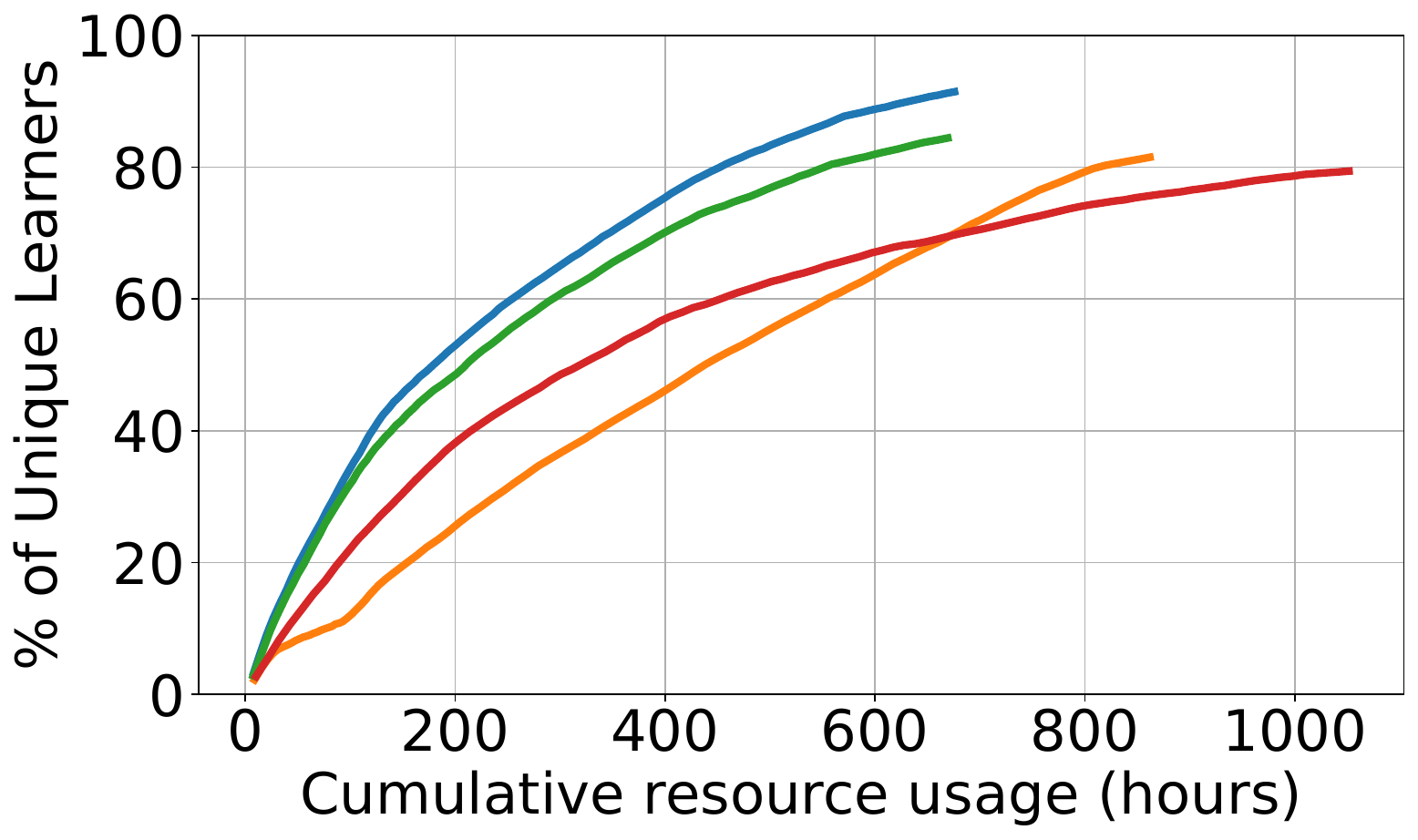} 
	\caption{Percent. of unique learners}
	\label{fig:avail-exp1-uniquelearners}
     \end{subfigure}
     \hfill
     \begin{subfigure}[ht]{0.49\linewidth}
     \includegraphics[width=\linewidth]{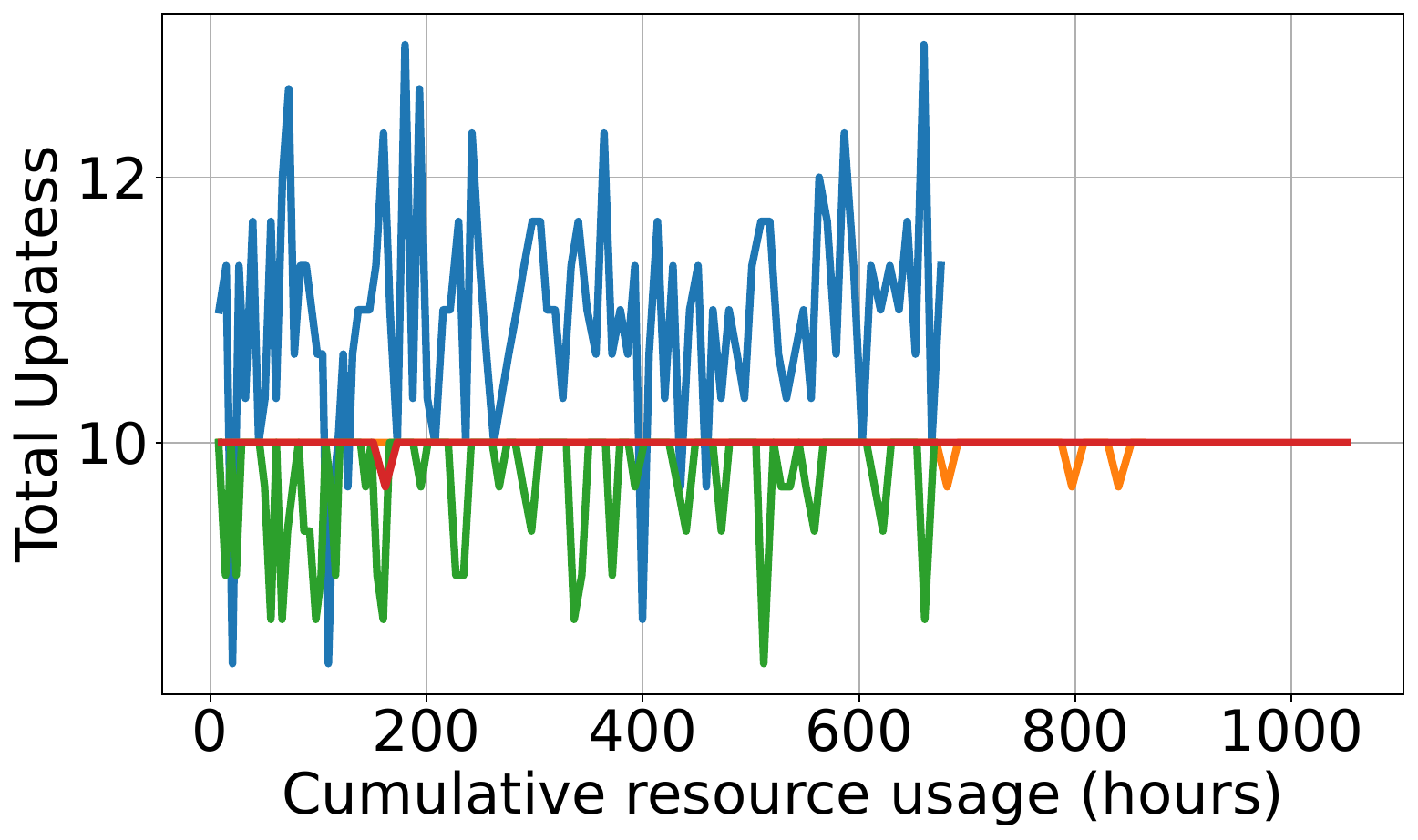}
     \caption{\# of aggregated updates}
	\label{fig:avail-exp1-totalupdates}
     \end{subfigure}
\caption{Training performance comparison under  \emph{\bf OC+DynAvail} across different data mappings.}
\label{fig:avail-exp-google}
\end{figure}

\begin{figure*}[!h]
\captionsetup[subfigure]{justification=centering}
\centering
      \begin{subfigure}[ht]{0.32\linewidth}
     \includegraphics[width=\linewidth]{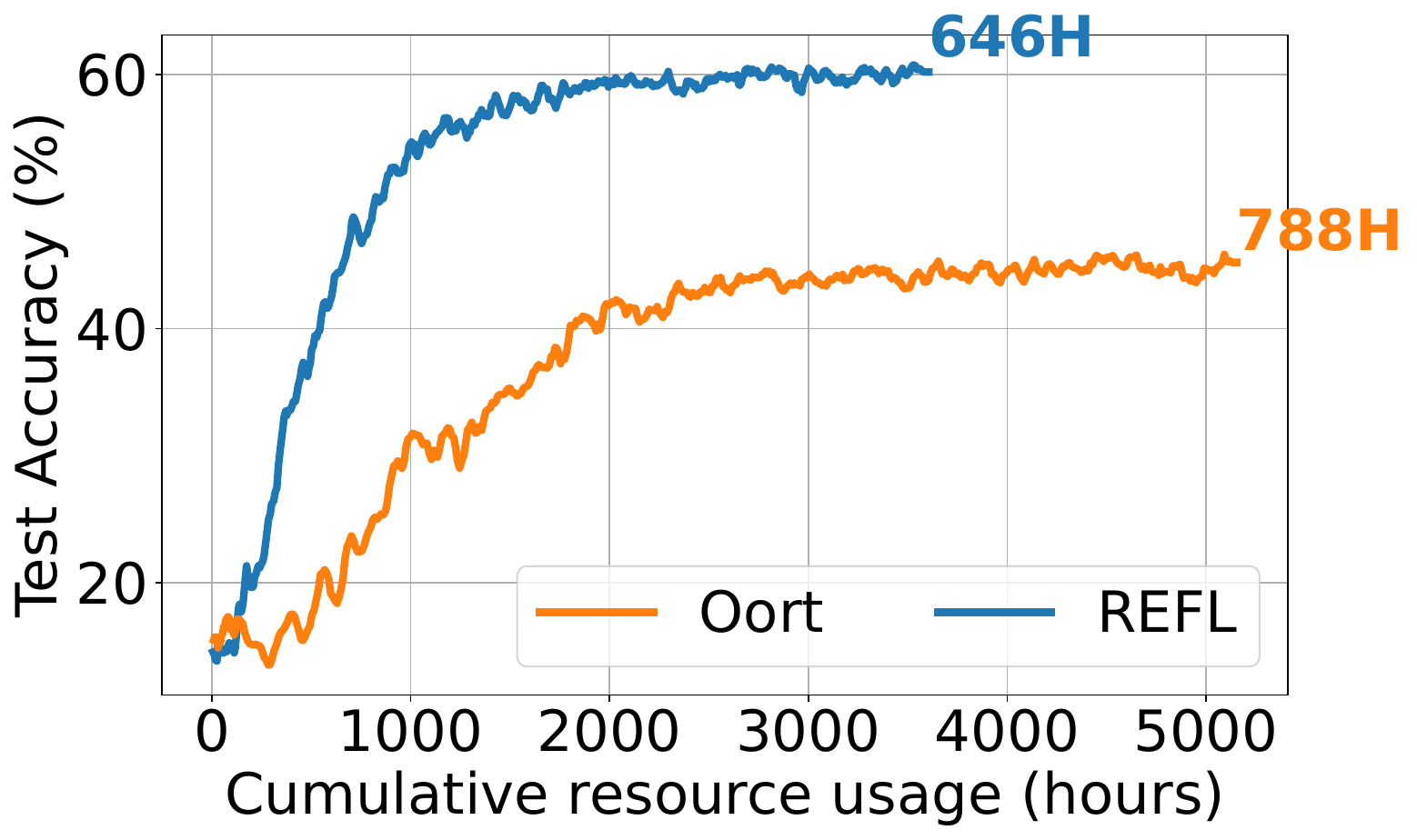}  
	\caption{Balanced}
	\label{fig:avail-exp1-part3-long}
     \end{subfigure}
     \hfill
    \begin{subfigure}[ht]{0.32\linewidth}
     \includegraphics[width=\linewidth]{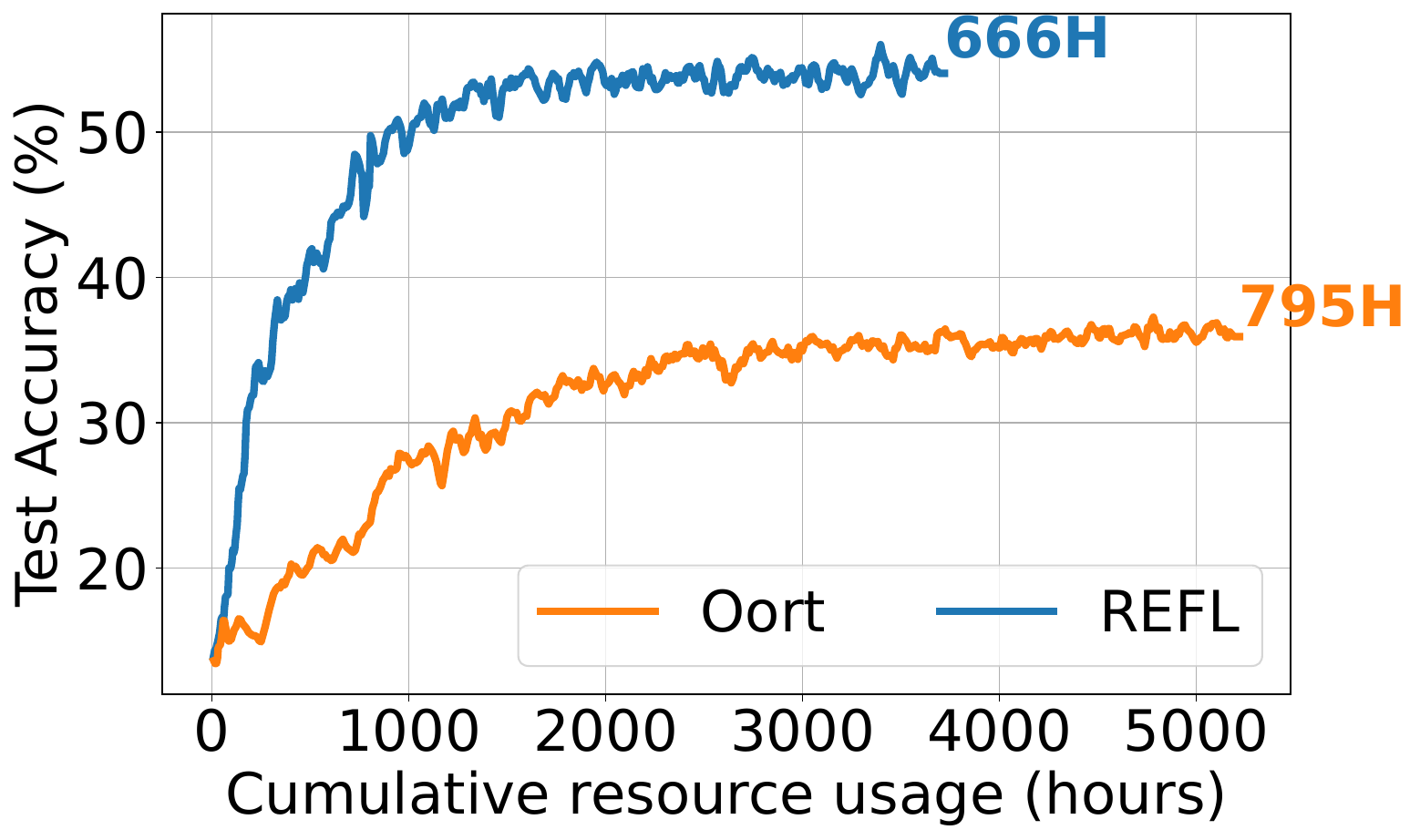} 
	\caption{Uniform}
	\label{fig:avail-exp1-part1-long}
     \end{subfigure}
     \hfill
         \begin{subfigure}[ht]{0.32\linewidth}
     \includegraphics[width=\linewidth]{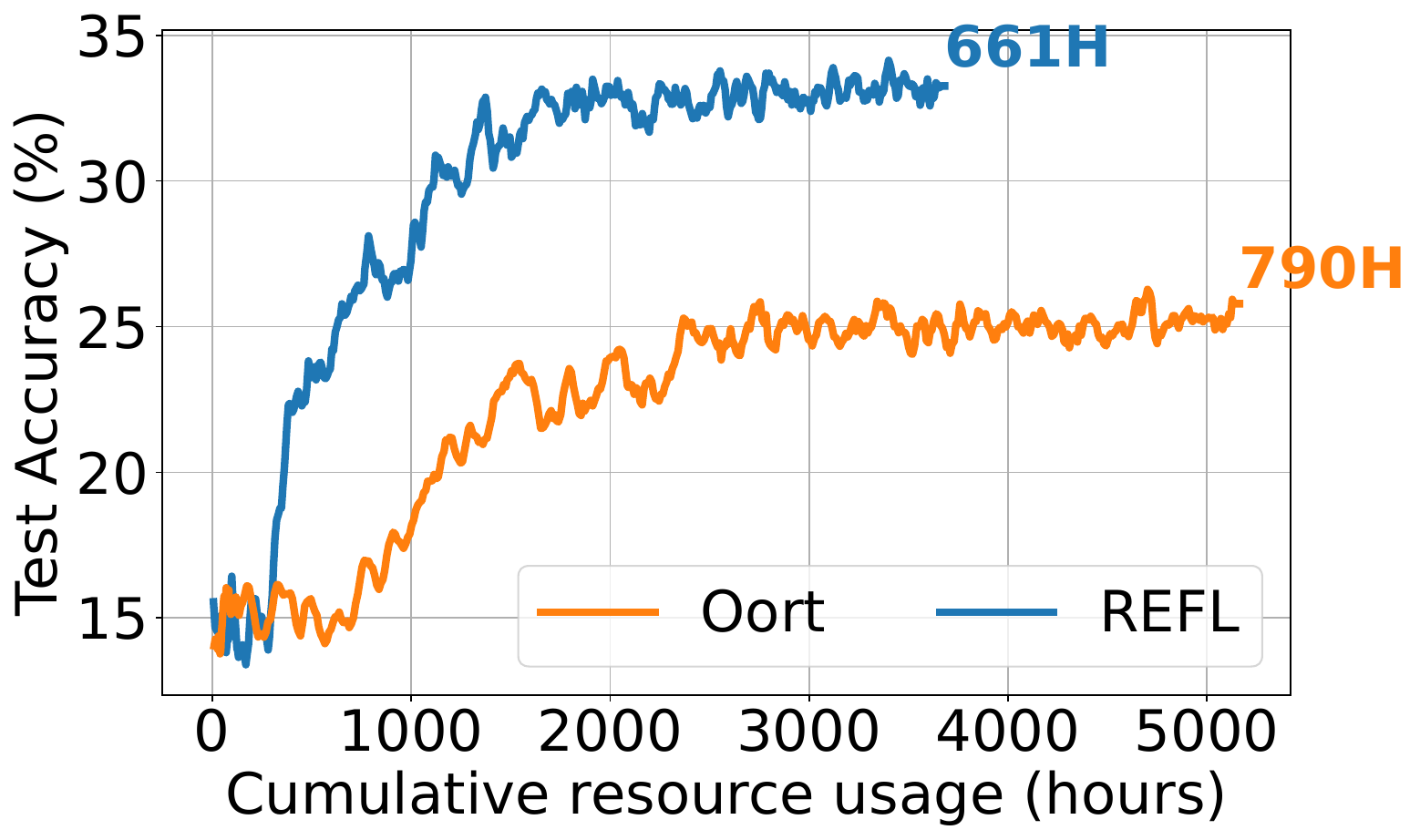}
	\caption{Zipfian}
	\label{fig:avail-exp1-part2-long}
     \end{subfigure}
\caption{Training convergence comparison under  \emph{\bf OC+DynAvail} across the different Label-limited (non-iid) data mappings.}
\label{fig:avail-exp-google-long}
\end{figure*}

\smartparagraph{Availability dynamics of learners: } We use a trace of 136K mobile users from different countries over a period of 1-week~\cite{yang2020heterogeneityaware}. The trace contains $\approx$180 million entries for events such as connecting to WiFi, charging the battery, and (un)locking the screen. Availability is defined as when a device is plugged to a charger and connected to the network, similar to~\cite{tff,Oort-osdi21}.

We see that learners exhibit variations (and cyclic behavior) and most learners stay available for only a few minutes. We extract availability dynamics from the user behavior trace in~\cite{yang2020heterogeneityaware}. \cref{fig:online-clients} shows that the number of available learners over time varies significantly and they exhibit a diurnal pattern over the days of the week where large numbers of learners are mostly available (i.e., charging) during the night. \cref{fig:online-cdf} shows a CDF of the length of learners' availability slots which exhibits a very long tail. Most clients (up to 70\%) are available for less than 10 minutes.

\smartparagraph{Hyper-parameter settings:} The FL and learning hyper-parameters were the default values set by the FedScale framework and no further tuning was done. The common FL hyper-parameters were the same for all methods in the comparison. We used the recommended parameter settings for the evaluated methods (i.e., Oort~\cite{Oort-osdi21} and SAFA~\cite{Safa-Wu2021}).

\smartparagraph{\scheme parameters:} Unless otherwise stated, no maximum threshold is applied to staleness when incorporating updates. We set $\alpha=0.25$ for the moving average of round duration which is chosen to give more weight to the most recent round duration values. We set $\beta=0.35$ for the weight of stale updates in \cref{eq:hybrid} to favour dampening over scaling the stale updates. We tested, within our experimental capacity, different values and found the aforementioned values worked well. %
We leave a detailed sensitivity analysis and ablation study of hyper-parameters to future work.

\smartparagraph{Availability prediction model:} In the experiments, we assume the model has 90\% accuracy for future availability (i.e., 1 out of 10 selections is a false positive), which matches the model accuracy obtained from training a simple linear model on real-world trace as detailed in~\cref{subsec:results}.  %

\begin{figure}[!t]
\captionsetup[subfigure]{justification=centering}
\centering
     \begin{subfigure}[ht]{0.49\linewidth}
      \includegraphics[width=\linewidth]{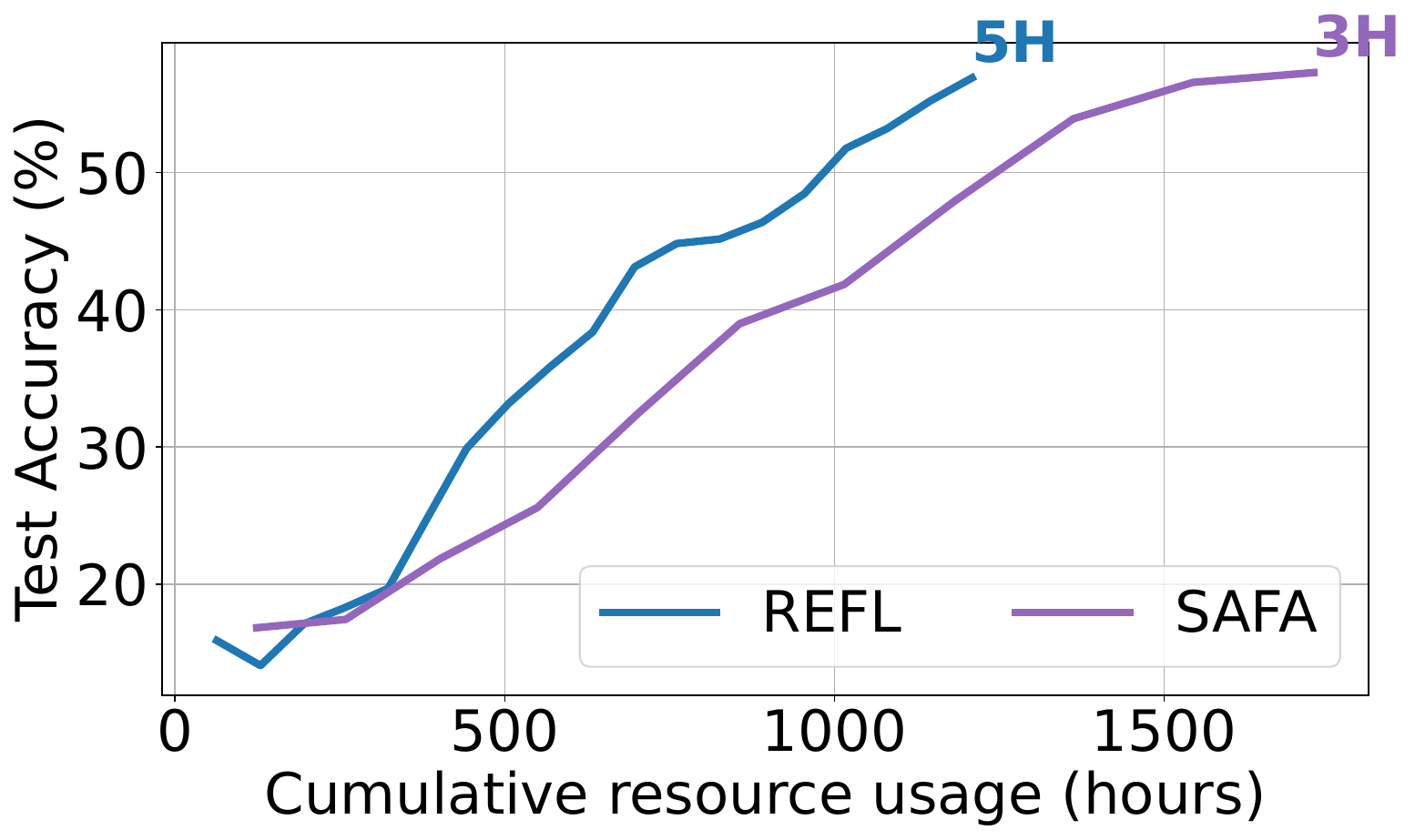}
	\caption{Uniform mapping (iid)}
	\label{fig:safa-real}
     \end{subfigure}
     \hfill
    \begin{subfigure}[ht]{0.49\linewidth}
     \includegraphics[width=\linewidth]{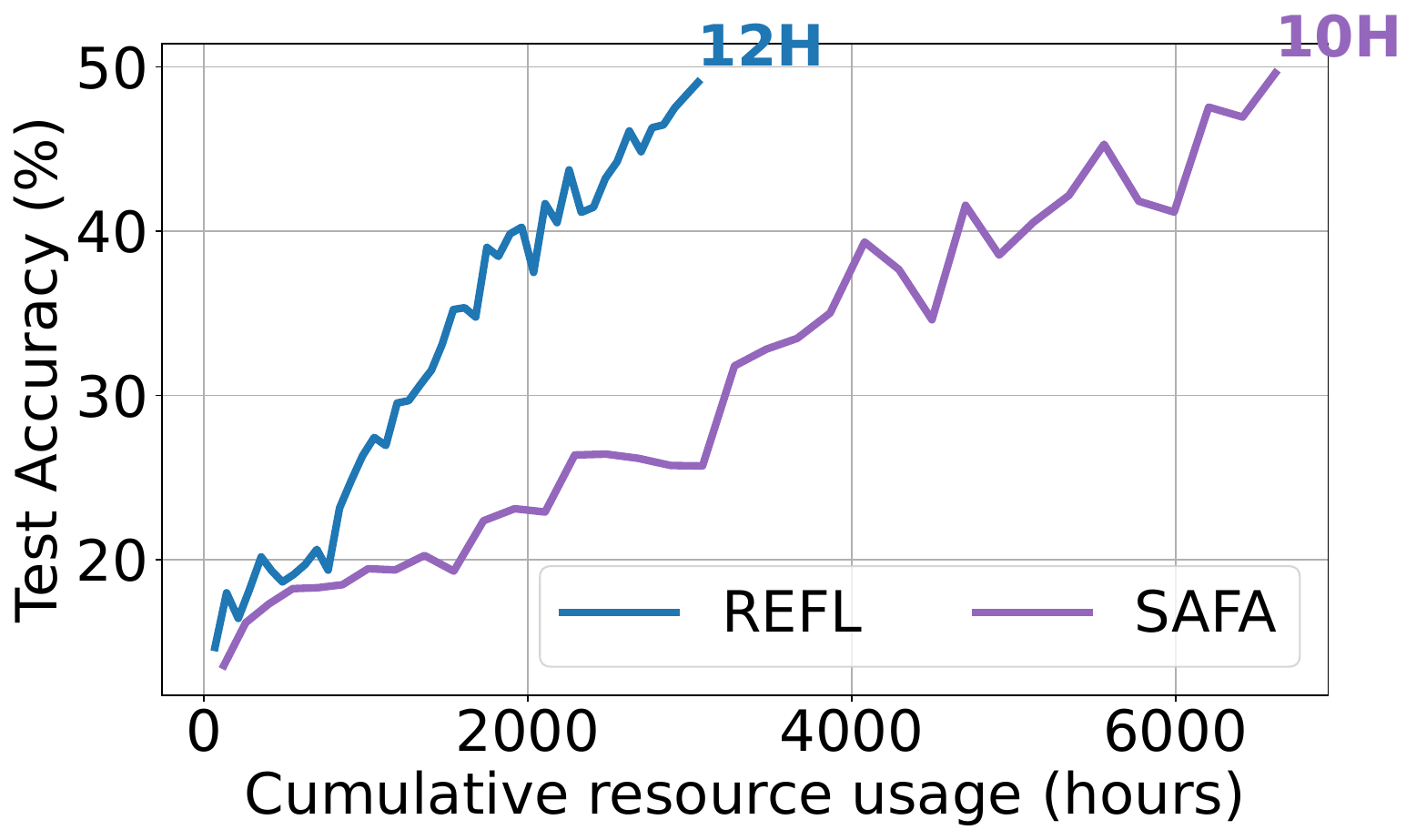}
	\caption{Label-limited (uniform)}
	\label{fig:safa-limited}
     \end{subfigure}
\caption{Comparison against SAFA.}
\label{fig:safa}
\end{figure}

\smartparagraph{Experimental scenarios:} We consider two experimental settings used in the literature:
\begin{enumerate}
\item \emph{\bf OC:} the FL server over-commits the target number of participants $N_t$ by 30\% and waits for the updates from $N_t$ participants (as in~\cite{Oort-osdi21,lai2021fedscale});
\item \emph{\bf DL:} the FL server chooses a target number of participants $N_t$ and aggregates any number of updates received before the end of a pre-set reporting deadline (as in~\cite{yang2020heterogeneityaware, Bonawitz19}).
\end{enumerate}
Unless otherwise mentioned, the target number of participants is 10; each experiment is repeated 3 times with different sampling seeds and the average of the three runs is shown. The experiments use $\approx$13K hours of GPU time.

\subsection{Experimental Results}
\label{subsec:results}

We evaluate the amount of learner resources (and run time) spent to achieve a certain test accuracy (\emph{lower resources and lower run time are better}).
Since our results are based on emulation, we quantify resource usage using the time accumulated at every learner as a proxy.
In particular, the cumulative resources for each round are computed as the cumulative sum of computation and communication time for all participants in the round.

Here, we focus on Google Speech and present the results of other benchmarks and experimental settings in \S\ref{subsec:moreresults}. \cref{tab:baselinequality} show the baseline accuracy of the benchmarks in a semi-centralized training setting (i.e., data-parallel) where the dataset is uniformly divided among 10 learners that participate in every training round.  %

\begin{figure}[!t]
\captionsetup[subfigure]{justification=centering}
\centering
   \includegraphics[width=1\linewidth]{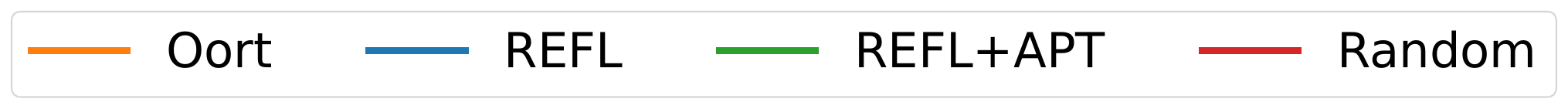}
       \begin{subfigure}[ht]{0.49\linewidth}
     \includegraphics[width=\linewidth]{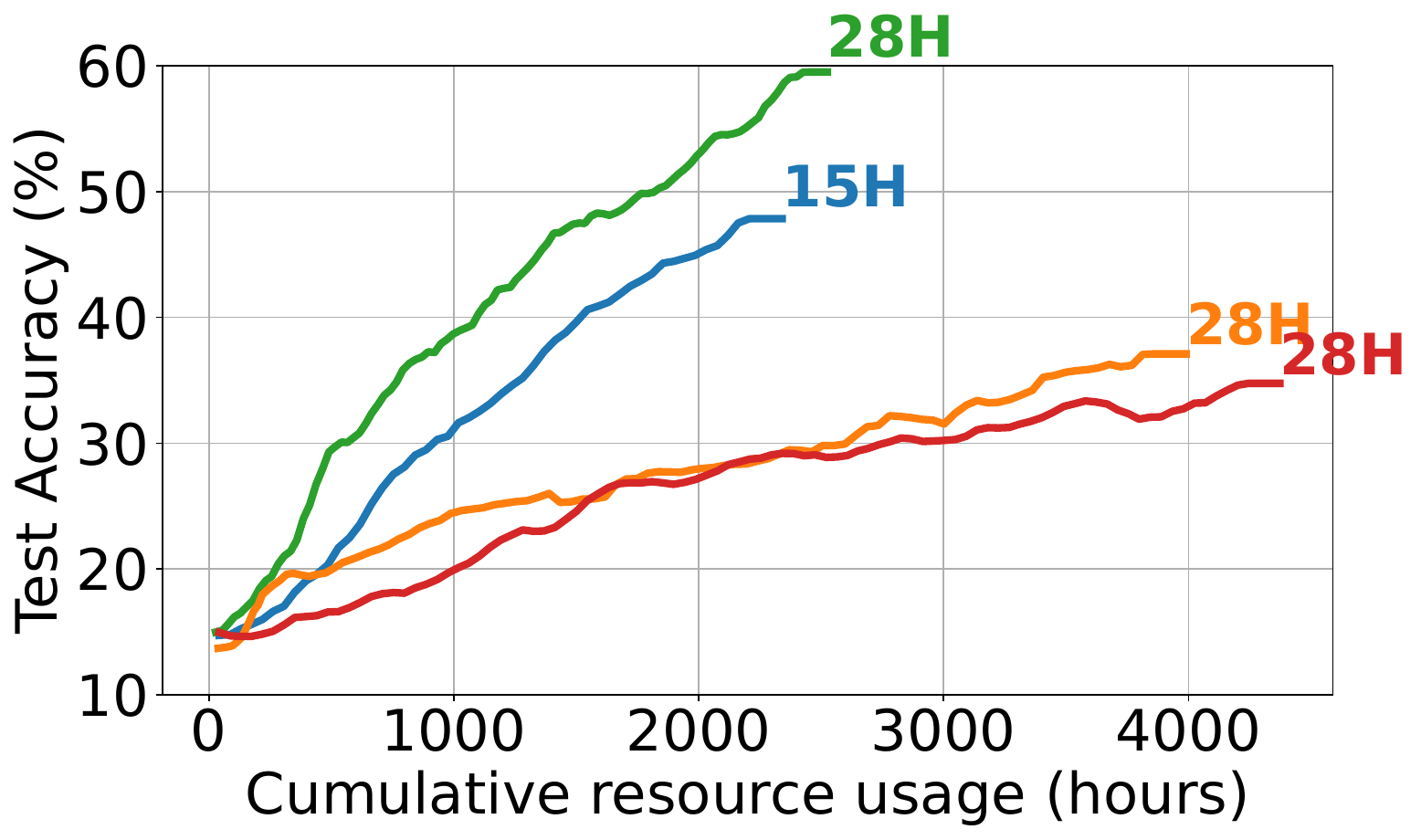} 
	\caption{AllAvail}
	\label{fig:APS-all}
     \end{subfigure}
     \hfill
       \begin{subfigure}[ht]{0.49\linewidth}
     \includegraphics[width=\linewidth]{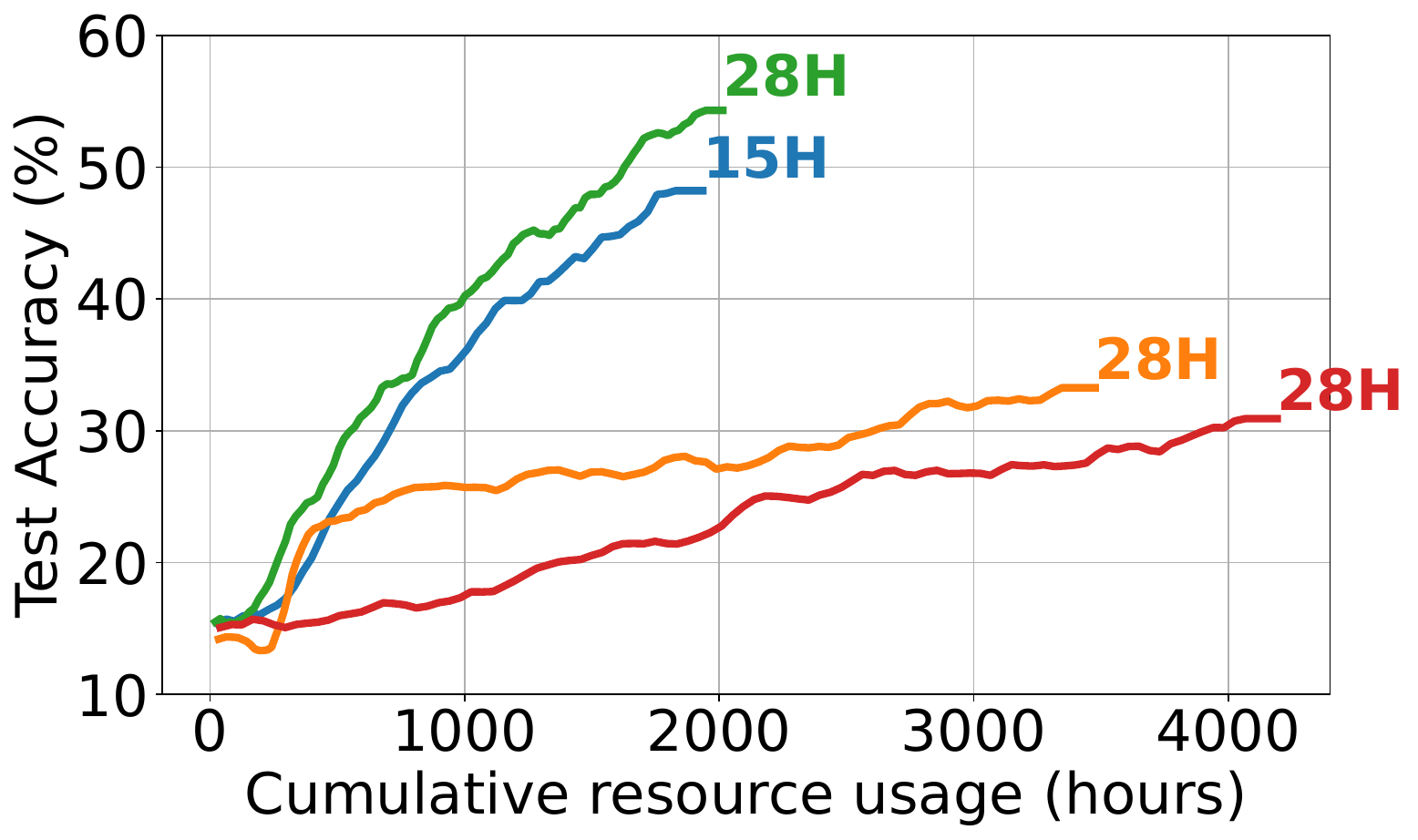}
	\caption{DynAvail}
	\label{fig:APS-dyn}
     \end{subfigure}
\caption{Training performance of \scheme with Adaptive Participant Target using 50 participants in OC and different availability.}
\label{fig:APS}
\end{figure}

\begin{figure}[!t]
\captionsetup[subfigure]{justification=centering}
\centering
    \includegraphics[width=0.7\linewidth]{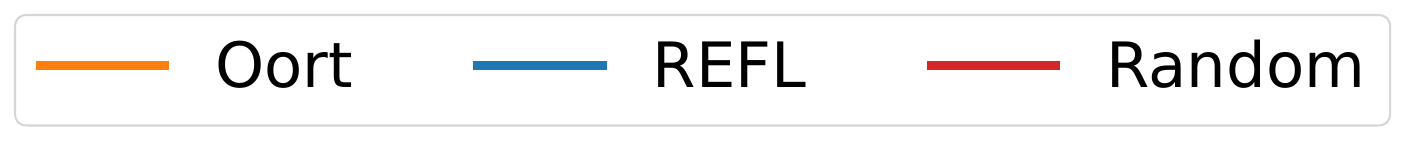}
    \\
     \begin{subfigure}[ht]{0.49\linewidth}
     \includegraphics[width=\linewidth]{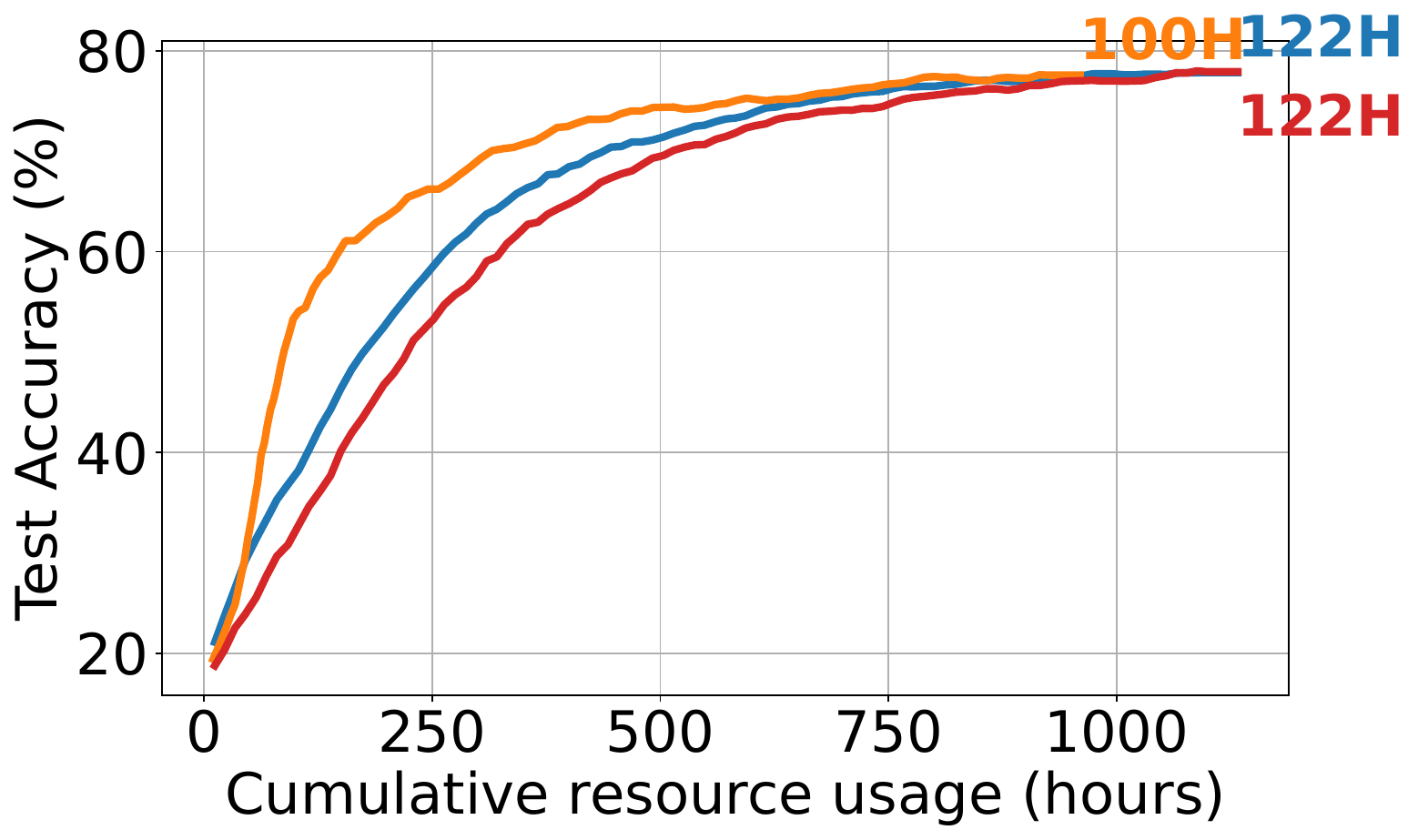}
	\caption{FedScale data mapping}
	\label{fig:gs-exp1-part0}
     \end{subfigure}
     \hfill
    \begin{subfigure}[ht]{0.49\linewidth}
     \includegraphics[width=\linewidth]{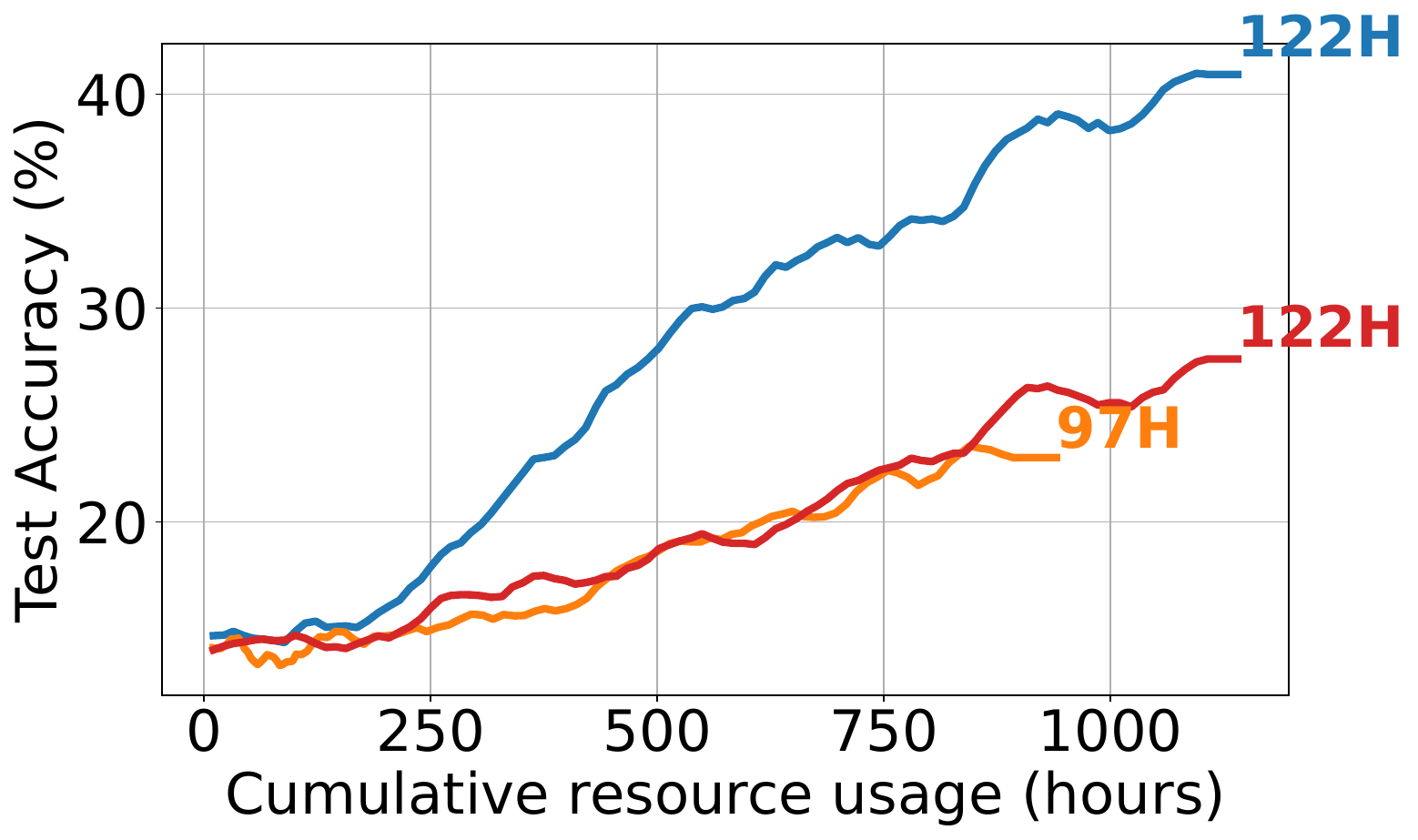} 
	\caption{Label-limited (uniform)}
	\label{fig:gs-exp1-part1}
     \end{subfigure}
\caption{Performance comparison of \scheme vs Oort vs Random in the \emph{\bf OC+AllAvail} experimental setting.}
\label{fig:stale-exp1-google}
\end{figure}

\begin{figure*}[!t]
\captionsetup[subfigure]{justification=centering}
\centering
    \includegraphics[width=0.5\linewidth]{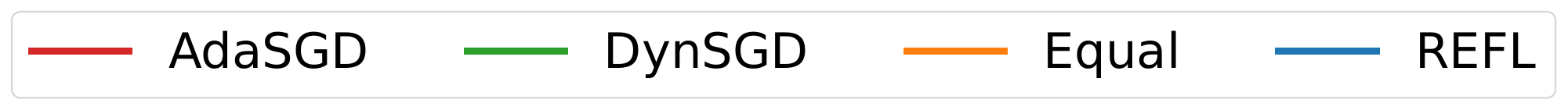}
    \\
     \begin{subfigure}[t]{0.19\linewidth}
     \includegraphics[width=\linewidth]{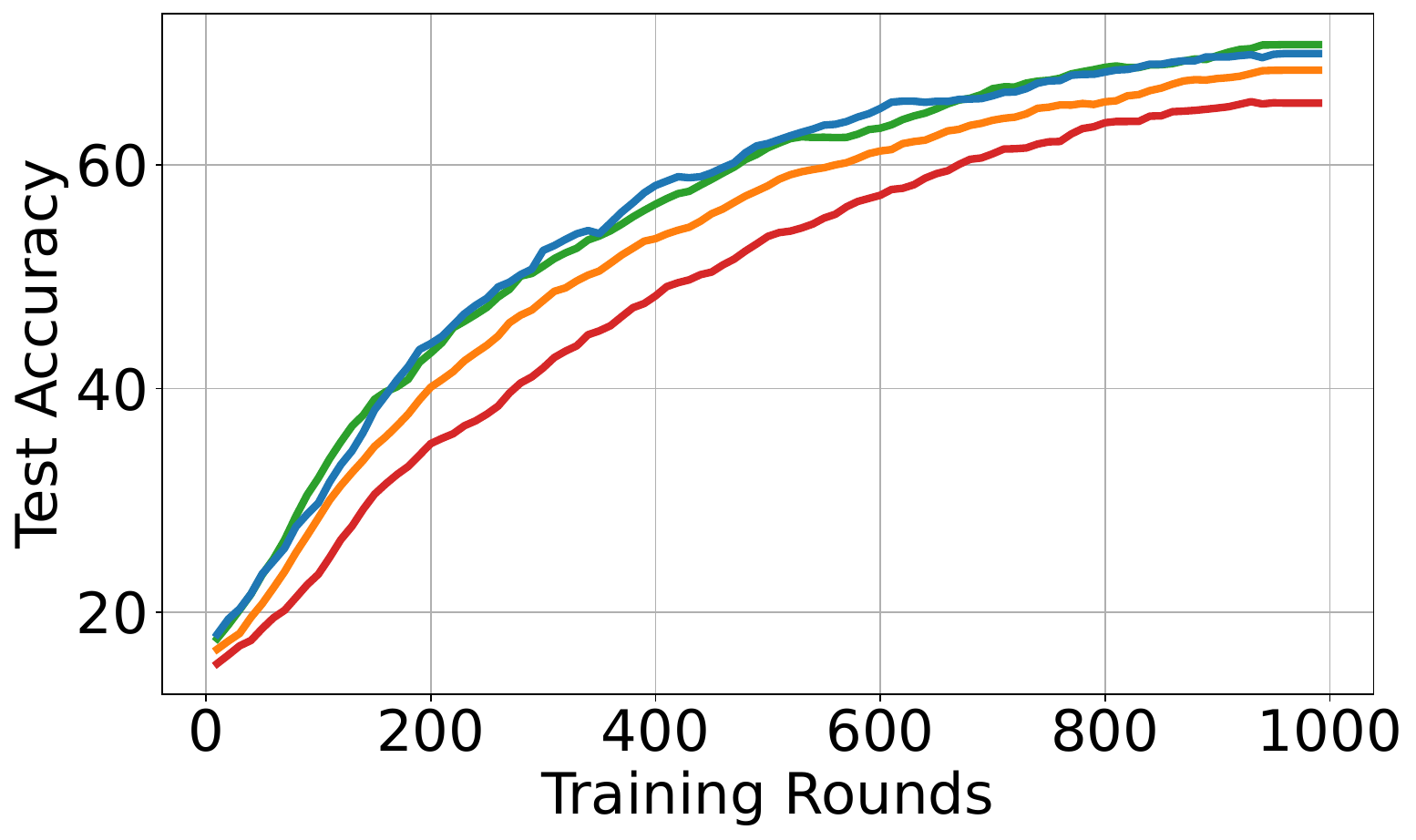}
	\caption{Uniform random\\ (IID)}
	\label{fig:scaling-uniform}
     \end{subfigure}
     \hfill
     \begin{subfigure}[t]{0.19\linewidth}
     \includegraphics[width=\linewidth]{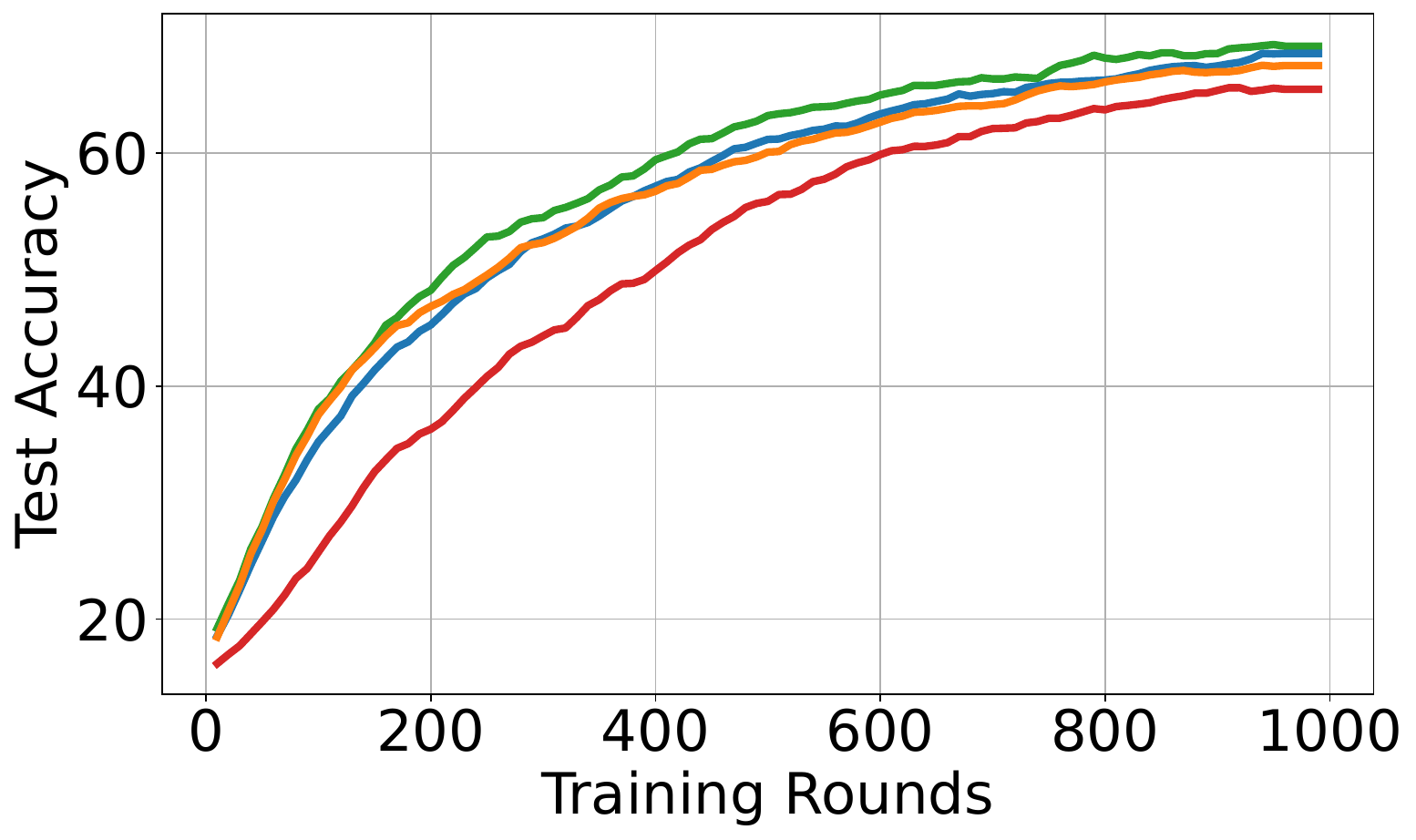}
	\caption{FedScale mapping}
	\label{fig:scaling-real}
     \end{subfigure}
     \hfill
    \begin{subfigure}[t]{0.19\linewidth}
     \includegraphics[width=\linewidth]{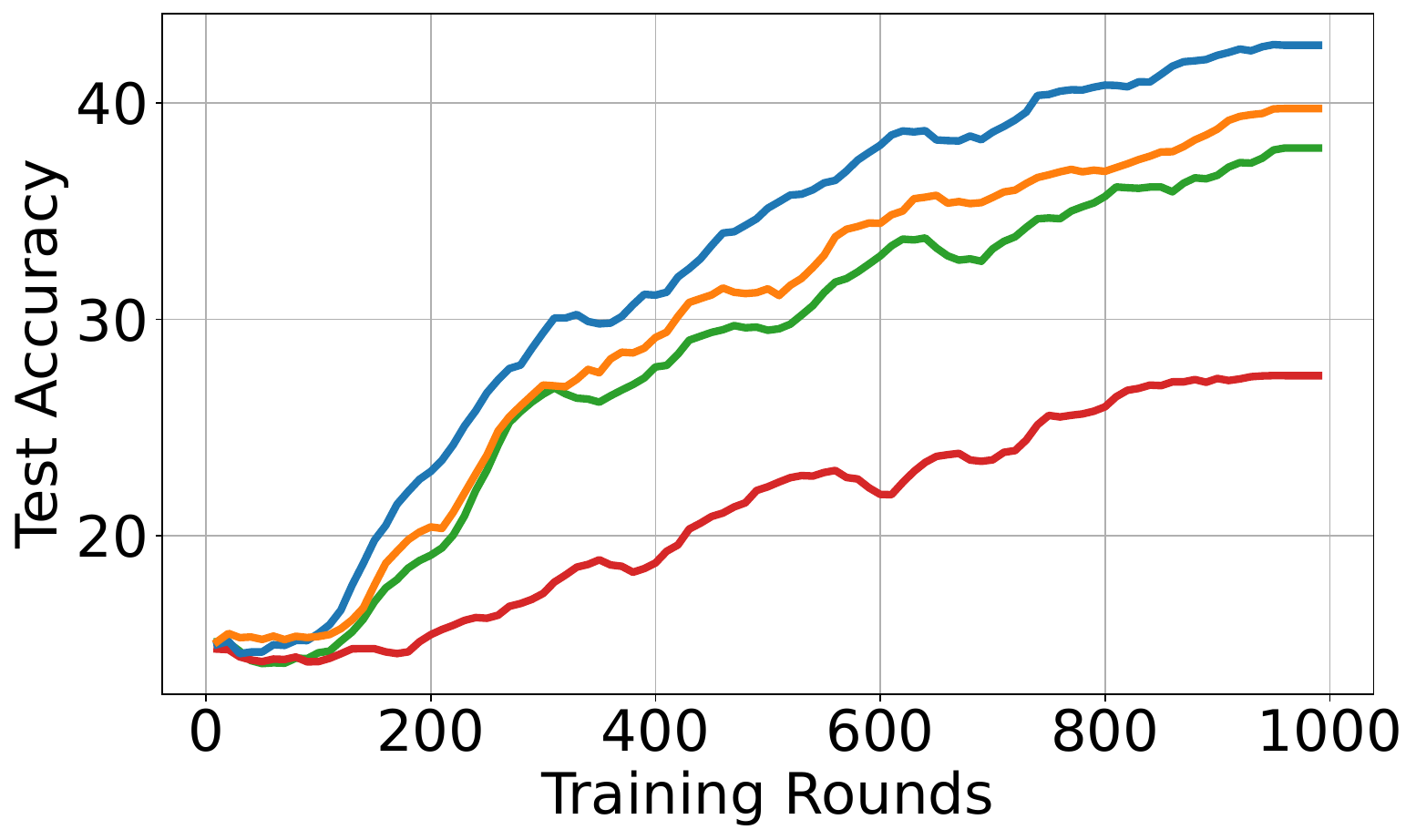} 
	\caption{Label-limited (uniform)}
	\label{fig:scaling-limited-uniform}
     \end{subfigure}
     \hfill
    \begin{subfigure}[t]{0.19\linewidth}
     \includegraphics[width=\linewidth]{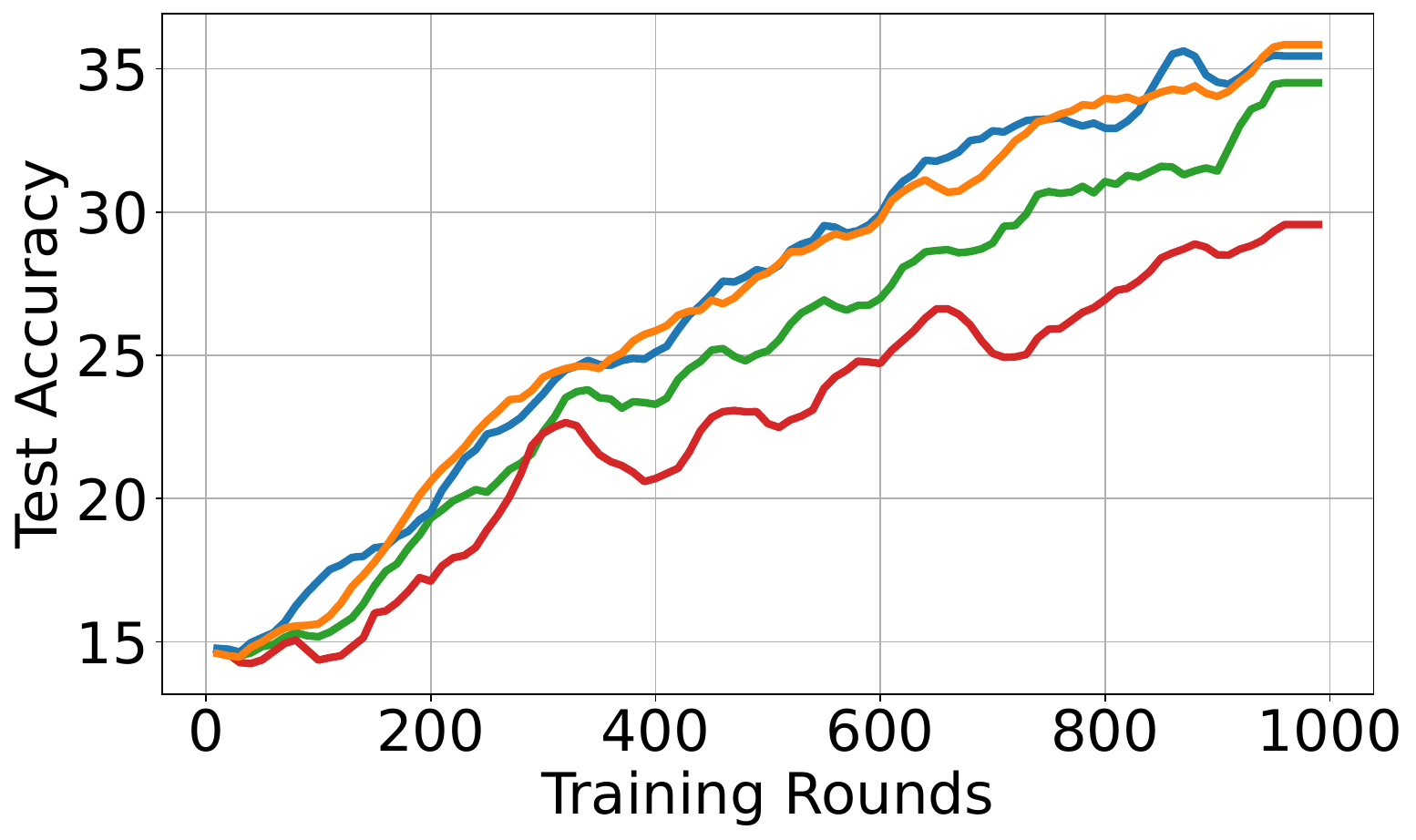} 
	\caption{Label-limited\\ (Zipfian)}
	\label{fig:scaling-limited-zipfian}
     \end{subfigure}
     \hfill
    \begin{subfigure}[t]{0.19\linewidth}
     \includegraphics[width=\linewidth]{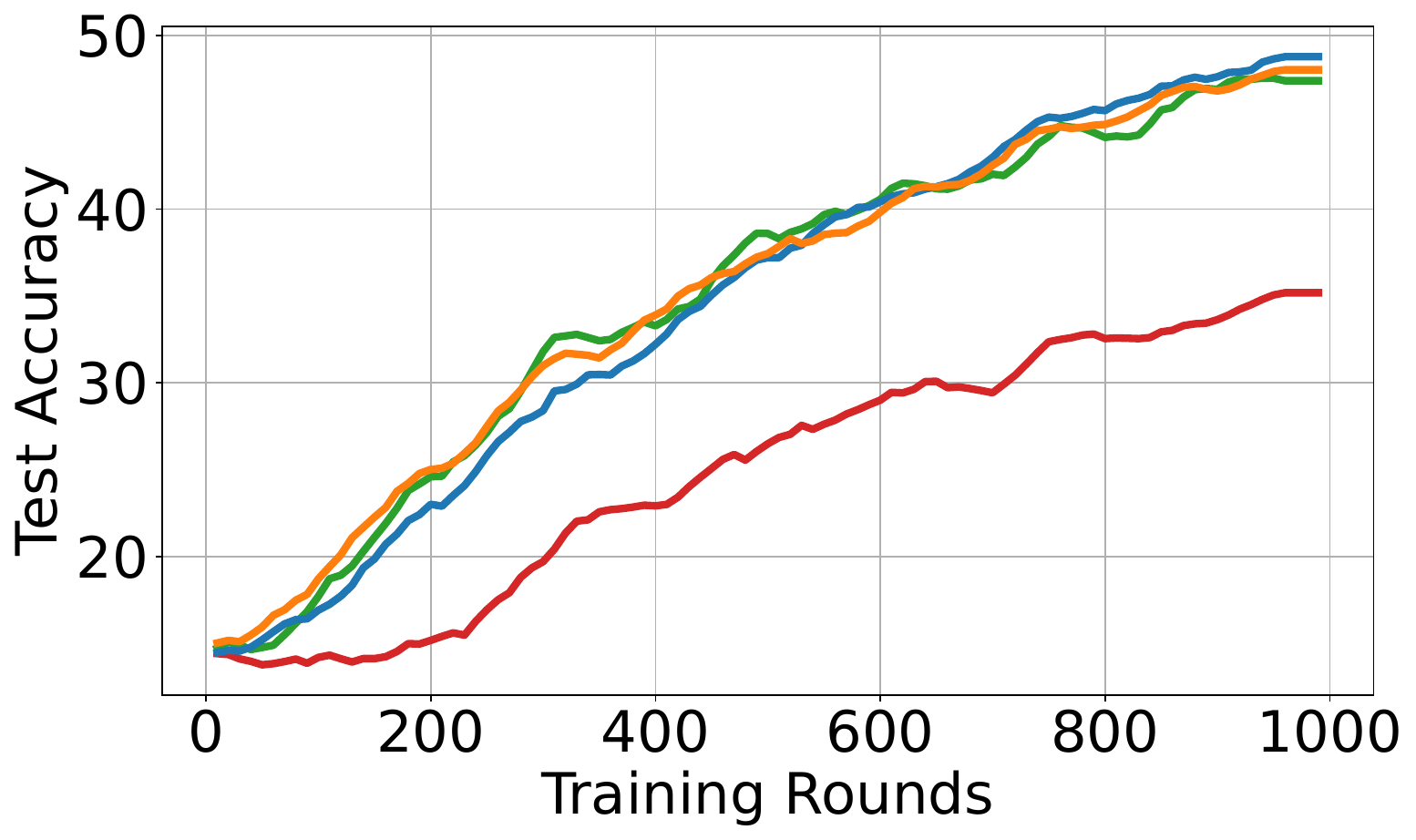} 
	\caption{Label-limited (balanced)}
	\label{fig:scaling-limited-balanced}
     \end{subfigure}
    \\
\caption{Performance of various scaling rules for stale update weighting in     the aggregation step.}
\label{fig:scaling}
\end{figure*}

\begin{table}[!t]
\caption{Baseline performance with centralized training.}
\centering
\resizebox{1\columnwidth}{!}{%
    \begin{tabular}{ccccccc}
    \toprule
        \multirow{3}{*}{Benchmark} & \multirow{3}{*}{Quality Metric} & \multirow{3}{*}{\makecell{Aggregation\\Algorithm}} & \multicolumn{4}{c}{Data to Learner Mapping}
        \\\cmidrule{4-7}
        &   & & \multirow{2}{*}{Uni. Rand.} & \multicolumn{3}{c}{Label-limited} 
      \\\cmidrule{5-7}
        & & & & Uni. Rand. & Zipf Dist. & Balanced
         \\\midrule
        CIFAR10 & Top-5 Test Acc & FedAvg & 90.4 & 86.1 & 76.4 & 86.4
        \\\midrule
        Open Image & Top-5 Test Acc & YoGi & 70.7 & 30.6 & 32.3 & 35.5 %
        \\\midrule
         Google Speech & Top-5 Test Acc. & YoGi & 76.5 & 34.7 & 33.4 & 37.1 %
         \\\midrule
         Reddit & Test Perp. & YoGi & 43.6 & N/A & N/A & N/A
         \\\midrule
         Stackoverflow & Test Perp. & YoGi & 40.2 & N/A & N/A & N/A
         \\\bottomrule
    \end{tabular}
    }
    \label{tab:baselinequality}
\end{table}

\subsubsection{Performance of selection algorithms}
\label{part:comparevsoort}
We use the experimental setting \emph{\bf OC+DynAvail}. We compare \scheme with Oort, Random, and Priority selection. Priority is the IPS component of \scheme (i.e., SAA component is disabled).

\cref{fig:avail-exp-google} shows that, over the FedScale and different non-IID (label-limited) data mappings, with minimal resource usage, \scheme achieves better accuracy over other methods (i.e., Oort, Random, and Priority). %
\scheme achieves superior performance thanks to the availability-based prioritization (\cref{fig:avail-exp1-uniquelearners}) and aggregation of stale updates (\cref{fig:avail-exp1-totalupdates}). When the FL training process is run for more rounds in the label-limited non-IID case, \cref{fig:avail-exp-google-long} shows that \scheme converges to significantly higher accuracy than Oort, in less time and with lower resource usage. %

\subsubsection{Performance of aggregation algorithms} 
\label{part:comparevssafa}
Comparing SAFA and \scheme, we use the \emph{\bf DL+DynAvail} setting with a total learner population of 1,000 and a round deadline of 100s. We use FedAvg as the underlying aggregation algorithm. %
\scheme pre-selects 100 participants and the target ratio is set to $10\%$ and $80\%$ for SAFA and \scheme, respectively. For both schemes, we set the staleness threshold to 5 rounds. %

The results in \cref{fig:safa} show that run times of SAFA and \scheme are comparable, but SAFA consumes significantly more resources. In the case of the FedScale mapping (\cref{fig:safa-real}), the results show that \scheme achieves higher accuracy with $\approx\!20\%$ fewer resources than SAFA. In the non-IID mapping (\cref{fig:safa-limited}), \scheme significantly improves the accuracy by 10 points using $\approx\!60\%$ fewer resources compared to SAFA. %

\begin{figure}[!t]
\captionsetup[subfigure]{justification=centering}
\centering
     \begin{subfigure}[t]{0.49\columnwidth}
     \includegraphics[width=\linewidth]{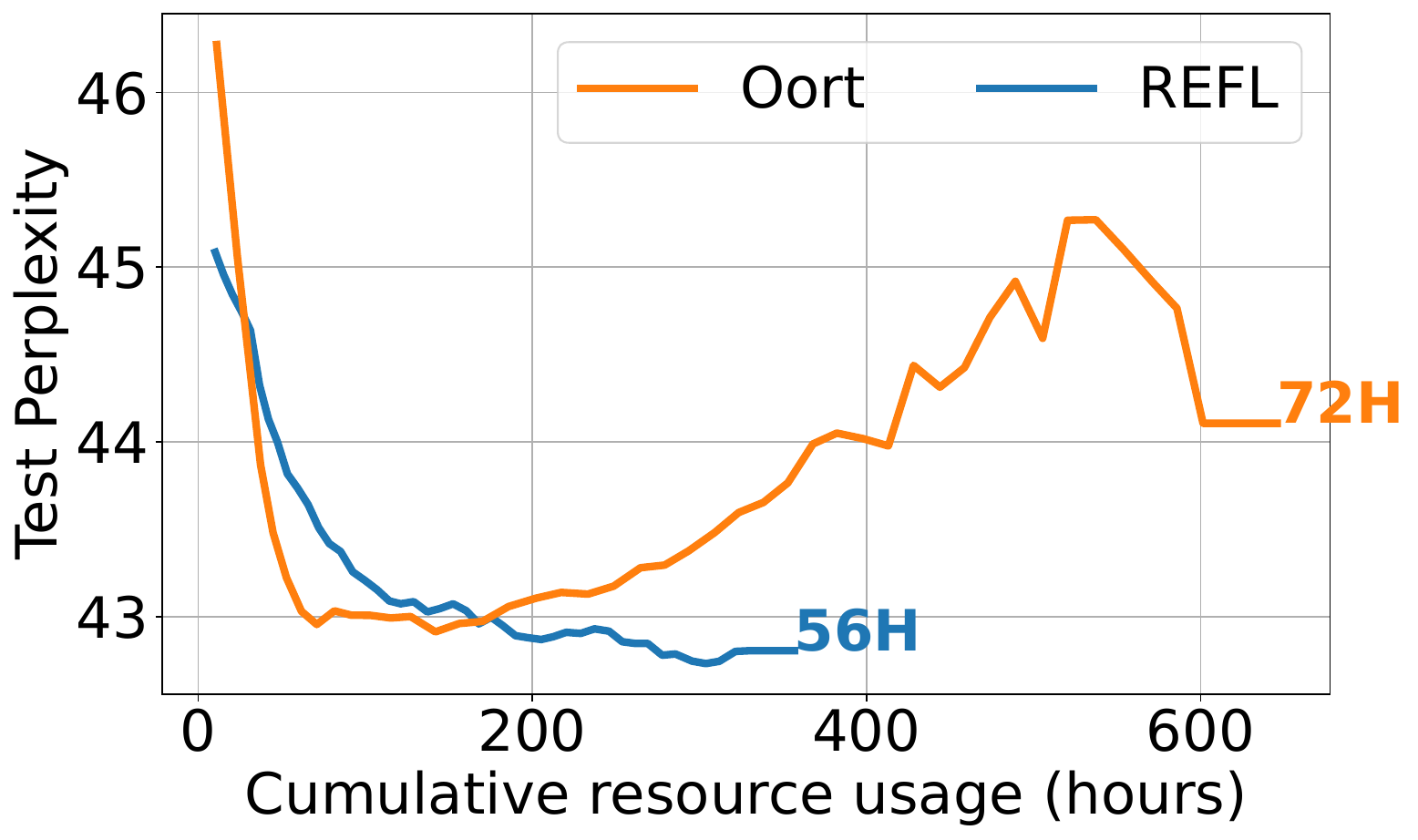}%
	\caption{Reddit}
	\label{fig:avail-exp1-reddit}
     \end{subfigure}
     \hfill
    \begin{subfigure}[t]{0.49\columnwidth}
     \includegraphics[width=\linewidth]{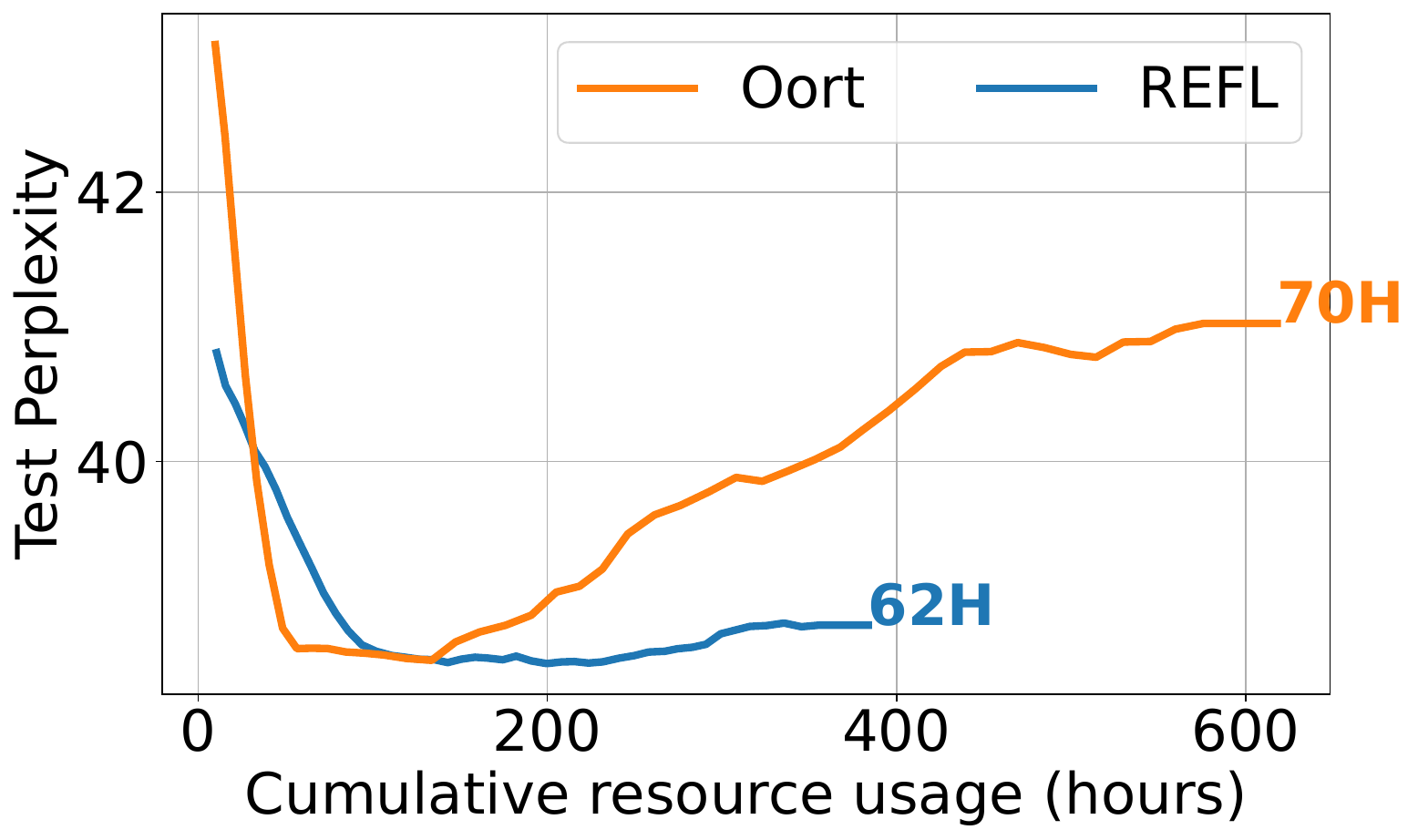}%
	\caption{StackOverFlow}
	\label{fig:avail-exp1-stackoverflow}
     \end{subfigure}
    \\
     \begin{subfigure}[t]{0.49\columnwidth}
     \includegraphics[width=\linewidth]{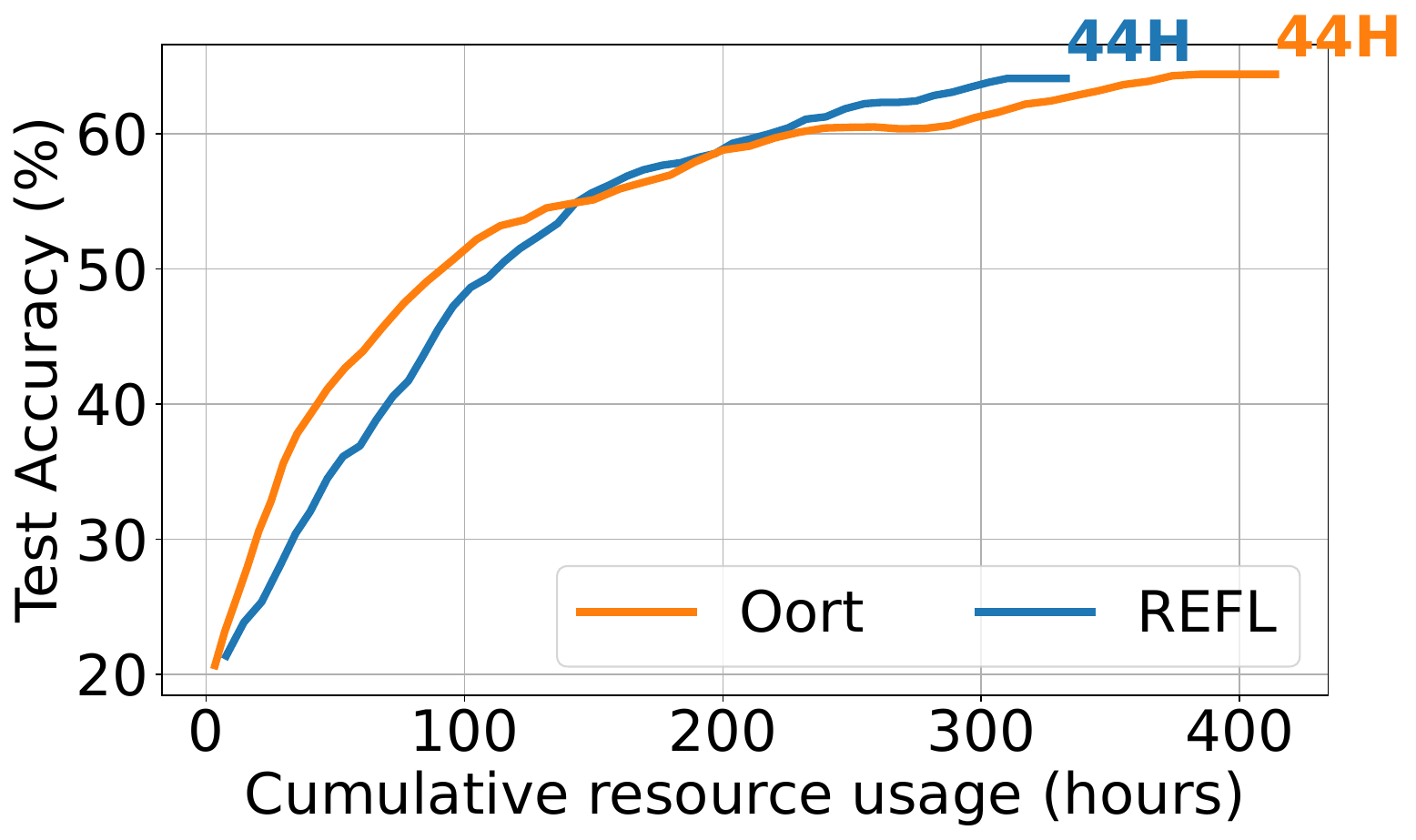}%
	\caption{OpenImage}
	\label{fig:avail-exp1-openimage}
     \end{subfigure}
     \hfill
      \begin{subfigure}[t]{0.49\linewidth}
     \includegraphics[width=\linewidth]{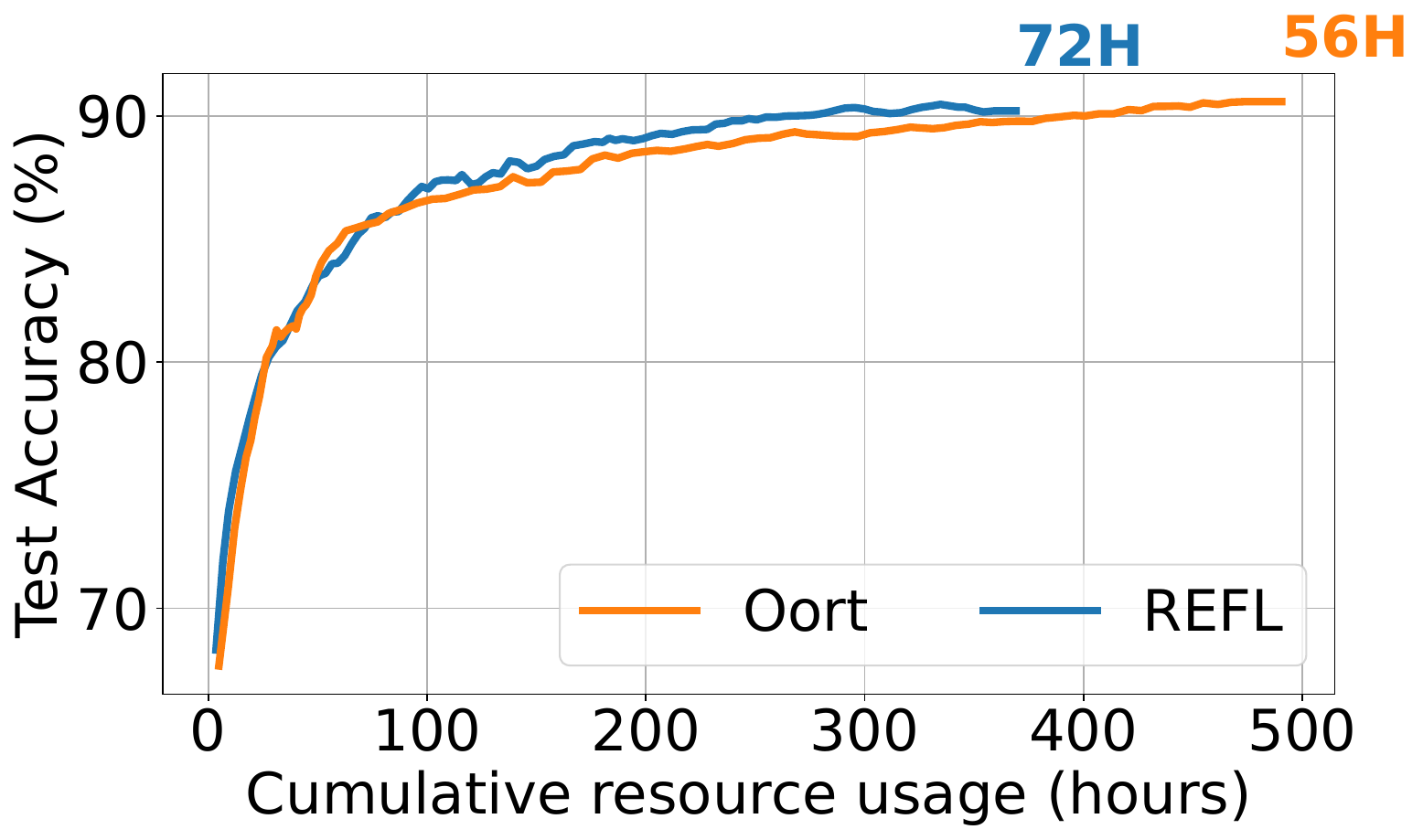}%
	\caption{CIFAR-10}
	\label{fig:avail-exp1-cifar10}
     \end{subfigure}
\caption{Performance for NLP and CNN benchmarks in \emph{\bf OC+DynAvail}. We use the Label-limited mapping for Stackoverflow and Reddit, FedScale data mapping for OpenImage and Uniform IID mapping for CIFAR10.}
\label{fig:avail-other}
\end{figure}

\subsubsection{Availability-based prioritization}
The results in \cref{fig:avail-exp-google} show that Priority selection achieves better model accuracy thanks to prioritizing the least available learners. The results suggest that, especially in non-IID settings, selecting participants with low availability results in a better rate of unique learners with valuable (likely new) data samples, and hence the resource usage to achieve a certain accuracy is also reduced.

\subsubsection{Adaptive target} We run experiments in the \emph{\bf OC} setting with both \emph{\bf DynAvail} and \emph{\bf AllAvail} scenarios using the label-limited (uniform) mapping and 50 participants per round.

\cref{fig:APS} shows that, in both scenarios, \scheme and \scheme+APT have higher model quality with lower resource usage compared to Oort and Random. Moreover, the resource consumption of \scheme can be further reduced with APT by trading off extra run time (i.e., 15H vs. 28H).\footnote{\scheme achieves higher accuracy compared to {\scheme}+APT (i.e., 48\% vs. 42\%) within 15 hours run time but \scheme uses approximately 55\% more resources on average within this time.} Compared to \emph{\bf AllAvail}, the improvements are maintained in \emph{\bf DynAvail} when \scheme prioritizes the least available clients and comparable accuracy is achieved. However, depending on the benchmark, APT may yield no benefits when the target number of participants is small (e.g., $N_0\leq10$). %

\subsubsection{Stale aggregation}
We experiment with \scheme, Oort, and Random in the  \emph{\bf OC+AllAvail} setting.

\cref{fig:stale-exp1-google} shows the achieved test accuracy vs training rounds. We observe that, over different data mappings, \scheme achieves good model quality with lower resource usage thanks to the SAA component. Notably, the benefits are more profound for non-IID distributions. In non-IID settings, the stale updates of delayed participants are more important compared to IID settings. This demonstrates that stale updates can boost the statistical efficiency of training. Also, \scheme achieves run time similar to Random because learners have their availability probability set to 1 (always available), and so \scheme reverts to random selection. %

\subsubsection{Stale weight scaling} We use  \emph{\bf OC+DynAvail} and set the deadline to 100 seconds. We evaluate the weight scaling rules of \S\ref{subsec:scale} and present test accuracy results over the training rounds in \cref{fig:scaling}.

We observe that the \scheme's scaling rule consistently outperforms the other scaling rules in all data distribution scenarios. In the IID cases (\cref{fig:scaling-uniform,fig:scaling-real}), the differences among the scaling rules are small. However, in the non-IID cases (\cref{fig:scaling-limited-uniform,fig:scaling-limited-zipfian,fig:scaling-limited-balanced}), the  performance is inconsistent except for \scheme's rule. These results show the benefits of \scheme's rule for mitigating the potential negative impact of stale updates. The same observations are made for  \emph{\bf OC+AllAvail} with FedAvg.%

\subsubsection{Availability prediction model}
We evaluate the forecasting model.
The model is trained on the Stunner trace~\cite{stunner}, which contains device events from large number of worldwide mobile users. We used devices with at least 1,000 samples (i.e., 137 devices) in the trace collected during September 2018. We extracted the plugged-in and charging state to train a model for each device using the first half of each devices' samples. We compared the devices' model predictions on the remaining samples, which are thus used as the testing dataset.

The results show that the models predict future availability states with high accuracy. The values of the coefficient of determination, mean square error and mean absolute error, averaged across devices, are 0.93, 0.01, and 0.028, respectively.

\subsubsection{Results of other benchmarks}
\label{subsec:moreresults}
We run NLP (Reddit and StackOverFlow) and CV (CIFAR10 and OpenImage) benchmarks in the \emph{\bf OC+DynAvail} setting. We use YoGi as the aggregation algorithm for the Open Image, Reddit and StackOverFlow benchmarks and FedAvg for the CIFAR10 benchmark. Adaptive Participant Target (APT) is enabled for \scheme. We also limit NLP dataset sizes to 20\% (i.e., $\approx$ 8 million samples) as indicated in~\cref{tab:benchmarks}.

The results in \cref{fig:avail-other} directly compare \scheme and Oort. The results for Reddit (\cref{fig:avail-exp1-reddit}) and StackOverFlow (\cref{fig:avail-exp1-stackoverflow}) demonstrate that \scheme achieves considerable reduction in both learners' resource consumption and the final test perplexity compared to Oort. We note that during the initial rounds, the performance is comparable between \scheme and Oort. Then, Oort's low diversity results in divergence which may be attributed to the lack of new samples (or participants with new data which help improve the model convergence). Similarly, the results for OpenImage (\cref{fig:avail-exp1-openimage}) and CIFAR10 (\cref{fig:avail-exp1-cifar10}) show that \scheme achieves the same model accuracy with lower resource consumption compared to Oort in scenarios when the data distribution among the learners is IID or FedScale's data mapping (i.e., closer to IID). %

\section{Discussion}
\label{sec:discussion}

We discuss the implications of \scheme in future scenarios.

\smartparagraph{Large-scale federated learning:} We project that future FL deployments will scale significantly to include learner devices such as sensors, IoT devices, autonomous vehicles, etc., that may not be connected to power sources and have limited computational capabilities. \scheme enables efficient scaling over a larger number of resources, in contrast to FL systems that perform post-training selection (e.g., SAFA) or skew participant selection to fast devices (e.g., Oort). Invoking all devices for training would overwhelm the server and impose significant energy usage by learners, much of which would be wasted.

We show the impact of large populations on resource usage using the Google speech benchmark and 3$\times$ the number of learners (3,000). As shown in \cref{fig:safa-large}, we observe that SAFA wastes many resources in the IID (\cref{fig:safa-large-uniform}) and even more in the non-IID (\cref{fig:safa-large-limited}) setting.

 \begin{figure}[!t]
\captionsetup[subfigure]{justification=centering}
\centering
     \begin{subfigure}[ht]{0.49\linewidth}
     \includegraphics[width=\linewidth]{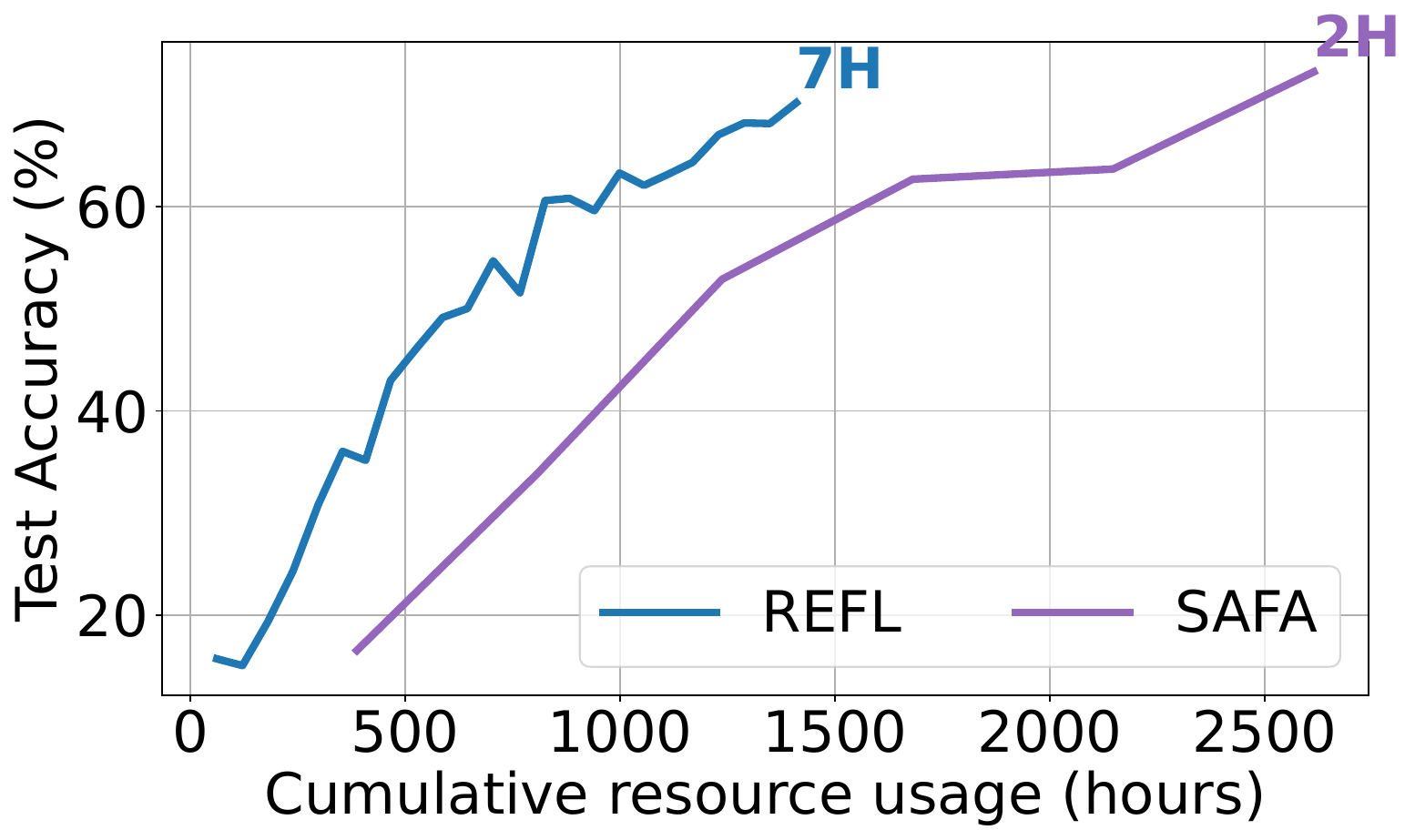}
	\caption{Uniform mapping\\ (IID)}
	\label{fig:safa-large-uniform}
     \end{subfigure}
     \hfill
    \begin{subfigure}[ht]{0.49\linewidth}
     \includegraphics[width=\linewidth]{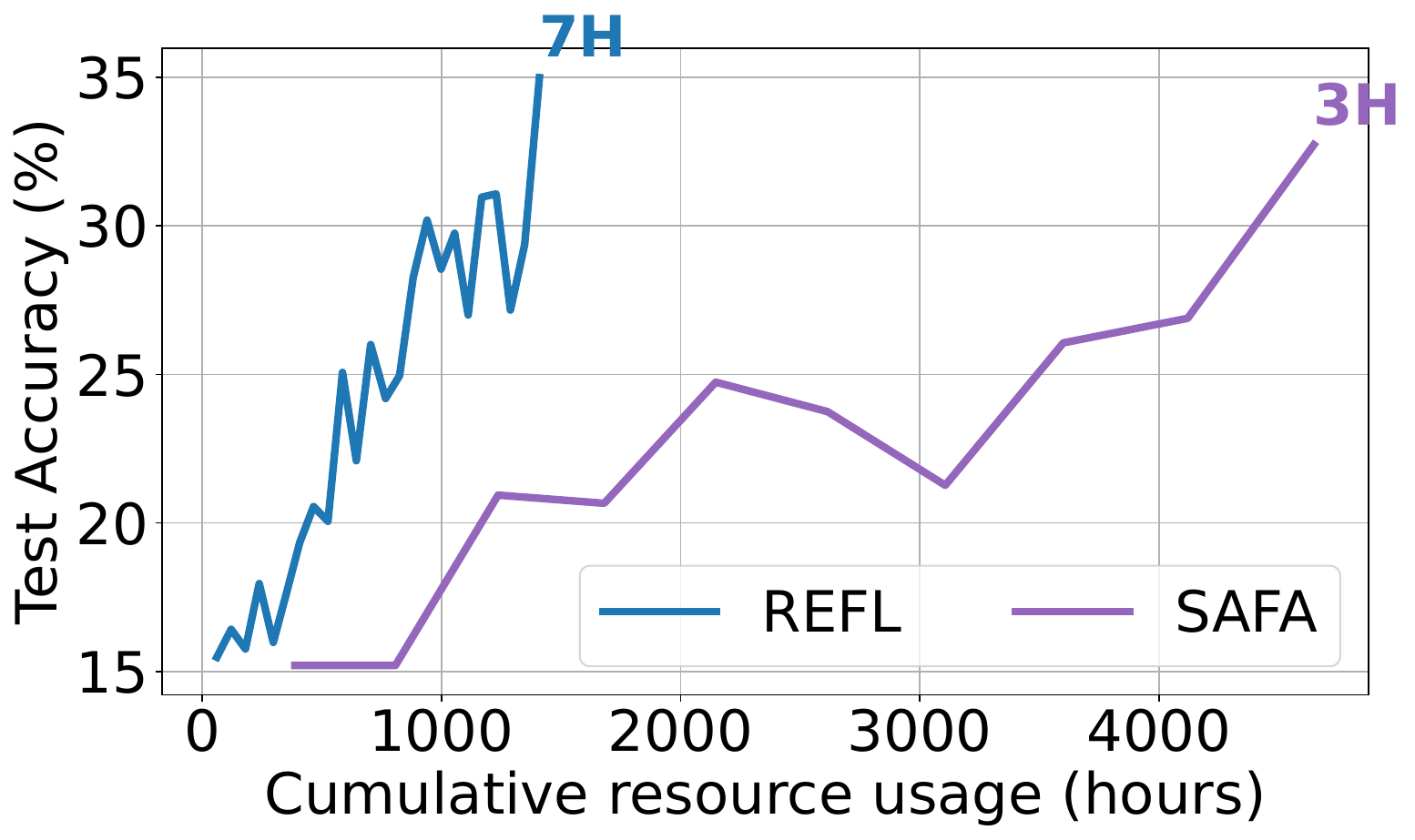} 
	\caption{Label-limited\\ (uniform)}
	\label{fig:safa-large-limited}
     \end{subfigure}
\caption{Resource efficiency in large-scale FL settings.}
\label{fig:safa-large}
\end{figure}

\smartparagraph{Future hardware advancements:} We project that computational capability of devices will continue to improve and so FL systems should benefit from this. Therefore, schemes such as Oort that favor faster learners are likely to see increased under-representation of low-capability learners, resulting in models that do not generalize well over a large population. In contrast, \scheme copes with hardware advancements by benefiting from faster learners without overlooking low-capability learners.

We run the Google Speech benchmark in 4 settings using: current device configurations (\emph{HS1}); device configurations with completion times (i.e., computation and communication) doubled for the top $X$ percentile of devices (where $X$ is 25\% (\emph{HS2}), 75\% (\emph{HS3}), and 100\% (\emph{HS4})). As shown in \cref{fig:oort-hardware-large-uniform,fig:relay-hardware-large-uniform}, we observe both Oort and \scheme benefit from hardware enhancements in IID settings. However, as shown in \cref{fig:oort-hardware-large-limited}, with realistic label-limited non-IID settings, \scheme sees significant performance benefits due to the aggregation of stale updates and higher participant diversity. Oort does not benefit from improved speeds because the selection still favors the faster learners, which reduces training time but not model quality.

\smartparagraph{Implications of the availability prediction model:}
When new learners join the FL process, they may not have traces with which to train their availability prediction model, resulting in low confidence in their predictions. Moreover, malicious or adversarial laerners may attempt to influence the system by consistently reporting low future availability. %
To deal with these scenarios, and as recommended by~\cite{Bonawitz19}, we use in \cref{subsec:ips} a filtering mechanism that prevents a participant to be reselected in the following $X$ rounds (in our experiments 5 rounds).

\begin{figure}[!t]
\captionsetup[subfigure]{justification=centering}
\centering
 \includegraphics[width=0.8\linewidth]{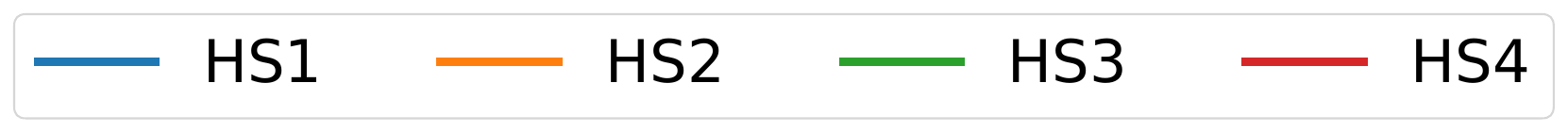}
    \\  
     \begin{subfigure}[ht]{0.49\linewidth}
     \includegraphics[width=\linewidth]{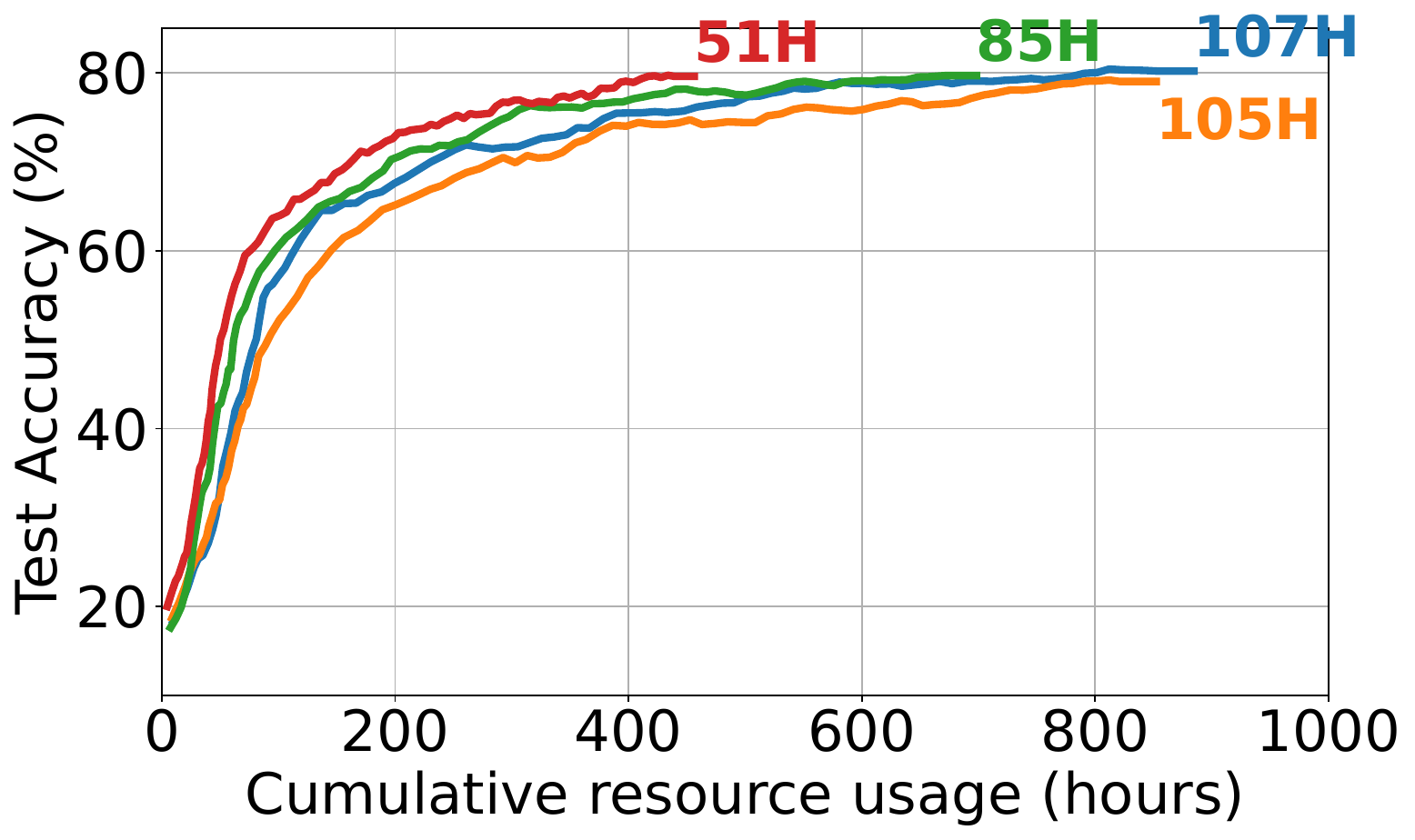}
	\caption{Oort - Uniform (IID)}
	\label{fig:oort-hardware-large-uniform}
     \end{subfigure}
     \hfill
    \begin{subfigure}[ht]{0.49\linewidth}
     \includegraphics[width=\linewidth]{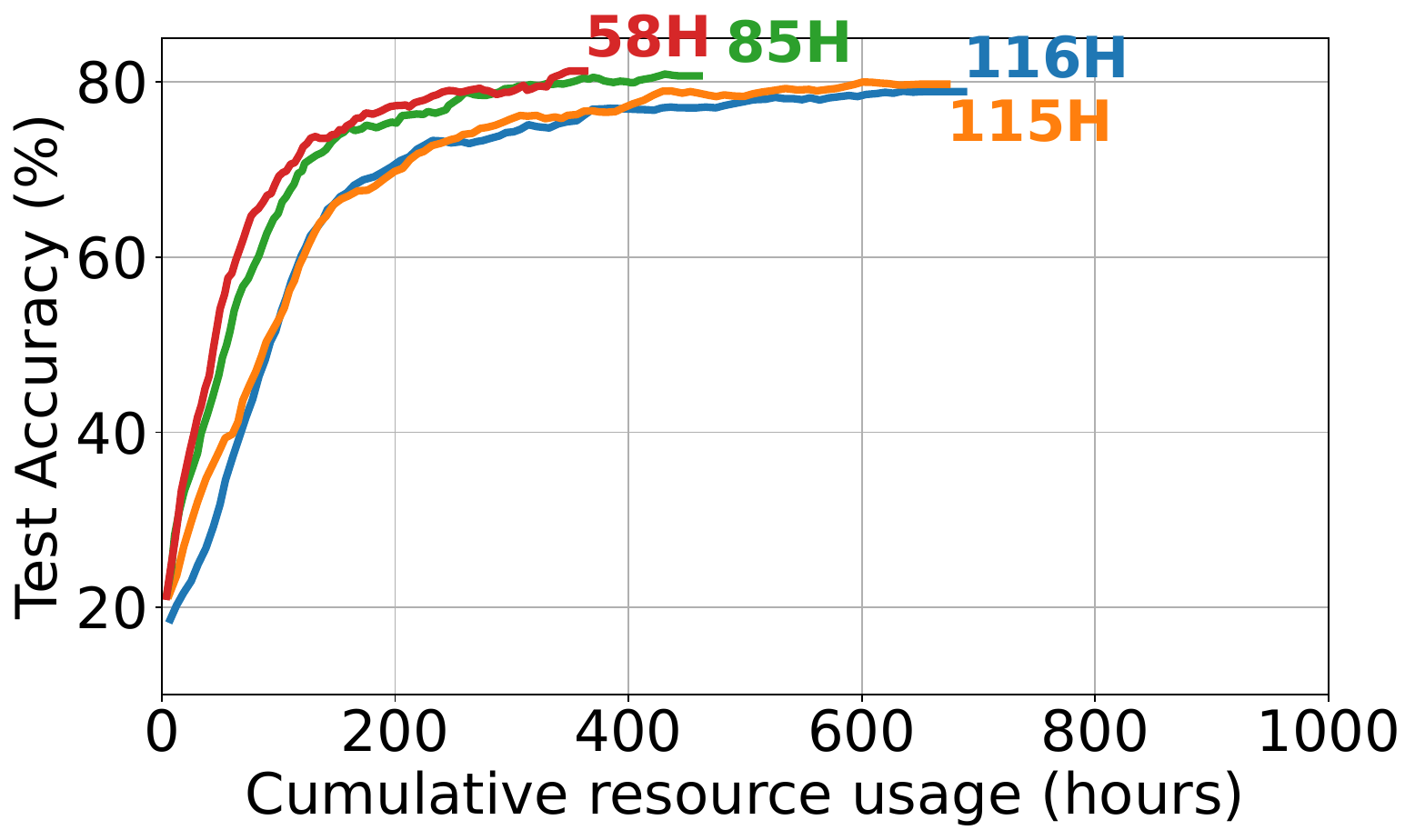} 
	\caption{\scheme - Uniform (IID)}
	\label{fig:relay-hardware-large-uniform}
     \end{subfigure}
     \\ 
     \begin{subfigure}[ht]{0.49\linewidth}
     \includegraphics[width=\linewidth]{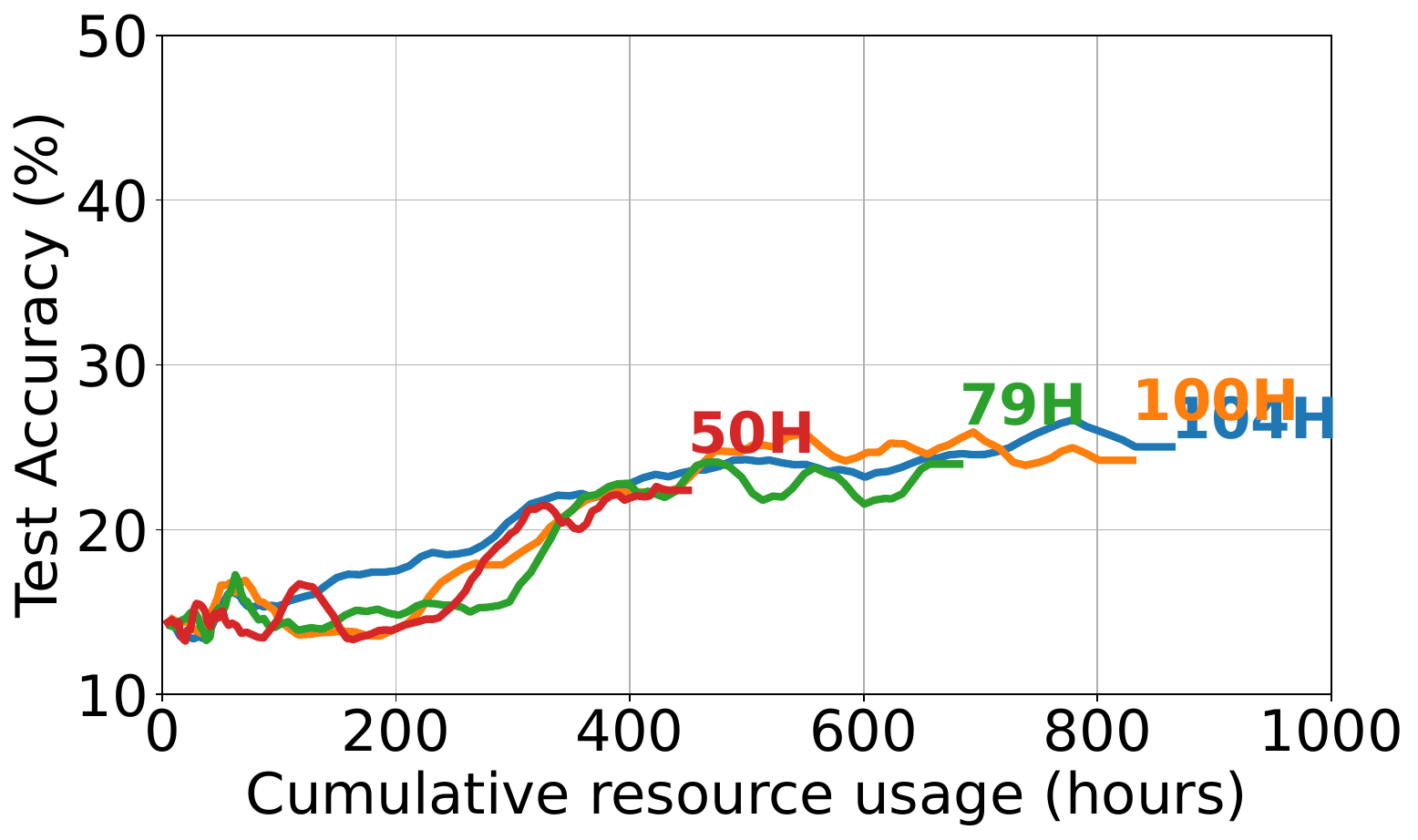} 
	\caption{Oort - Label-limited uniform}
	\label{fig:oort-hardware-large-limited}
     \end{subfigure}
     \hfill
      \begin{subfigure}[ht]{0.49\linewidth}
     \includegraphics[width=\linewidth]{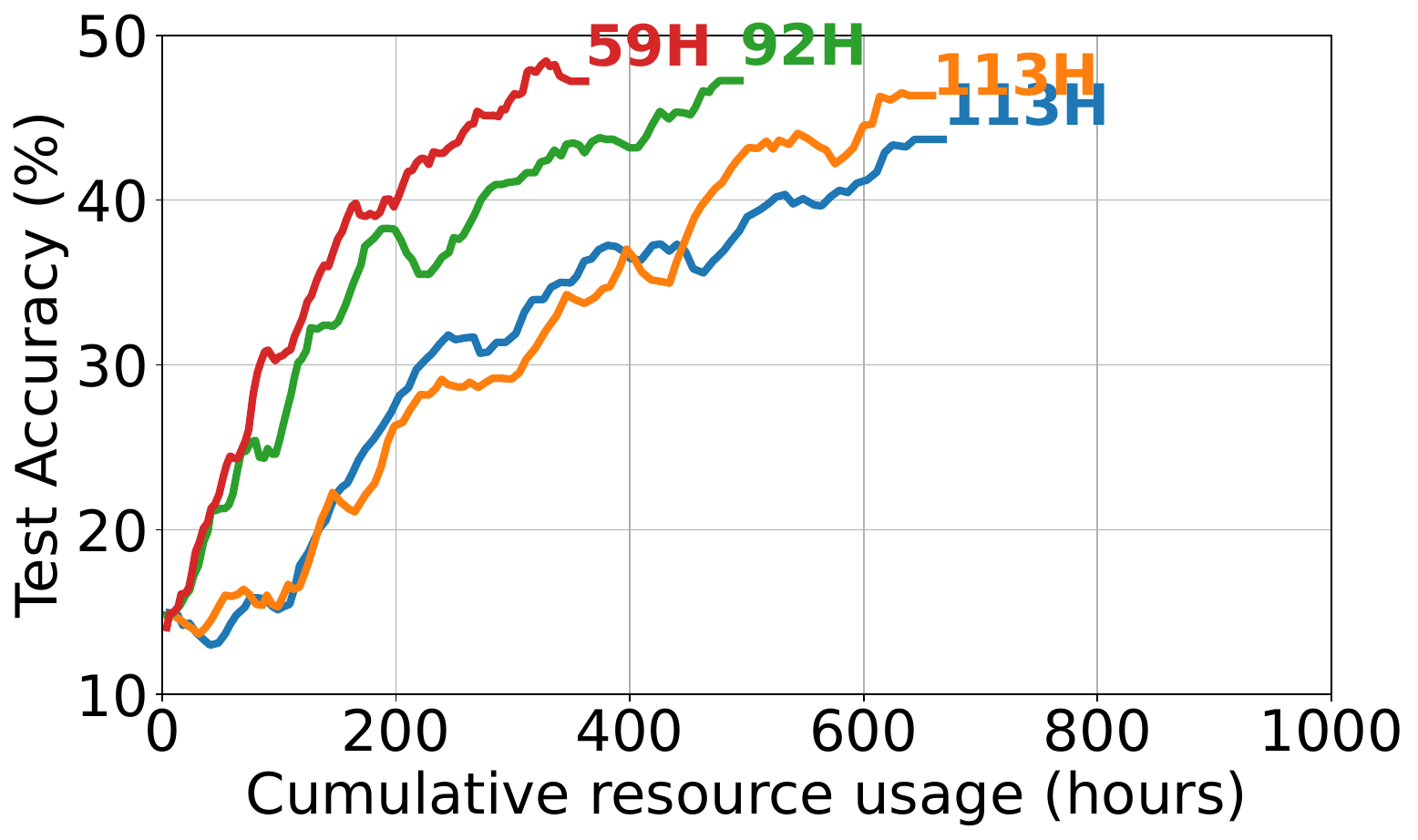} 
	\caption{\scheme - Label-limited uniform}
	\label{fig:relay-hardware-large-limited}
     \end{subfigure}
\caption{Impact of future hardware advancements.} %
\label{fig:hardware-advance}
\end{figure}

\section{Integration with FL Frameworks}
\label{sec:integrate}
The design of \scheme is lightweight and can operate as an online service or a plug-in module for existing FL frameworks. Therefore, \scheme suits large-scale FL deployments dealing with likely thousands to millions of end-devices.

{\bf \scheme selects participants as follows:}
\begin{inparaenum}[1)]
\item First, the server updates its estimate of round duration $\mu_t$ and send an estimate of the time period $a=(\mu_t, 2\mu_t)$ of the next round to the learners;
\item learners maintain a local trace of their charging events and periodically train the forecasting model, which produces a probabilistic value for their charging state during future time periods.
\item upon receiving an availability query $a$, each the learner $l$ uses the prediction model to produce its availability probability $p_l(a)$ during time period $a$ and sends $p_l(a)$ to the server;
\item the server collects the probabilities $P_t$ and selects the participants using Algorithm~\cref{algo:priority}; and 
\item the server sends each of the selected participants a random hash ID which encodes a time-stamp of the current round as well as the FL task (e.g., the model) and relevant parameter configuration. 
\end{inparaenum}

{\bf \scheme handles stale updates as follows:} 
\begin{inparaenum}[i)]
    \item The server collects the updates which are received before the end of current training round $t$. If the time-stamp of a received update's hash ID does not match the the current round, it is categorized as a stale update; 
    \item at the end of the round during aggregation, the server aggregates the fresh updates first to produce $\hat{u}_\cF$.
    \item for each stale update, $u_s$, the server computes the level of staleness $\tau_t$ using the timestamp of the stale update $\hat{u}_\cF$;  
    \item for each stale update, the server computes the deviation of the stale update from the fresh updates $\Lambda_t$ and uses the proposed rule in \cref{eq:hybrid} to assign the scaling weight $w_s$ to the stale update; and
    \item the server aggregates the scaled stale updates with the aggregated fresh updates to produce the new model using \cref{algo:ssfedavg}.
\end{inparaenum}

To integrate \scheme with PySyft, minimal exchanges between the server and learners are needed at the selection stage. The server sends an estimate time-slot of the next training round. The learners use the forecasting model and send their availability probability. Therefore, the learners  do not need to exchange any sensitive information about their data. The FL developer programs the client-side to train the forecasting model and respond to the availability query from the server which poses minimal memory and communication overhead. \scheme can also run as a distributed service using a communication library (e.g., XML-RPC~\cite{xmlrpc}) to establish the communication channel between the \scheme process and the server. The server can share metadata from the participants with the \scheme service. The server can use the PySyft API \textsf{model.send(participant\_id)} to invoke the participants selected by \scheme, and \textsf{model.get(participant\_id)} to collect the model and metadata updates from the participant.

\section{Related Work}
\label{sec:related}

\smartparagraph{Federated learning:} FL is commonly viewed as a ML paradigm wherein a server distributes the training process on a set of decentralized participants that train a shared global model using local data that is never communicated with other entities~\cite{mcmahan2017,kairouz2019advances}. FL has been used to enhance prediction quality for virtual keyboards among other applications~\cite{Bonawitz19,yang2018applied}. A number of FL frameworks have facilitated research in this area~\cite{caldas2018leaf,ryffel2018generic,paddle,tff,yang2020heterogeneityaware,lai2021fedscale}. Flash~\cite{yang2020heterogeneityaware} extended Leaf~\cite{caldas2018leaf} to incorporate heterogeneity-related parameters. FedScale~\cite{lai2021fedscale} enables FL experimentation using a diverse set of challenging and realistic benchmark datasets; we use it as the emulation framework in this work.

\smartparagraph{Participant selection strategies:} In each round, the FL server samples among online learners and trains the global model on the selected participants (e.g., 10s of learners) among those currently online (e.g., 1,000s of learners). %
A number of recent works have proposed enhanced participant selection strategies. Biasing the selection process towards learners with fast hardware and network speeds has been proposed~\cite{Nishio2018}. Other work has sought to enhance statistical efficiency by selecting participants with better model updates~\cite{pmlr-v130-ruan21a,Chen2020,cho2020client}. Recently, Oort~\cite{Oort-osdi21} proposed a strategy that combines both system and statistical efficiency. As we demonstrate in this work, these approaches either result in wasted computation or low coverage of the learners.

\smartparagraph{Heterogeneity in FL:} A significant challenge facing wider adoption of FL systems at scale is uncertainties in system behavior due to learner, system, and data heterogeneity. Learners' computational capacity can restrict contributions and extend round duration~\cite{Li2020FedProx,yang2020heterogeneityaware,Li2021a,Ahmed-EuroMLSys-22}. Architectural and algorithmic solutions to tackle heterogeneity have been proposed~\cite{Ahmed-AQFL-21,wang2020tackling,Li2020FedProx,Li2021,Oort-osdi21}. Heterogeneity in FL is particularly challenging because participants have varying data distributions and availability, as well as heterogeneous system configurations that cannot be controlled~\cite{yang2018applied,Bonawitz19,kairouz2019advances,Ahmed-EuroMLSys-22}.

\smartparagraph{FL proposals:} 
Broader improvements in FL systems have included reducing communication costs~\cite{Jakub2016,Smith2017,Bonawitz19,Chen2019CommunicationEfficientFD,pmlr-v108-reisizadeh20a}, improving privacy guarantees~\cite{McMahan2018,Melis2019,Bonawitz19,Nasr2019,Bagdasaryan20}, compensating for partial work~\cite{Li2020FedProx,wang2020tackling}, minimizing energy consumption on edge devices~\cite{Li2019smartpc}, and personalizing global models trained by participants~\cite{jiang2019improving}. Recent works have sought to address the challenge of data heterogeneity~\cite{Mohri2019AgnosticFL,Li2020Fair,Li2020FedProx}. 

Our work complements these efforts aiming to optimize the FL ecosystem. We aim to produce a resource-efficient FL framework making better use of learners' resources to achieve target model quality without stretching training time. \scheme's design can easily benefit from the existing techniques for secure aggregation or differential privacy.

\section{Conclusion}
We studied two key issues preventing the wider adoption of FL systems: resource wastage and low data diversity. We proposed \scheme that addresses these issues through two core components that encompass novel selection and aggregation algorithms. Compared to existing systems, \scheme is shown, both theoretically and empirically, to improve model quality while reducing resource usage with low impact on training time. \scheme is a vital step towards establishing a novel and practical ecosystem for resource-efficient federated learning.

\begin{acks}
We thank our shepherd, Somali Chaterji, and the anonymous reviewers for their feedback.
We also thank the Artifact Evaluation Committee for their efforts.
This publication is based upon work supported by the \grantsponsor{KAUST-ORA}{King Abdullah University of Science and Technology (KAUST) Office of Research Administration (ORA)}~~under Award No. \grantnum{KAUST-ORA}{ORA-CRG2021-4699}.
For computer time, this research used the resources of the Supercomputing Laboratory at KAUST.
\end{acks}

\bibliographystyle{ACM-Reference-Format}
\bibliography{main.bib}

\end{document}